 \newcommand{\denselist}{\itemsep 0pt\topsep-6pt\partopsep-6pt}
\newtheorem{definition}{Definition}
\newtheorem{proposition}{Proposition}
\newtheorem{lemma}{Lemma}
\newtheorem{remark}{Remark}
\newcommand{\dom}{\mathrm{dom}}
\newcommand{\sign}{\mathrm{sign}}
\newcommand{\TU}{\mathrm{TU}}
\newcommand{\supp}{\mathrm{supp}}
\newcommand{\R}{\mathbb{R}}
\newcommand{\Tb}{\boldsymbol{T}}
\newcommand{\T}{\mathcal{T}}
\newcommand{\D}{\mathcal{D}}
\newcommand{\Db}{\boldsymbol{D}}
\newcommand{\M}{\boldsymbol{M}}
\newcommand{\G}{\mathcal{G}}
\newcommand{\sss}{\mathcal{S}}
\newcommand{\Pt}{\mathcal{P}}
\newcommand{\A}{\boldsymbol{A}}
\newcommand{\E}{\boldsymbol{E}}
\newcommand{\Et}{\mathcal{E}}
\newcommand{\e}{\boldsymbol{e}}
\newcommand{\Hb}{\boldsymbol{H}}
\newcommand{\I}{\boldsymbol{I}}
\newcommand{\x}{\boldsymbol x}
\newcommand{\y}{\boldsymbol y}
\newcommand{\s}{\boldsymbol s}
\newcommand{\z}{\boldsymbol z}
\newcommand{\w}{\boldsymbol w}
\newcommand{\bb}{\boldsymbol b}
\newcommand{\cb}{\boldsymbol{c}}
\newcommand{\db}{\boldsymbol{d}}
\newcommand{\GG}{\mathfrak{G}}
\newcommand{\omegabf}{\boldsymbol \omega}
\newcommand{\betabf}{\boldsymbol \beta}
\newcommand{\Bbd}{\boldsymbol{B}}
\newcommand{\1}{\mathds{1}}
\newcommand{\colvec}[1]{\begin{bmatrix}#1\end{bmatrix}}
\newcommand{\colcst}[1]{\substack{#1}}
\newtheorem*{prop}{Proposition}
\begin{document}


%

%

\twocolumn[

\aistatstitle{A totally unimodular view of structured sparsity}

\aistatsauthor{ Marwa El Halabi \And Volkan Cevher}

\aistatsaddress{ LIONS, EPFL \And LIONS, EPFL } ]

\vspace{-10pt}
\begin{abstract}
\vspace{-15pt}
This paper describes a simple framework for structured sparse recovery based on convex optimization. We show that many structured sparsity models can be naturally represented by linear matrix inequalities on the support of the unknown parameters, where the constraint matrix has a totally unimodular (TU) structure. 
For such structured models, tight convex relaxations can be obtained in polynomial time via linear programming. Our modeling framework unifies the prevalent structured sparsity norms in the literature, introduces new interesting ones, and renders their tightness and tractability arguments transparent.
\end{abstract}

\vspace{-5mm}
\section{Introduction}\vspace{-3mm}
\label{sec:intro}
Many important machine learning problems reduce to extracting parameters from dimensionality-reduced and potentially noisy data \cite{wainwright2014structured}. The most common data model in this setting takes the familiar linear form
\vspace{-5pt}
\begin{equation}\label{eq: obs}
\bb = \A\left(\x^\natural\right) +\w,   
\end{equation} \vspace{-20pt}

where $\x^\natural \in \mathbb{R}^p$ is the unknown parameter that we seek, the linear operator $\A:\mathbb{R}^{p} \rightarrow \mathbb{R}^{n}$ compresses $\x^\natural$ from $p$  to $n\ll p$ dimensions, and $\w \in \mathbb{R}^n$ models noise. In a typical scenario, we only know $\A$ and $\bb$ in \eqref{eq: obs}. 

In the absence of additional assumptions, it is impossible to reliably learn $\x^\natural$ when $n\ll p$, since  $\A$ has a nontrivial nullspace. Hence, we must exploit application-specific knowledge on $\x^\natural$. This knowledge often imposes $\x^\natural$ to be \emph{simple}, e.g., well-approximated by a sparse set of coefficients that obey  domain structure. Indeed, structured sparse parameters frequently appear in machine learning, signal processing, and theoretical computer science, and have broader generalizations including structure on matrix-valued $\x^\natural$ based on its rank.   

This paper relies on the convex optimization perspective for structured sparse recovery, which offers a rich set of analysis tools for establishing sample complexity for recovery and algorithmic tools for obtaining numerical solutions \cite{chandrasekaran2012convex}. 
To describe our approach, we focus on the proto-problem
\begin{equation}\label{eq: proto}
  \text{Find the {\it simplest}~} {\x} \text{~ subject to {\it structure} and {\it data}}. \vspace{-7pt}
\end{equation} 

Fortunately, we often have immediate access to convex functions that encode the information within \emph{data} (e.g., $\|\A\x-\bb\| \le \sigma$ for some $\sigma \in \mathbb{R}_+$) in \eqref{eq: proto}. 

However, choosing convex functions that jointly address \emph{simplicity} and \emph{structure} in the proto-problem requires some effort, since their natural descriptions are inherently combinatorial \cite{baraniuk2010model,bach2011learning,nirav2013tractability,huang2011learning}. For instance, sparsity (i.e., the number of nonzero coefficients) of $\x^\natural$ subject to discrete restrictions on its support (i.e., the locations of the sparse coefficients) initiates many of the structured sparsity problems. Unsurprisingly, there is a whole host of useful convex functions in the literature that induce sparsity with the desiderata in this setting (cf., \cite{bach2011learning} for a review).  The challenge resides in finding computationally tractable convex surrogates that tightly captures the combinatorial models. 

To this end, this paper introduces a combinatorial sparse modeling framework that simultaneously addresses both tractability and tightness issues that arise as a result of convex relaxation.  In retrospect, our key idea is quite simple and closely follows the recipe in \cite{bach2010structured, obozinski2012convex}, but with some new twists: We first summarize the discrete constraints that encode structure as linear inequalities. We then identify whether the structural constraint matrix is totally unimodular (TU), which can be verified in polynomial-time \cite{truemper1982alpha}. We then investigate classical discrete notions of simplicity and derive the Fenchel biconjugate of the combinatorial descriptions to obtain the convex relaxations for \eqref{eq: proto}. 

We illustrate how  TU descriptions of simplicity and structure make many popular norms in the literature transparent, such as the (latent) group norm, hierarchical norms, and norms that promote \emph{exclusivity}. Moreover, we show that TU descriptions of sparsity structures support tight convex relaxations and polynomial-time solution complexity for \eqref{eq: proto}. Our tightness result is a direct corollary of the fact that TU inequalities result in an integral constraint polyhedron where we can optimize linear costs exactly by convex relaxation (cf., Lemma \ref{prop:integral_polyhedra}).

Our specific contributions are summarized as follows. We propose a new generative framework to construct structure-inducing convex programs (which are not necessarily structure-inducing norms). Our results complement the structured norms perspective via submodular modeling, and go beyond it by deriving tight convex norms for non-submodular models. We also derive novel theoretical results using our modeling framework. For instance, the latent group lasso norm is the tightest convexification of the group $\ell_0$-norm \cite{baldassarre2013group}; Hierarchical group lasso is the tightest convexification of the sparse rooted connected tree model \cite{baldassarre2013group}; Sparse-group lasso leads to combinatorial descriptions that are provably not totally unimodular; Exclusive lasso norm is tight even for overlapping groups. 

\vspace{-7pt}
\section{Preliminaries}
\vspace{-7pt}

We denote scalars by lowercase letters, vectors by lowercase boldface letters, matrices by boldface uppercase letters, and sets by uppercase script letters. 

We denote the ground set by $\Pt = \{ 1, \cdots, p\}$, and its power set by  $2^\Pt$.  The $i$-th entry of a vector $\x$ is $x_i$, the projection of $\x$ over a set $\sss \subseteq \Pt$ is $\x_\sss$, i.e., $(\x_\sss)_i=0, \forall i \not \in \sss$. The vector containing the positive part of $\x$ is denoted by $\x_+ = \min\{ \x, 0\}$ (min taken element wise). The absolute value of $|\x|$ is taken element wise. Similarly, the comparison $\x \geq \y$ is taken element wise, i.e., $x_i \geq y_i, \forall i\in \Pt$. For $q \geq 1$, the $\ell_q$-norm of a vector $\x \in \R^p$ is given by $\| \x \|_q =( \sum_{i=1}^p x_i^q)^{1/q}$, and $\| \x \|_\infty = \max_{i}\{ |x_i| \}$.

We call the set of non-zero elements of a vector $\x$ the support, denoted by $\supp(\x) = \{ i: x_i \neq 0\}$. For binary vectors $\s \in \{0,1\}^p$, with a slight abuse of notation, we will use $\s$ and $\supp(\s)$ interchangeably; for example, given a set function $F$, we write $F(\s) = F(\supp(\s))$. We let $\mathds{1}_p$ be the vector in $\R^p$ of all ones, and $\I_p$ the $p \times p$ identity matrix. We drop subscripts whenever the dimensions are clear from the context. In particular, $\mathds{1}_{\supp(\x)}=(\1_p)_{\supp(\x)}$ denotes the projection of $\mathds{1}_p$ over the set $\supp(\x)$.

We introduce some definitions that are used in the sequel.
\begin{definition}[Submodularity]\label{def:submodular} 
A set function $F: 2^{\Pt} \rightarrow \R$ is submodular iff it satisfies the following diminishing returns property: $\forall \sss \subseteq \mathcal{T} \subseteq \Pt, \forall e \in \Pt \setminus \mathcal{T} ,$
 $F(\sss \cup \{e\}) - F(\sss) \geq F(\mathcal{T} \cup \{e\}) - F(\mathcal{T})$.
\end{definition}

\begin{definition}[Fenchel conjugate]
Given a function $g: \R^p \rightarrow \R \cup \{ + \infty\}$, its Fenchel conjugate, $g^\ast: \R^p \rightarrow \R \cup \{ + \infty\}$, is defined as:
$$ g^\ast(\y) := \sup_{\x \in \dom(g)} \x^T \y - g(\x) $$ 
where $\dom(g):= \{ \x : g(\x) < + \infty\}$.
The Fenchel conjugate of the Fenchel conjugate of a function $g$ is called the biconjugate, and is denoted by $g^{\ast \ast}$.
\end{definition}

\begin{definition}[Total unimodularity]
A matrix $\M \in \R^{l \times m}$ is totally unimodular (TU) iff the determinant of every square submatrix of $\M$ is $0$ or $\pm 1$.
\end{definition}

In what follows, some proofs have been omitted due to lack of space; see the supplementary material.
\vspace{-7pt}
\section{A generative view of sparsity models}
\vspace{-7pt}
\subsection{Foundations}
\vspace{-3mm}
We can describe the simplicity and the structured constraints in the proto-problem by encoding them concisely into a combinatorial set function $F$ on the support of the unknown parameter \cite{bach2010structured, obozinski2012convex}. Hence, we can reduce our task of finding the tightest surrogate convex function to determining the convex envelope, i.e., the largest convex lower bound of $F(\supp(\x))$, which is given by its biconjugate. 

Let us first identify a sufficient condition for tractable computation of the convex envelope of $F(\supp(\x))$. 

\begin{lemma}\label{prop:Main_Property}
Given a set function $F:  2^{\Pt} \rightarrow \R \cup \{ + \infty\}$, let $g(\x)=F(\supp(\x))$. If \vspace{-3mm}
\begin{enumerate}\denselist
\item[A1.] $F$ admits a proper ($\dom(f) \neq \emptyset$) lower semi-continuous (l.s.c.) convex extension $f$, i.e., $f(\s) = F(\s), \forall \s \in \{ 0,1 \}^p$;
\item[A2.] $\max_{\s \in \{ 0,1\}^p } |\y|^T \s - f(\s) = \max_{\s \in [ 0,1]^p } |\y|^T \s - f(\s)$, $\forall \y \in \R^p$;
\item[A3.] $\min_{\s \in [0,1]^p} \{f(\s): \s \geq |\x|\}$ can be efficiently minimized, $\forall \x \in \R^p$; \vspace{-3mm}
\end{enumerate}
then the biconjugate of $g(\x)$  over the unit $\ell_\infty$-ball can be efficiently computed.
\end{lemma}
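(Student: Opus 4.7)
The plan is to compute the Fenchel conjugate $g^\ast$ first, then take another conjugate to obtain $g^{\ast\ast}$, using assumptions A1--A3 in sequence. Throughout, I would work with the function $h(\x) = g(\x) + \iota_{\|\x\|_\infty \le 1}(\x)$, so that ``biconjugate over the unit $\ell_\infty$-ball'' becomes $h^{\ast\ast}$.

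First I would exploit the fact that $g(\x) = F(\supp(\x))$ depends on $\x$ only through its support. For a fixed support set $\sss$, maximizing $\x^T \y$ over $\{\x : \supp(\x)\subseteq \sss,\,\|\x\|_\infty\le 1\}$ yields $|\y|^T \1_\sss$, attained by $x_i = \sign(y_i)$ on $\sss$. Hence
\begin{equation*}
h^\ast(\y) \;=\; \max_{\s \in \{0,1\}^p} |\y|^T \s - F(\s).
\end{equation*}
Assumption A1 lets me replace $F$ by its convex extension $f$ without changing the maximum (since they agree on $\{0,1\}^p$), and assumption A2 lets me relax the feasible set to $[0,1]^p$:
\begin{equation*}
h^\ast(\y) \;=\; \max_{\s \in [0,1]^p} |\y|^T \s - f(\s).
\end{equation*}

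Next I would write the biconjugate and swap the order of optimization:
\begin{equation*}
h^{\ast\ast}(\x) \;=\; \sup_{\y \in \R^p} \inf_{\s \in [0,1]^p} \bigl(\x^T \y - |\y|^T \s + f(\s)\bigr).
\end{equation*}
The integrand is concave (in fact affine minus a nonnegative combination of $|y_i|$'s) in $\y$ for each fixed $\s$, and convex (affine plus $f$) in $\s$ for each fixed $\y$; since $[0,1]^p$ is compact and convex, Sion's minimax theorem justifies exchanging $\sup$ and $\inf$. Once swapped, the inner maximization splits coordinate-wise: $\sup_{y_i}(x_i y_i - s_i |y_i|)$ equals $0$ if $|x_i| \le s_i$ and $+\infty$ otherwise. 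Thus the outer problem reduces to
\begin{equation*}
h^{\ast\ast}(\x) \;=\; \min_{\s \in [0,1]^p,\, \s \ge |\x|} f(\s),
\end{equation*}
which is well-defined precisely when $\|\x\|_\infty \le 1$. Assumption A3 guarantees that this minimization can be performed efficiently.

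The main obstacle I anticipate is the rigorous justification of the minimax swap: one must check that Sion's hypotheses apply despite $\y$ ranging over an unbounded set, which I would handle by noting that whenever the inner infimum is finite the effective domain of $\y$ can be restricted to the compact box $\{\y : |\y| \le \1\cdot \max_i s_i\}$ via the coercive penalty $-|\y|^T\s$ (or, alternatively, by an approximation argument taking $\s$ bounded away from $0$ and passing to the limit). The other steps --- reducing to a set function via support-dependence, invoking A1 to pass to $f$, and invoking A2 to convexify the feasible set --- are essentially bookkeeping once the key minimax identity is established.
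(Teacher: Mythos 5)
Your proposal is correct and follows essentially the same route as the paper's proof: reduce the conjugate over the $\ell_\infty$-ball to a discrete maximization over supports, pass to $f$ on $[0,1]^p$ via A1--A2, swap $\sup$ and $\min$ by Sion's minimax theorem, and collapse the inner supremum to the constraint $\s \ge |\x|$. The obstacle you flag about the unbounded $\y$-domain is not actually an issue, since Sion's theorem only requires compactness of one of the two sets, which is supplied by $[0,1]^p$ together with the lower semi-continuity and convexity of $f$ from A1.
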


It is also interesting to compute the biconjugate of $g(\x)$ over other unit balls in the Euclidean space, which we will not discuss in this paper.  The proof of Lemma \ref{prop:Main_Property} is elementary, and is provided for completeness:

%
%

\begin{proof}{[Lemma \ref{prop:Main_Property}]} It holds that
\begin{align*}
 g^\ast(\y) &= \sup_{\| \x \|_\infty \leq 1} \x^T \y - F(\supp(\x)) \\
  &= \sup_{\s \in \{ 0,1\}^p} \sup_{\scriptsize \colcst{\| \x \|_\infty \leq 1 \\ \mathds{1}_{\supp(\x)} = \s}} \x^T \y - F(\s) \\
  &= \max_{\s \in \{ 0,1\}^p} | \y|^T \s - F(\s) \tag{by H\"{o}lder's inequality}\\
  &= \max_{\s \in [0,1]^p} | \y|^T \s - f(\s) \tag{by A1 and A2}
\end{align*}
The conjugate is a discrete optimization problem which, in general, is hard to solve. Assumption $A2$ guarantees that its convex relaxation has integral optimal solutions, otherwise the last equality will only hold as an upper bound.
\begin{align*}
 g^{\ast \ast}(\x) &= \sup_{\y \in \R^p} \x^T \y - g^\ast(\y) \\
 &= \sup_{\y\in \R^p }  \min_{\s \in [0,1]^p}  \y^T \x -  | \y|^T \s + f(\s)  \\
  &\stackrel{\star}{=}  \min_{\s \in [0,1]^p} \sup_{\scriptsize \colcst{\y\in \R^p \\ \sign(\y) = \sign(\x)}}   |\y|^T (|\x| -  \s) + f(\s)  \\
  &=\small \begin{cases} \min_{\scriptsize  \colcst{\s \in [0,1]^p \\ \s \geq |\x|}}  f(\s)  &\text{if  $\x \in [-1,1]^p \cap \dom(f)$}\\
 \infty, &\text{otherwise.}
  \end{cases}
 \end{align*}
 
 Given assumption $A1$, $(\star)$ holds by Sion's minimax theorem \cite[Corollary 3.3]{sion1958general}. Assumption $A3$ guarantees that the final convex minimization problem is tractable. 
\end{proof}

%
%

\begin{remark}\label{rmk: lower_bound}
 It is worth noting that, without assumption $A2$, the resulting convex function will still be a convex lower bound of $g(\x)$, albeit not necessarily the \emph{tightest} one.
\end{remark}

\begin{remark}
Note that in lemma \ref{prop:Main_Property}, we had to restrict the biconjugate over the box $[-c,c]^p$ (with $c=1$), otherwise it would evaluate to a constant. In the sequel, unless otherwise stated, we assume $c=1$  without loss of generality.
\end{remark}

In general, computing even the conjugate is a hard problem. If the chosen combinatorial penalty has a tractable conjugate, its envelope can be numerically approximated by a subgradient method \cite{jojic2011convex}. 

\subsection{Submodular sparsity models}

In the generative modeling approach, non-decreasing submodular functions provide a flexible framework that is quite popular. While the best known method for checking submodularity has sub-exponential time complexity \cite{seshadhri2014submodularity}, we can often identify submodular structures by inspection, or we can restrict ourselves to known submodular models that closely approximate our objectives. 

In the light of Lemma \ref{prop:Main_Property}, submodular functions (cf., Def.\ \ref{def:submodular}) indeed satisfy the three assumptions, which allow the tractable computation of tight convex relaxations. For instance, the convex envelope of a submodular non-decreasing function is given by its Lov\'asz extension \cite{bach2010structured}, and optimization with the resulting convex regularizer can be done efficiently. In fact, the corresponding proximity operator is equivalent to solving a submodular minimization problem (SFM) using the minimum-norm point algorithm \cite{bach2010structured}, which empirically runs in ${O}(p^2)$-time \cite{bach2010structured}. However, recent results show that in the worst-case analysis, min-norm point algorithm solves SFM in $O(p^7)$-time \cite{Chakrabarty2014}.

\subsection{$\ell_q$-regularized combinatorial sparsity models} \label{sec:lp-reg-models}
In some applications, we may want to control not only the location of the non-zero coefficients, but also their magnitude. In this setting, it makes sense to consider a combination of combinatorial penalties with continuous regularizers. In particular, functions of the form $\mu F(\supp(\x)) + \nu \|\x \|_q$ are studied in \cite{obozinski2012convex}. The positive homogeneous convex envelope of a $\ell_q$-regularized set function is then given by the dual of $\Omega^\ast_q(\y)= \max_{\s \in \{ 0,1\}^p,\s \neq 0} \frac{\| \y_{\supp(\s)}\|_q  }{F(\s)^{1/q}} $, which can be computed efficiently, for example, in the special case where $F$ is submodular. However, if we only seek to enforce the combinatorial structure, using this approach will fail, as discussed in Section \ref{sec:Dispersive_Models}.
 
\section{Totally unimodular sparsity models}
Combinatorial descriptions that satisfy Lemma \ref{prop:Main_Property} are not limited to submodular functions. Indeed, we can intuitively model the classical sparsity penalties that encourage the simplest support subject to  structure constraints via basic linear inequalities \cite{kyrillidis2012combinatorial,nirav2013tractability}. When the matrix encoding the structure is TU, such models admit tractable convex relaxations that are tight, which is supported by the following.

\begin{lemma}[\cite{nemhauser1999integer}] \label{prop:integral_polyhedra}
Given a TU matrix $\M \in \R^{l \times m}$, an integral vector $\cb \in \mathbb{Z}^l$, and a vector $\boldsymbol{\theta} \in \R^m$. The linear program (LP) $\max_{\betabf \in [0,1]^m} \{ \boldsymbol{ \theta}^T \betabf : \M \betabf \leq \cb\}$ has integral optimal solutions.
\end{lemma}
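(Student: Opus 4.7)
The plan is to show that every vertex of the feasible polytope is integral, and then invoke the fundamental theorem of linear programming to conclude that some optimal solution is integral. Concretely, write the feasible set as
\[
P = \{ \betabf \in \R^m : \M \betabf \leq \cb,\ \betabf \leq \mathds{1}_m,\ -\betabf \leq \boldsymbol{0} \} = \{ \betabf : \A \betabf \leq \db \},
\]
where $\A = [\M^T\ \I_m\ -\I_m]^T$ and $\db = [\cb^T\ \mathds{1}_m^T\ \boldsymbol{0}^T]^T$. Since $P \subseteq [0,1]^m$ is nonempty (we can take $\betabf = \boldsymbol{0}$ if $\cb \geq \boldsymbol{0}$, but in any case the statement is vacuous if $P$ is empty), it is a polytope, and if $P \neq \emptyset$ the LP attains its maximum at a vertex of $P$. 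Hence it suffices to prove that every vertex of $P$ has integer coordinates, i.e., lies in $\{0,1\}^m$.

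First I would verify that $\A$ is itself TU. This is a standard closure property: appending a row that is a signed unit vector $\pm \e_i^T$ to a TU matrix preserves total unimodularity, since any square submatrix either avoids the new row (TU by assumption on $\M$) or contains it, in which case expanding the determinant along that row reduces to a determinant of a smaller submatrix of $\M$ (times $0$ or $\pm 1$), again in $\{0, \pm 1\}$. Iterating this for all $2m$ extra rows shows $\A$ is TU. Moreover, $\db$ is integral by hypothesis on $\cb$ together with the $0/1$ entries coming from the box constraints.

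Next, let $\betabf^*$ be any vertex of $P$. Then $\betabf^*$ is the unique solution of the system obtained by taking $m$ linearly independent constraints from $\A \betabf \leq \db$ as equalities; denote the corresponding square submatrix by $\Bbd$ and the corresponding right-hand side by $\db'$, so that $\Bbd \betabf^* = \db'$ with $\Bbd$ nonsingular. Since $\A$ is TU, $\det(\Bbd) \in \{\pm 1\}$, and since $\db$ is integral so is $\db'$. By Cramer's rule, each coordinate is $\beta_i^* = \det(\Bbd_i)/\det(\Bbd)$, where $\Bbd_i$ is obtained from $\Bbd$ by replacing its $i$-th column with $\db'$. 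As $\Bbd_i$ has integer entries, its determinant is an integer, and therefore $\beta_i^* \in \mathbb{Z}$. Combined with $\betabf^* \in [0,1]^m$, this forces $\betabf^* \in \{0,1\}^m$, proving that every vertex, and in particular some LP optimum, is integral.

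The only step that requires any genuine care is the closure property that appending $\pm \I$ preserves total unimodularity; everything else is a direct application of Cramer's rule plus the fundamental theorem of LP. An alternative route that bypasses the closure argument is to cite the Hoffman–Kruskal theorem directly, which asserts that $\M$ is TU if and only if $\{\betabf : \boldsymbol{l} \leq \betabf \leq \boldsymbol{u},\ \M \betabf \leq \cb\}$ is integral for every integer $\boldsymbol{l}, \boldsymbol{u}, \cb$; applying this with $\boldsymbol{l} = \boldsymbol{0}$, $\boldsymbol{u} = \mathds{1}_m$ yields the conclusion immediately.
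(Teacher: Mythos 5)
Your proof is correct. Note that the paper does not prove this lemma at all --- it is stated as a known result imported from Nemhauser and Wolsey, so there is no in-paper argument to compare against. What you have written is precisely the standard textbook justification: append the box constraints to form $\A = [\M^T \ \I_m \ {-\I_m}]^T$, observe that adjoining signed unit rows preserves total unimodularity, and then apply Cramer's rule to a basic feasible solution $\Bbd \betabf^* = \db'$ with $\det(\Bbd) = \pm 1$ and integral right-hand side to conclude every vertex of the (bounded, hence vertex-attaining) feasible polytope is integral. All steps check out, including the minor point that the claim is vacuous when the polytope is empty, and your closing remark that the Hoffman--Kruskal theorem gives the conclusion in one line is exactly the form in which the reference states it. So your proposal correctly fills in the proof the paper delegates to its citation, rather than diverging from an argument the paper actually gives.
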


Let us first provide a simple linear template for TU models:

\begin{definition}[TU penalties]\label{def:TU_penalty}
We define TU penalties as discrete penalties over the support of $\x$ that can be written as
\[  \small g_{\TU}(\x):=  \min_{\omegabf \in \{ 0,1\}^M} \{ \db^T\omegabf + \e^T\s  : \M \betabf \leq \cb, \mathds{1}_{\supp(\x)} = \s\}
\]
for all feasible $\x$, and $ g_{\TU}(\x) = \infty$ otherwise, where $\M \in \R^{l \times (M + p)}$ is a TU matrix, $\betabf = \colvec{\omegabf \\ \s} $,  $\omegabf \in \{ 0,1\}^M$ is a vector of binary variables useful for modeling latent variables , $\db \in \R^M$ and $\e \in \R^p$ are arbitrary weight vectors, and $\cb \in \mathbb{Z}^{l}$ is an integral vector.
\end{definition}
 
By Lemma \ref{prop:integral_polyhedra}, it follows that TU penalties satisfy the sufficient conditions described in Lemma \ref{prop:Main_Property}, where the convex extension is the function itself, and the resulting convex envelope is given below. 
\begin{proposition}[Convexification of TU penalties]\label{eq:conv_TU}
The convex envelope of a TU penalty is given by the following LP:
\begin{equation*} 
\small  g_{\TU}^{\ast \ast}(\x)  =  \min_{\s \in [0,1]^p, \omegabf \in [ 0,1]^M} \{ \db^T\omegabf + \e^T\s : \M \betabf \leq \cb,  |\x| \leq \s\} 
\end{equation*} 
for all feasible $\x$, and $g_{\TU}^{\ast \ast}(\x)=\infty$ otherwise. 
\end{proposition}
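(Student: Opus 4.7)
The plan is to derive the result as a direct application of Lemma \ref{prop:Main_Property} with the set function $F(\s) = \min_{\omegabf \in \{0,1\}^M}\{\db^T\omegabf + \e^T \s : \M\betabf \leq \cb\}$ (for $\s \in \{0,1\}^p$, $+\infty$ if infeasible), and with the candidate convex extension $f$ given by the LP relaxation
\[
 f(\s) = \min_{\omegabf \in [0,1]^M}\{\db^T\omegabf + \e^T\s : \M\betabf \leq \cb\}, \quad \s \in [0,1]^p,
\]
and $+\infty$ otherwise. Once A1--A3 are verified, the conclusion of Lemma \ref{prop:Main_Property} literally reads
$g^{\ast\ast}(\x) = \min_{\s \in [0,1]^p, \s \geq |\x|} f(\s)$, which, after absorbing the inner $\omegabf$-minimization into a single joint LP, is exactly the stated formula.

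For A1, I would first argue that $f$ is proper, l.s.c.\ and convex by standard LP sensitivity: fixing $\s$ and using LP duality, $f(\s) - \e^T\s$ is the value function of a minimization LP whose right-hand side $\cb - \M_{\s}\s$ is affine in $\s$, hence convex in $\s$; lower semicontinuity on the closed polyhedron follows from continuity of LP value functions on the region where the dual is feasible. To show $f$ extends $F$, I fix an integral $\s \in \{0,1\}^p$, note that the sub-LP in $\omegabf$ has constraints $\M_{\omegabf}\omegabf \leq \cb - \M_{\s}\s$ with integral right-hand side, and the constraint matrix $\M_{\omegabf}$ is a column submatrix of $\M$ and hence TU. Lemma \ref{prop:integral_polyhedra} then yields an integral optimal $\omegabf$, so the continuous minimum equals the $\{0,1\}$-minimum, i.e.\ $f(\s)=F(\s)$.

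For A2, I would swap the outer $\max_\s$ over $[0,1]^p$ with the inner $\min_\omegabf$ by writing
\[
 \max_{\s \in [0,1]^p} |\y|^T \s - f(\s) = \max_{\betabf \in [0,1]^{M+p}} \{(|\y|-\e)^T\s - \db^T\omegabf : \M\betabf \leq \cb\},
\]
which is a single LP over the polytope $\{\betabf \in [0,1]^{M+p} : \M\betabf\leq \cb\}$. Since $\M$ is TU and $\cb$ is integral, Lemma \ref{prop:integral_polyhedra} (applied to the augmented constraint system that also includes the box) guarantees an integral optimal vertex, which means the optimum is attained at some $\s \in \{0,1\}^p$, $\omegabf \in \{0,1\}^M$. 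Restricting to such integral $\omegabf$ in the definition of $f$ recovers $F$, proving the equality A2. Assumption A3 is then immediate: the problem $\min_{\s \in [0,1]^p}\{f(\s) : \s \geq |\x|\}$ unfolds into the joint LP $\min_{\s,\omegabf}\{\db^T\omegabf + \e^T\s : \M\betabf \leq \cb,\ |\x|\leq\s\leq \1\}$, which is polynomial-time solvable.

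The main obstacle I foresee is the A2 step: one must justify that the interchange of max and min is valid and that after interchange the combined LP still falls under the hypotheses of Lemma \ref{prop:integral_polyhedra}. Concretely, I must check that appending the $\s \geq 0$ and $\s \leq \1$, $\omegabf \geq 0$, $\omegabf \leq \1$ box constraints to $\M\betabf \leq \cb$ preserves total unimodularity of the combined constraint matrix; this is a standard fact since adjoining identity-row blocks to a TU matrix preserves TU. With A1--A3 established, plugging into Lemma \ref{prop:Main_Property} and merging the two nested minimizations yields the claimed expression for $g_{\TU}^{\ast\ast}(\x)$, while infeasibility of $\x$ maps to $+\infty$ exactly because the joint LP is infeasible.
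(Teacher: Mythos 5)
Your proof is correct and follows exactly the route the paper intends: the paper states this proposition without a written proof, asserting only that the conditions A1--A3 of Lemma~\ref{prop:Main_Property} follow from Lemma~\ref{prop:integral_polyhedra} with the LP relaxation serving as the convex extension, which is precisely what you verify in detail. One cosmetic remark: the ``max--min interchange'' you flag as the main obstacle in A2 is not a minimax step at all --- the inner minimum over $\omegabf$ enters with a negative sign and simply merges with the outer maximum into a single joint LP, so no interchange argument is needed there.
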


Note that when the matrix $M$ in Definition \ref{def:TU_penalty} is not TU, the above LP is still useful, since it is a convex lower bound of the penalty, despite being non-tight as noted in Remark \ref{rmk: lower_bound}. 

\begin{remark}
 The simplicity description does not need to be a linear function of $\omegabf$ and $\s$. We can often find TU descriptions of higher order interactions that can be ``lifted'' to result in the linear TU penalty framework (cf., Section \ref{sec:dispersive_pairwise}).
\end{remark}

\begin{remark}
A weaker sufficient condition for lemma \ref{prop:integral_polyhedra} to hold is for the system $\M \betabf \leq \cb$ to be total dual integral \cite{Giles1979} (e.g., submodular polyhedra). Then, penalties of the form described in Definition \ref{def:TU_penalty} will again satisfy Lemma \ref{prop:Main_Property}.
\end{remark}

Besides allowing tractable tight convexifications, the choice of TU penalties is motivated by their ability to capture several important structures encountered in practice. In what follows, we study several TU penalties and their convex relaxations. We present a reinterpretation of several well-known convex norms in the literature, as well as introduce new ones.

\section{Group sparsity}

Group sparsity is an important class of structured sparsity models that arise naturally in machine learning applications (cf., \cite{wainwright2014structured} and the citations therein), where prior information on $\x^\natural$ dictates certain groups of variables to be selected or discarded together.

A group sparsity model thus consists of a collection of potentially overlapping groups  $\GG= \{ \G_1, \cdots, \G_M\}$ that cover the ground set $\Pt$, where each group $\G_i \subseteq \Pt$ is a subset of variables. A group structure construction immediately supports two compact graph representations (c.f., Figure \ref{fig:graph_rep}).

First, we can represent $\GG$ as a bipartite graph \cite{baldassarre2013group}, where the groups form one set, and the variables form the other. A variable $i \in \Pt$ is connected by an edge to a group $\G_j \in \GG$ iff $i \in \G_j$. We denote by $\Bbd \in \{0,1\}^{p\times M}$ the biadjacency matrix of this bipartite graph; $B_{ij} =1$ iff $i \in \G_j$, and by $\E \in \{0,1\}^{|\mathcal{E}| \times (M+p)}$ its edge-node incidence matrix; $E_{ij} = 1$ iff the vertex $j$ is incident to the edge $e_i \in \mathcal{E}$.
Second, we can represent $\GG$ as an intersection graph \cite{baldassarre2013group}, where the vertices are the groups $\G_i \in \GG$. Two groups $\G_i$ and $\G_j$ are connected by an edge iff $\G_i \cap \G_j \neq \emptyset$. This structure makes it explicit whether groups themselves have cyclic interactions via variables, and identifies computational difficulties. 

\tikzstyle{vnode}=[circle,draw=black,fill=white,thick, minimum size=6pt, inner sep=0pt]
\tikzstyle{vnodeholder}=[circle,draw=black,fill=white,thick, minimum size=1pt, inner sep=0pt]
\tikzstyle{gnode}=[rectangle,draw=black,fill=white,thick, minimum size=6pt, inner sep=0pt]
\tikzstyle{gnodeholder}=[rectangle,draw=black,fill=white,thick, minimum size=1pt, inner sep=0pt]
\begin{figure}
\begin{tabular}{cc}
\hspace{-20pt}

\begin{tikzpicture}[scale=.5, transform shape,-,>=stealth',shorten >=1pt,auto,node distance=1.5cm, semithick]

\node[vnode] (v1) at (0,0) [label=above:$1$] [label=left:variables] {};
\node[vnode] (v2) at (1.1,0) [label=above:$2$] {};
\node[vnode] (v3) at (2.2,0) [label=above:$3$] {};
\node[vnode] (v4) at (3.3,0) [label=above:$4$] {};
\node[vnode] (v5) at (4.4,0) [label=above:$5$] {};
\node[vnode] (v6) at (5.5,0) [label=above:$6$] {};
\node[vnode] (v7) at (6.6,0) [label=above:$7$] {};

\node[gnode] (g1) at (0,-1.8) [label=below:$\G_1$] [label=left:groups] {};
\node[gnode] (g2) at (1.5,-1.8) [label=below:$\G_2$] {};
\node[gnode] (g3) at (3,-1.8) [label=below:$\G_3$] {};
\node[gnode] (g4) at (4.5,-1.8) [label=below:$\G_4$] {};
\node[gnode] (g5) at (6,-1.8) [label=below:$\G_5$] {};
	
\draw (v2) to (g1);\draw (g1) to (v2);

\draw (v1) to (g2);\draw (g2) to (v1);
\draw (v3) to (g2);\draw (g2) to (v3);
\draw (v4) to (g2);\draw (g2) to (v4);

\draw (v2) to (g3);\draw (g3) to (v2);
\draw (v3) to (g3);\draw (g3) to (v3);
\draw (v5) to (g5);\draw (g5) to (v5);

\draw (v5) to (g4);\draw (g4) to (v5);
\draw (v6) to (g4);\draw (g4) to (v6);

\draw (v6) to (g3);\draw (g3) to (v6);
\draw (v7) to (g5);\draw (g5) to (v7);
%

\end{tikzpicture}
&
\begin{tikzpicture}[scale=.5, transform shape,-,>=stealth',shorten >=1pt,auto,node distance=2.8cm, semithick]
  \tikzstyle{every state}=[fill=blue!20,draw=none,text=black]
	
	\node[gnode] (g1) at (0,1.5) [label=above:$\G_1$] {};
	\node[gnode] (g2) at (0,0) [label=below:$\G_2$] {};
	\node[gnode] (g3) at (2.25,0.75) [label=below:$\G_3$] {};	
	\node[gnode] (g4) at (4.5,0.75) [label=below:$\G_4$] {};	
	\node[gnode] (g5) at (6.75,0.75) [label=below:$\G_5$] {};	
	  
	\path (g1) edge node {$\{2\}$} (g3);
	\path (g2) edge node {} (g3);
	\path (g3) edge node {$\{6\}$} (g4);
	\path (g4) edge node {$\{5\}$} (g5);
	
	\path (g3) edge node {} (g1);
	\path (g3) edge node {$\{3\}$} (g2);
	\path (g4) edge node {} (g3);
	\path (g5) edge node {} (g4);
	
\end{tikzpicture}
\end{tabular}
\caption{\small (Left) Bipartite graph representation, (Right) Intersection graph representation of the group structure $\small \GG=\{ \G_1 = \{2\}, \G_2 = \{1,3,4\}, \G_3 = \{2,3,6\}, \G_4 = \{5,6\}, \G_5 = \{5,7\}\}$ } \label{fig:graph_rep}
\end{figure}

\subsection{Group intersection sparsity} \label{sec:group_count}
In group sparse models, we typically seek to express the support of $\x^\natural$ using only few groups. One natural penalty to consider then is the non-decreasing submodular function that sums up the weight of the groups intersecting with the support $F_{\cap}(\sss) = \sum_{\G_i \in \GG, \sss \cap \G_i \neq \emptyset} d_i$. The convexification of this function results in the $\ell_\infty$-group lasso norm (also known as $\infty$-CAP penalties) \cite{jenatton2011structured, zhao2006grouped}, as shown in \cite{bach2010structured}. 

We now show how to express this penalty as a TU penalty.

\begin{definition}[Group intersection sparsity] \label{def:group_intersection}
\[  g_{\GG, \cap}(\x) :=  \min_{\omegabf \in \{ 0,1\}^M} \{ \db^T \omegabf : \Hb \betabf \leq 0, \mathds{1}_{\supp(x)} =\s \} \]
where $\Hb$ is the following matrix: 
\[ \Hb := \colvec{  -\I_M, \Hb_1  \\  -\I_M, \Hb_2 \\ \cdots \\  -\I_M, \Hb_p}, ~\Hb_k(i,j) = \begin{cases} 1 & \text{if $j=k, j \in \G_i$} \\
0 & \text{otherwise}
\end{cases}\]
and the vector $\db \in \R_+^M$ here corresponds to positive group weights. Recall that $\betabf = \colvec{\omegabf \\ \s}$, and thus $\Hb\betabf \leq 0$ simply corresponds to $s_j \leq w_i, \forall j \in \G_i$.
\end{definition}
$g_{\GG, \cap}(\x)$ indeed sums up the weight of the groups intersecting with the support, since for any coefficient in the support of $\x$ the constraint $\Hb \betabf \leq 0$ forces all the groups that contain this coefficient to be selected.

Here, $\Hb$ is TU, since each row of $\Hb$ contains at most two non-zero entries, and the entries in each row with two non-zeros sum up to zero, which is a sufficient condition for total unimodularity \cite[Proposition 2.6]{nemhauser1999integer}.

\begin{proposition}[Convexification]\label{prop:conv_grpInter}
The convex envelope of $g_{\GG, \cap}(\x)$ over the unit $\ell_\infty$-ball is
$$g_{\GG, \cap}^{\ast \ast}(\x) = \begin{cases}  \sum_{\G_i \in \GG} d_i \| \x_{\G_i} \|_\infty  &\text{if $\x \in [-1,1]^p$}\\
\infty &\text{otherwise}
\end{cases}$$
\end{proposition}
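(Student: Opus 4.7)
The plan is to invoke Proposition~\ref{eq:conv_TU} directly, since $\Hb$ has already been shown to be TU (each row has at most two nonzeros summing to zero), and then simplify the resulting LP by exploiting the decoupled structure of the constraints.

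First I would specialize Proposition~\ref{eq:conv_TU} to the data of Definition~\ref{def:group_intersection}: here $\e = 0$, $\cb = 0$, the weight vector $\db$ is nonnegative, and the constraint $\Hb \betabf \leq 0$ unfolds (as the authors already observed) into the pointwise inequalities $s_j \leq \omega_i$ for every $j \in \G_i$. On the domain $\x \in [-1,1]^p$ the envelope therefore equals
\begin{equation*}
g_{\GG,\cap}^{**}(\x) = \min_{\substack{\s \in [0,1]^p \\ \omegabf \in [0,1]^M}} \Bigl\{ \sum_{i=1}^M d_i \omega_i \;:\; s_j \leq \omega_i \,\forall j \in \G_i,\; s_j \geq |x_j| \,\forall j \Bigr\},
\end{equation*}
and it is $+\infty$ off of $[-1,1]^p$ because no feasible $\s$ exists when some $|x_j| > 1$.

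Next I would observe that because $\db \geq 0$ and $\omegabf$ appears only in the objective and as an upper bound on the $s_j$'s, the LP decouples across groups: the optimal choice is to push $\s$ as low as possible and then $\omegabf$ as low as possible. Setting $s_j = |x_j|$ and $\omega_i = \max_{j \in \G_i} s_j = \|\x_{\G_i}\|_\infty$ is feasible (the box constraints $\omega_i \leq 1$ hold precisely because $\x \in [-1,1]^p$), and any feasible $(\s,\omegabf)$ must satisfy $\omega_i \geq s_j \geq |x_j|$ for every $j \in \G_i$, hence $\omega_i \geq \|\x_{\G_i}\|_\infty$. Since $d_i \geq 0$, this lower bound is attained, giving the claimed value $\sum_i d_i \|\x_{\G_i}\|_\infty$.

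I do not expect a real obstacle here, as the work has essentially been done in Proposition~\ref{eq:conv_TU}; the only points that deserve explicit care are (i) verifying that Proposition~\ref{eq:conv_TU} applies, which reduces to the TU check already given, and (ii) checking that the box constraint $\omegabf \in [0,1]^M$ does not cut off the natural minimizer, which is exactly what restricts the formula to $[-1,1]^p$ and produces the $+\infty$ branch outside this box.
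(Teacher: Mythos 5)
Your proposal is correct and follows essentially the same route as the paper: invoke Proposition~\ref{eq:conv_TU} using the already-established TU property of $\Hb$, then eliminate $\s$ by setting $\s = |\x|$ and read off the optimal $\omega_i^\ast = \|\x_{\G_i}\|_\infty$. The only difference is that you spell out the feasibility and optimality of this choice (using $\db \geq 0$ and the decoupling across groups), which the paper leaves as a one-line remark.
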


\begin{figure}
\centering
\includegraphics[clip=true,trim=0cm 8cm 0cm 4cm,scale=.18]{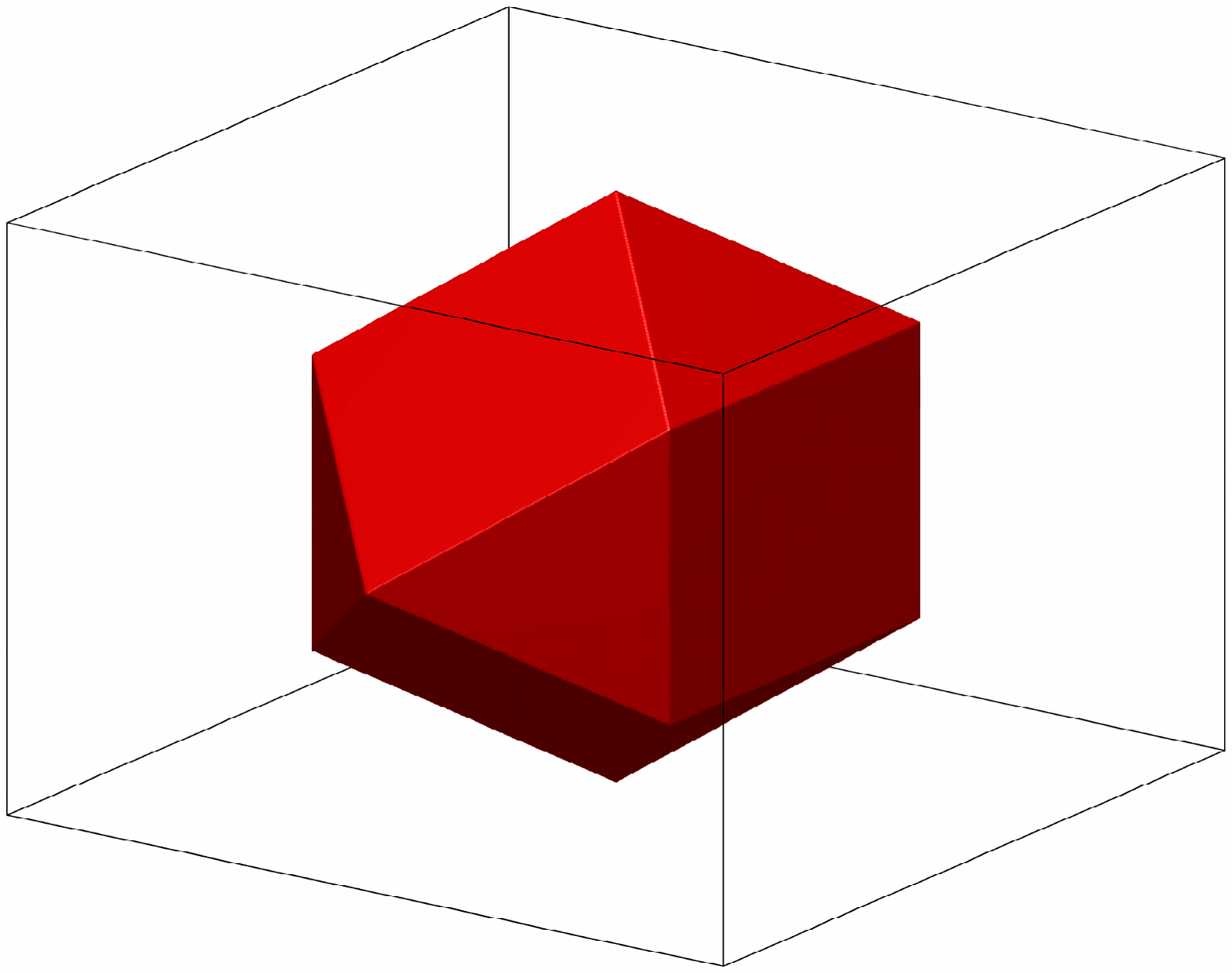}
\caption{Unit norm ball of $g_{\GG, \cap}^{\ast \ast}$, for $\GG=\{ \{1,2\}, \{ 2,3\}\}$, unit group weights $\db=\1$.}
\end{figure}

\subsection{Minimal group cover}\label{sec:grp_cover}
The groups intersections penalty induces supports corresponding to the intersection of the complements of groups, while in several applications, it is desirable to explain the support of $\x^\natural$ as the union of groups in $\GG$. In particular, we can seek the minimal set cover of $\x^\natural$:

\begin{definition}[Group $\ell_0$-``norm'', \cite{baldassarre2013group}]\label{def:group_cover}
The group  $\ell_0$-``norm'' computes the weight of the minimal weighted set cover of $\x$ with group weights $\db \in \R_+^M$:
\vspace{-5pt}
\[ g_{\GG,0}(\x) := \min_{\omegabf \in \{0,1\}^M} \{ \db^T \omegabf: \Bbd \omegabf \geq  \mathds{1}_{\supp(\x)}\}, \]
where  $\Bbd$ is the biadjacency matrix of the bipartite graph representation of $\GG$.
\end{definition}
Note that computing the group  $\ell_0$-``norm'' is NP-Hard, since it corresponds to the minimum weight set cover problem. $g_{\GG,0}(\x)$ is a penalty that was previously considered in \cite{baldassarre2013group, obozinski2011group, huang2011learning}, and the latent group lasso was proposed in \cite{obozinski2011group} as a potential convex surrogate for it, but it was not established as the tightest possible convexification. 

The group $\ell_0$-``norm'' is \emph{not} a submodular function, but if $\Bbd$ is TU, it is a TU penalty, and thus is admits a tight convex relaxation. We show below that the convex envelope of the group  $\ell_0$-``norm'' is indeed the $\ell_\infty$-latent group norm. It is worth noting that the $\ell_q$-latent group lasso was also shown in \cite{obozinski2012convex} to be the positive homogeneous convex envelope of the $\ell_q$-regularized group  $\ell_0$-``norm'', i.e., of  $\mu g(\x)_{\GG,0} + \nu \|\x \|_q$.
\begin{proposition}[Convexification] \label{prop:conv_grpCover}
When the group structure leads to a TU biadjacency matrix $ \Bbd$, the convex envelope of the group $\ell_0$-``norm'' over the unit $\ell_\infty$-ball is
\[\small g_{\GG,0}^{\ast \ast}(\x) = \begin{cases} \min_{\omegabf \in [0,1]^M} \{ \db^T \omegabf :  \Bbd \omegabf \geq |\x| \} &\text{if  $\x \in [-1,1]^p$} \\
\infty &\text{ otherwise}
\end{cases}\]
\end{proposition}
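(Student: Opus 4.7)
The plan is to realize the group $\ell_0$-``norm'' as a special case of the TU penalty template in Definition \ref{def:TU_penalty}, and then invoke Proposition \ref{eq:conv_TU} to read off the convex envelope, followed by a simple elimination of the auxiliary support variable $\s$.

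First, I would rewrite the constraint $\Bbd \omegabf \geq \mathds{1}_{\supp(\x)} = \s$ as $-\Bbd \omegabf + \s \leq 0$. Setting
\[
\M := [\,-\Bbd \ \mid\ \I_p\,], \qquad \betabf = \colvec{\omegabf \\ \s}, \qquad \cb = 0 \in \mathbb{Z}^{p},
\]
and choosing the weights $\db$ as given and $\e = 0 \in \R^p$, the group $\ell_0$-``norm'' fits Definition \ref{def:TU_penalty} verbatim:
\[
g_{\GG,0}(\x) = \min_{\omegabf \in \{0,1\}^M}\{\db^T \omegabf + \e^T \s : \M \betabf \leq \cb,\ \mathds{1}_{\supp(\x)} = \s\}.
\]

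Next I would argue that $\M$ inherits total unimodularity from $\Bbd$. Negating a column of a TU matrix preserves TU, so $-\Bbd$ is TU. Appending an identity block also preserves TU: given any square submatrix of $\M$, Laplace-expanding along any column that comes from $\I_p$ reduces the determinant, up to sign, to the determinant of a smaller submatrix obtained by deleting a row and a column; iterating, one is left with a square submatrix of $-\Bbd$, whose determinant lies in $\{0,\pm 1\}$ by hypothesis. Hence $\M$ is TU, and $\cb = 0$ is integral, so the hypotheses of Proposition \ref{eq:conv_TU} are met.

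Applying Proposition \ref{eq:conv_TU} yields, for $\x \in [-1,1]^p$,
\[
g_{\GG,0}^{\ast\ast}(\x) = \min_{\substack{\omegabf \in [0,1]^M \\ \s \in [0,1]^p}} \{\db^T \omegabf : -\Bbd \omegabf + \s \leq 0,\ |\x| \leq \s\},
\]
and $+\infty$ outside the box. The final step is to eliminate $\s$. Because $\db \geq 0$, the objective is independent of $\s$, and the constraint $\Bbd \omegabf \geq \s$ only becomes easier as $\s$ decreases, while $\s \geq |\x|$ is the binding lower bound. So the optimum is attained at $\s = |\x|$ (which lies in $[0,1]^p$ since $\x \in [-1,1]^p$), and the program collapses to $\min_{\omegabf \in [0,1]^M}\{\db^T \omegabf : \Bbd \omegabf \geq |\x|\}$, which is exactly the claimed expression.

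The only nontrivial step is verifying that $[-\Bbd \mid \I_p]$ is TU; everything else is bookkeeping against Definition \ref{def:TU_penalty} and Proposition \ref{eq:conv_TU}. I do not anticipate genuine difficulty, but care is needed because the template requires $\cb$ to be integral and the sign convention of the inequality to match, both of which are handled by the reformulation above.
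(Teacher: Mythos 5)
Your proposal is correct and follows essentially the same route as the paper: cast $g_{\GG,0}$ in the form of Definition \ref{def:TU_penalty} with $\M = [-\Bbd,\ \I_p]$ and $\cb=0$, note that TU is preserved (the paper simply cites \cite[Proposition 2.1]{nemhauser1999integer} where you prove it directly), apply Proposition \ref{eq:conv_TU}, and eliminate $\s$ by setting $\s=|\x|$. The extra first-principles verification of total unimodularity is a nice touch but not a departure in method.
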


\begin{figure}
\centering
\includegraphics[clip=true,trim=0cm 8cm 0cm 4cm,scale=.18]{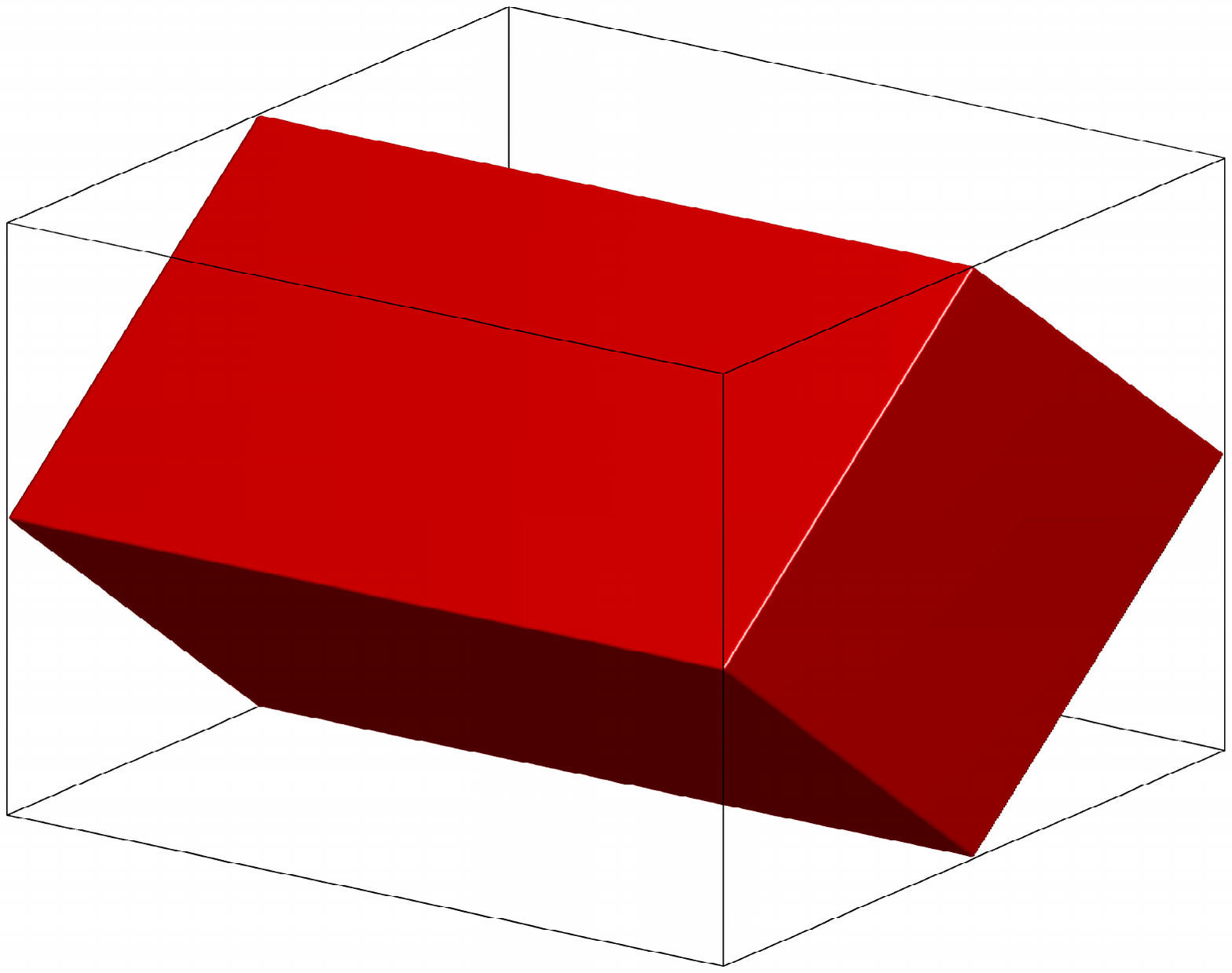}
\caption{Unit norm ball of $g_{\GG, 0}^{\ast \ast}$, for $\GG=\{ \{1,2\}, \{ 2,3\}\}$, unit group weights $\db=\1$.}
\end{figure}

Thus, given a group structure $\GG$, one can check in polynomial time if it is TU \cite{truemper1982alpha} to guarantee that the $\ell_ \infty$-latent group lasso will be the tightest relaxation. 

\begin{remark}\label{rmk: Loopless_grps}
One important class of group structures that leads to a TU matrix $\Bbd$ is given by acyclic groups, as shown in \cite[Lemma 2]{baldassarre2013group}. The induced intersection graph for such groups is acyclic, as illustrated in Figure \ref{fig:graph_rep}. In this case, the $\ell_\infty$-latent group norm is a tight relaxation.
\end{remark}

\subsection{Sparsity within groups}

Both group model penalties we considered so far only induce sparsity on the group level; if a group is selected, all variables within the group are encouraged to be non-zero. In some applications, it is desirable to also enforce sparsity within groups. We thus consider a natural extension of the above two penalties, where each group is weighed by the $\ell_0$-norm of $\x$ restricted to the group.

\begin{definition}[Group models with sparsity within groups]
\[\small g_{\GG, s}(\x) =  \min_{\omegabf \in \{0,1\}^M} \{ \sum_{i=1}^M \omega_i \| \x_{\G_i}\|_0: \M \betabf \leq 0,   \mathds{1}_{\supp(\x)}=\s\} \]
where $\M$ here is either $\M = \Hb$ in Definition \ref{def:group_intersection} or $\M = [-\Bbd, \I_p]$ in Definition \ref{def:group_cover}.
\end{definition}

Unfortunately, this penalty leads to a non-TU penalty, and thus its corresponding convex surrogate given by Proposition \ref{eq:conv_TU} is not guaranteed to be tight.

\begin{proposition}\label{prop:SGL_notTU}
Given any group structure $\GG$, $g_{\GG, s}(\x)$ is not a TU penalty.
\end{proposition}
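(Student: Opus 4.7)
My plan is to cast $g_{\GG,s}$ into the template of Definition \ref{def:TU_penalty} by linearizing its bilinear objective, and then to exhibit a $3\times 3$ submatrix of the resulting constraint matrix whose determinant is $\pm 2$.

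The objective $\sum_{i} \omega_i \|\x_{\G_i}\|_0 = \sum_{i} \sum_{j \in \G_i} \omega_i s_j$ is a $0/1$-bilinear form in $(\omegabf, \s)$, whereas Definition \ref{def:TU_penalty} requires a linear objective. I would therefore introduce auxiliary binary variables $z_{ij}$ for each pair $(i,j)$ with $j \in \G_i$, intended to encode $\omega_i s_j$, and append the McCormick envelope
\[
z_{ij} \leq \omega_i, \quad z_{ij} \leq s_j, \quad \omega_i + s_j - z_{ij} \leq 1, \quad -z_{ij} \leq 0
\]
to the structural constraints $\M \betabf \leq 0$, where $\M$ is either $\Hb$ from Definition \ref{def:group_intersection} or $[-\Bbd,\I_p]$ from Definition \ref{def:group_cover}. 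On $\{0,1\}$-valued $(\omegabf,\s,\z)$ this exactly reproduces $g_{\GG,s}$ in the form required by Definition \ref{def:TU_penalty}.

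Next, since the claim is only nonvacuous when some group has at least two elements, I pick such a $\G_i$ with distinct $j, k \in \G_i$. I consider the three rows
\[
s_j - \omega_i \leq 0, \qquad \omega_i + s_j - z_{ij} \leq 1, \qquad \omega_i + s_k - z_{ik} \leq 1,
\]
restricted to the columns indexed by $(\omega_i, s_j, s_k)$. The first row comes from $\Hb$ in the intersection model and from the $\Bbd$-row for coordinate $j$ in the cover model; in both cases it projects onto $(-1,1,0)$ over these three columns. The induced submatrix is
\[
\begin{pmatrix} -1 & 1 & 0 \\ 1 & 1 & 0 \\ 1 & 0 & 1 \end{pmatrix},
\]
whose determinant is $-2$ (expand along the last column). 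This violates the $\{0,\pm 1\}$ requirement of total unimodularity, so this canonical linearization is not TU.

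The main obstacle is that Definition \ref{def:TU_penalty} does not fix a unique lifting, so strictly speaking one must argue that no alternative linear representation of $g_{\GG,s}$ could be TU. The McCormick envelope is the minimal linear lifting enforcing $z_{ij} = \omega_i s_j$ on binary variables, and any admissible lifting that correctly reproduces the bilinear term must still imply the three-variable coupling $\omega_i + s_j - z_{ij} \leq 1$ alongside the structural constraint $s_j \leq \omega_i$ (or its cover analogue); the same $\pm 2$ subdeterminant therefore persists in every lifting, which is what justifies the proposition for every nontrivial group structure $\GG$.
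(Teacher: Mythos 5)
Your proposal is correct and follows essentially the same route as the paper: linearize the bilinear objective with auxiliary binaries $z_{ij}$ via $\omega_i+s_j-z_{ij}\le 1$, then exhibit a subdeterminant of $-2$ by pairing a structural row $s_j\le\omega_i$ (or its cover analogue) with a linearization row on the columns $(\omega_i,s_j)$ --- indeed the paper's witness is exactly the top-left $2\times 2$ block $\bigl(\begin{smallmatrix}-1 & 1\\ 1 & 1\end{smallmatrix}\bigr)$ of your $3\times 3$ matrix, so the third row and column are superfluous. The caveat you raise at the end --- that Definition \ref{def:TU_penalty} only requires \emph{some} TU representation to exist, so one should in principle rule out every admissible lifting --- is a genuine point, but it is equally unaddressed in the paper's own proof, which likewise establishes only that the canonical linearization fails to be TU.
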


\begin{proposition}[Convexification] \label{prop:conv_grpLasso_sparse}
The convex surrogate via Proposition \ref{eq:conv_TU} for $g_{\GG, s}(\x)$ with $\M= \Hb$ (i.e., the group intersection model with sparse groups) is given by 
$$\Omega_{\GG, s}(\x) := \sum_{(i,j) \in \Et} (\| \x_{\G_i} \|_\infty + |x_j| -1)_+ $$
for $\x \in [-1,1]^p$, and $\Omega_{\GG, s}(\x) := \infty$ otherwise. Note that $\Omega_{\GG, s}(\x)\le g_{\GG, s}^{**}(\x)$.
\end{proposition}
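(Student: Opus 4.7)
My plan is to recast $g_{\GG, s}$ into the linear template of Definition \ref{def:TU_penalty} by \emph{lifting} the bilinear objective, then read off $\Omega_{\GG, s}$ as the LP relaxation of Proposition \ref{eq:conv_TU}. First I would rewrite $\sum_{i=1}^M \omega_i \|\x_{\G_i}\|_0 = \sum_{(i,j) \in \Et} \omega_i s_j$, where $s_j = \mathds{1}_{\supp(\x)}(j)$, and for every edge $(i,j) \in \Et$ I would introduce a binary auxiliary variable $z_{ij}$ together with the inequality $z_{ij} \geq \omega_i + s_j - 1$. Since $\omega_i, s_j \in \{0,1\}$, minimizing $\sum z_{ij}$ subject to this constraint forces $z_{ij} = \omega_i s_j$, so the lifted integer program still computes $g_{\GG, s}(\x)$ exactly.

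Next, following the recipe of Proposition \ref{eq:conv_TU} (applied as a convex lower bound in the non-TU case, per Remark \ref{rmk: lower_bound} and Proposition \ref{prop:SGL_notTU}), I would relax every binary variable to $[0,1]$ and replace $\mathds{1}_{\supp(\x)} = \s$ with $\s \geq |\x|$, yielding
\begin{equation*}
\Omega_{\GG, s}(\x) \;=\; \min_{\omegabf,\s,\z \,\in\, [0,1]} \Bigl\{ \sum_{(i,j) \in \Et} z_{ij} \;:\; s_j \leq \omega_i \;\forall j \in \G_i,\; z_{ij} \geq \omega_i + s_j - 1 \;\forall (i,j) \in \Et,\; s_j \geq |x_j| \Bigr\}.
\end{equation*}
I would then solve this LP in closed form by eliminating variables in stages. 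Minimizing over the $z_{ij}$'s alone gives $z_{ij} = (\omega_i + s_j - 1)_+$. The resulting reduced objective $\sum_{(i,j) \in \Et}(\omega_i + s_j - 1)_+$ is coordinate-wise non-decreasing in each $s_j$ and each $\omega_i$, while the remaining constraints collapse to $|x_j| \leq s_j \leq \omega_i$ for every $(i,j) \in \Et$. Monotonicity therefore pushes $s_j$ down to $|x_j|$ and $\omega_i$ down to $\max_{j \in \G_i} |x_j| = \|\x_{\G_i}\|_\infty$; this choice is feasible precisely when $\x \in [-1,1]^p$ (otherwise the box constraints $\omega_i \leq 1$ are violated and the LP is infeasible, giving $\Omega_{\GG,s}(\x) = \infty$). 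Substituting yields the claimed $\sum_{(i,j) \in \Et}(\|\x_{\G_i}\|_\infty + |x_j| - 1)_+$, and the bound $\Omega_{\GG, s} \leq g_{\GG, s}^{\ast\ast}$ is immediate from Remark \ref{rmk: lower_bound} since $\Omega_{\GG,s}$ is a convex lower bound of $g_{\GG,s}$ while $g_{\GG,s}^{\ast\ast}$ is its tightest convex lower bound.

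The main technical point I would argue carefully is the validity of the sequential elimination: after removing $\z$, the reduced objective must decouple edge-by-edge and depend monotonically on $(\s, \omegabf)$ under the surviving constraints, so that coordinate-wise minimizers assemble into a joint minimizer. The rest of the argument is essentially bookkeeping, and the only place where the $\ell_\infty$ unit-box restriction enters is the feasibility of $\omega_i = \|\x_{\G_i}\|_\infty \leq 1$.
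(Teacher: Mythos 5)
Your proof follows the same route as the paper's: linearize the bilinear terms $\omega_i s_j$ via auxiliary variables $z_{ij} \geq \omega_i + s_j - 1$, pass to the LP relaxation of Proposition \ref{eq:conv_TU}, and identify the optimizers $s_j^\ast = |x_j|$, $\omega_i^\ast = \|\x_{\G_i}\|_\infty$, $z_{ij}^\ast = (\omega_i^\ast + s_j^\ast - 1)_+$. Your explicit monotonicity and coordinate-wise elimination argument rigorously justifies a step the paper merely asserts, but the approach is essentially identical.
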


By construction, the convex penalty proposed by Proposition \ref{prop:conv_grpLasso_sparse} is different from the sparse group lasso in \cite{simon2013sparse}. 

Analogous to the latent group norm, we can seek to convexify the sparsest set cover with sparsity within groups:

\begin{proposition}[Convexification] \label{prop:conv_grpLasso_sparse2}
The convex surrogate via Proposition \ref{eq:conv_TU} for $g_{\GG, s}(\x)$ with $\M = [-\Bbd, \I_p]$ (i.e., the group $\ell_0$-``norm'' with sparse groups) is given by
$$\Omega_{\GG, s}(\x) :=  \min_{\omegabf \in [0,1]^M} \{\sum_{(i,j) \in \Et} (\omega_i + |x_j| -1)_+ : \Bbd \omegabf \geq |\x| \}$$
for $\x \in [-1,1]^p$, and $g(\x)_{\GG, s}=\infty$ otherwise.
\end{proposition}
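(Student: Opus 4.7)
The plan is to fit $g_{\GG, s}(\x)$ into the linear template of Definition \ref{def:TU_penalty} by a standard product-linearization trick, then apply Proposition \ref{eq:conv_TU}, and finally project out the auxiliary variables. Using the identity $\omega_i \|\x_{\G_i}\|_0 = \sum_{j \in \G_i} \omega_i s_j = \sum_{(i,j) \in \Et} \omega_i s_j$, I introduce for each edge $(i,j) \in \Et$ a latent binary variable $z_{ij}$ subject to $z_{ij} \geq \omega_i + s_j - 1$ and $z_{ij} \geq 0$. Since $\omega_i, s_j \in \{0,1\}$ and the lifted objective minimizes $\sum z_{ij}$, at any optimum $z_{ij} = \max(0,\omega_i + s_j - 1) = \omega_i s_j$, so the lifted integer program coincides with $g_{\GG, s}(\x)$. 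Together with $\Bbd\omegabf \geq \s$ and $\mathds{1}_{\supp(\x)} = \s$, this fits the format of Definition \ref{def:TU_penalty} once the $z_{ij}$ are absorbed into the latent $\omegabf$-block (the weight vector has ones on the $z_{ij}$ coordinates and zeros elsewhere, $\e = 0$, $\cb = 0$).

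Applying Proposition \ref{eq:conv_TU} relaxes $\omegabf$, $\s$, $\z$ to the unit box and replaces $\mathds{1}_{\supp(\x)} = \s$ by $|\x| \leq \s$, producing a convex lower bound (per Remark \ref{rmk: lower_bound}; tightness is not claimed, since Proposition \ref{prop:SGL_notTU} rules it out). The $z_{ij}$ appear only in the objective and in their own defining inequalities, so they can be projected out: minimization forces $z_{ij} = (\omega_i + s_j - 1)_+$, and the relaxation $z_{ij} \in [0,1]$ is automatic because $\omega_i + s_j - 1 \leq 1$. The reduced LP therefore has objective $\sum_{(i,j) \in \Et}(\omega_i + s_j - 1)_+$ over $(\omegabf, \s) \in [0,1]^M \times [0,1]^p$ with the coupling constraints $|\x| \leq \s \leq \Bbd\omegabf$.

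To eliminate $\s$, observe that the hinge $(\cdot)_+$ is non-decreasing, so the reduced objective is coordinate-wise non-decreasing in each $s_j$. Hence, for fixed $\omegabf$, the optimum is attained at $s_j = |x_j|$, which is feasible whenever some $\s$ is (i.e., iff $\Bbd\omegabf \geq |\x|$). Substituting back yields the formula in Proposition \ref{prop:conv_grpLasso_sparse2}; for $\x \notin [-1,1]^p$, the constraint $|\x| \leq \s \leq \mathds{1}$ built into Proposition \ref{eq:conv_TU} already returns $+\infty$, matching the claim. I expect the only delicate step to be this final $\s$-projection: one has to argue that replacing the coupled sandwich $|\x| \leq \s \leq \Bbd\omegabf$ by the decoupled pair $\s = |\x|$ and $\Bbd\omegabf \geq |\x|$ preserves both the feasible $\omegabf$-set and the optimum, which is exactly what monotonicity of $(\cdot)_+$ supplies.
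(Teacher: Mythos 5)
Your proposal is correct and follows essentially the same route as the paper: the same linearization $z_{ij}\geq \omega_i+s_j-1$ of the bilinear terms, application of Proposition \ref{eq:conv_TU} as a (non-tight) convex surrogate, and elimination of $\z$ and $\s$ via $z_{ij}^\ast=(\omega_i+s_j-1)_+$ and $\s^\ast=|\x|$. Your explicit monotonicity argument for the $\s$-projection just spells out what the paper states tersely (the only nit is that the right-hand side for the lifted constraints is $\mathds{1}$ rather than $0$, which does not affect the argument).
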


\subsection{Sparse $G$-group cover}\label{sec:Sarse_grpCover}

In this section, we provide a more direct formulation to enforce sparsity both on the coefficients and the group level. If the true signal $\x^\natural$ we are seeking is a sparse signal covered by at most $G$ groups, it would make sense to look for the sparsest signal with a $G$-group cover that explains the data in \eqref{eq: proto}. This motivates the following natural penalty.

\begin{definition}[Sparse $G$-group cover]\label{def:sparse_group_cover}

\[ \small g_{\GG,G}(\x) := \min_{\omegabf \in \{0,1\}^p} \{ \1^T \s: \Bbd \omegabf \geq \s, \1^T \omegabf \leq G, \s= \mathds{1}_{\supp(\x)}\} \]
where  $\Bbd$ is the biadjacency matrix of the bipartite graph representation of $\GG$.
\end{definition}

If the actual number of active groups is not known, $G$ would be a parameter to tune. Note that $g_{\GG,G}$ is an extension of the minimal group cover penalty (c.f., Section \ref{sec:grp_cover}), where instead of looking for the signal with the smallest cover, we seek the sparsest signal that admit a cover with fewer than $G$ groups. $g_{\GG,G}$ is a TU penalty whenever $\widetilde{\Bbd}=\colvec{\Bbd \\ \1}$is TU \cite[Proposition 2.1]{nemhauser1999integer}, which is the case, for example, when $\Bbd$ is an interval matrix.
 
\begin{proposition}[Convexification] \label{prop:convexification}
When the group structure leads to a TU constraint matrix $\widetilde{\Bbd}$, the convex envelope of $g_{\GG,G}$ over the unit $\ell_\infty$-ball is
\[ g_{\GG,G}^{\ast \ast}(\x) =  \min_{\omegabf \in [0,1]^M} \{ \| \x \|_1 : \Bbd \omegabf  \geq |\x|,  \1^T \omegabf \leq G\}\]
for $\x \in [-1,1]^p$, and  $g_{\GG,G}^{\ast \ast}(\x) =\infty$ otherwise.
\end{proposition}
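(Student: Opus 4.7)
The plan is to recognize $g_{\GG,G}$ as an instance of the TU-penalty template in Definition \ref{def:TU_penalty} and then invoke Proposition \ref{eq:conv_TU}. Setting $\db=\mathbf{0}_M$ and $\e=\1_p$ makes the objective $\db^T\omegabf+\e^T\s$ equal $\1^T\s$, which on $\{0,1\}^p$ matches the $\ell_0$-count $\1^T\1_{\supp(\x)}$ appearing in Definition \ref{def:sparse_group_cover}. The two constraints $\Bbd\omegabf\ge\s$ and $\1^T\omegabf\le G$ can be stacked as $\M\betabf\le\cb$ with
\[
\M=\begin{bmatrix}-\Bbd & \I_p\\ \1^T & \mathbf{0}^T\end{bmatrix},\qquad \cb=\colvec{\mathbf{0}_p\\ G}.
\]

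Second, I would verify that $\M$ is TU whenever $\widetilde{\Bbd}=\colvec{\Bbd\\ \1^T}$ is. Three standard facts from \cite{nemhauser1999integer} are at my disposal: (i) multiplying any row or column of a TU matrix by $-1$ preserves TU; (ii) if $A$ is TU then so is $[A\;\;\I]$ when the identity has the same row-count as $A$; (iii) submatrices of TU matrices are TU. Negating the first $p$ rows of $\widetilde{\Bbd}$ yields a TU matrix whose first block is $-\Bbd$ and whose last row is $\1^T$; augmenting it on the right by $\I_{p+1}$ preserves TU by (ii); finally, $\M$ is exactly the submatrix obtained by dropping the last column of this augmentation, so (iii) settles the claim. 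This TU verification is where the real work sits; everything else reduces to bookkeeping.

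Third, I would apply Proposition \ref{eq:conv_TU} to obtain
\[
g_{\GG,G}^{\ast\ast}(\x)=\min_{\s\in[0,1]^p,\,\omegabf\in[0,1]^M}\bigl\{\1^T\s:\Bbd\omegabf\ge\s,\ \1^T\omegabf\le G,\ \s\ge|\x|\bigr\}
\]
on $\x\in[-1,1]^p$ (and $+\infty$ elsewhere), then simplify the inner minimization over $\s$. Since $\s$ enters the objective with positive coefficients and appears in the remaining constraints only via the lower bound $\s\ge|\x|$ and the relaxable $\Bbd\omegabf\ge\s$, the pointwise choice $\s=|\x|$ is optimal; it is feasible because $|\x|\in[0,1]^p$ whenever $\x\in[-1,1]^p$. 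Substituting back gives $\1^T\s=\|\x\|_1$ and converts $\Bbd\omegabf\ge\s$ into $\Bbd\omegabf\ge|\x|$, yielding precisely the LP in the statement.
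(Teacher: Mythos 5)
Your proposal is correct and follows the same route the paper takes (the paper omits an explicit proof, but its stated justification is exactly this: cast $g_{\GG,G}$ in the template of Definition~\ref{def:TU_penalty}, note via \cite[Proposition 2.1]{nemhauser1999integer} that the stacked constraint matrix is TU whenever $\widetilde{\Bbd}$ is, and apply Proposition~\ref{eq:conv_TU} before eliminating $\s=|\x|$). Your explicit verification of total unimodularity of $\M$ by negating rows, appending an identity, and passing to a submatrix is a sound and slightly more detailed rendering of that citation.
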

The resulting convex program thus combines the latent group lasso (c.f., Section \ref{sec:grp_cover}) with the $\ell_1$ norm and provides an alternative to the sparse group lasso in \cite{simon2013sparse}, for the overlapping groups case. In the supplementary material we provide a numerical illustration of its performance.

\subsection{Hierarchical model} \label{sec:Tree}
We study the hierarchical sparsity model, where the coefficients of $\x^\natural$ are organized over a tree $\T$, and the non-zero coefficients form a rooted connected subtree of $\T$ (cf., Figure \ref{fig:hierarchical}). This model is popular in image processing due to the natural structure of wavelet coefficients \cite{jenatton2011proximal, duarte2008wavelet, soni2011efficient}. We can  describe such a hierarchical model as a TU model:

\tikzstyle{notsel}=[circle,draw=black,fill=white,thick, minimum size=6pt, inner sep=0pt]
\tikzstyle{sel}=[circle,draw=black,fill=black,thick, minimum size=6pt, inner sep=0pt]

\begin{figure}
\centering
\begin{tabular}{cc} 
\begin{tikzpicture}[level distance=10mm, scale=.6, transform shape]
  \tikzstyle{level 1}=[sibling distance=15mm]
  \tikzstyle{level 2}=[sibling distance=10mm]
  \tikzstyle{level 3}=[sibling distance=5mm]
  \node[sel] {}
    child {node[sel] {} 
  		child{node[notsel] {}} 
		child{node[sel] {} 
			child{node[sel] {}} 
			child{node[sel] {}}}}
    child {node[notsel] {} 
		child{node[notsel] {}} 
		child{node[notsel] {}}};
\end{tikzpicture}
& \hspace{20pt} 
\begin{tikzpicture}[level distance=10mm, scale=.6, transform shape]
  \tikzstyle{level 1}=[sibling distance=15mm]
  \tikzstyle{level 2}=[sibling distance=10mm]
  \tikzstyle{level 3}=[sibling distance=5mm]
  \node[sel] {}
    child {node[notsel] {} 
  		child{node[notsel] {}} 
		child{node[sel] {} 
			child{node[sel] {}} 
			child{node[sel] {}}}}
    child {node[notsel] {} 
		child{node[notsel] {}} 
		child{node[notsel] {}}};
\end{tikzpicture}\\
\end{tabular}
\caption{\label{fig:hierarchical} \footnotesize Valid selection (left), {\em Invalid} selection (right) }
\end{figure}

\begin{definition}[Tree $\ell_0$-``norm'']
We define the penalty encoding the hierarchical model on $\x$ as
\[ g_{T,0}(\x) := \begin{cases}
\| \x \|_0 & \text{if $\Tb  \mathds{1}_{\supp(\x)} \geq 0$} \\
\infty & \text{otherwise}
\end{cases}\]
where $\Tb$ is the edge-node incidence matrix of the directed tree $\T$, i.e., $T_{li}=1$ and $T_{lj}=-1$ iff $e_l=(i,j)$ is an edge in $\T$. It encodes the constraint $s_{\text{parent}} \geq s_{\text{child}}$ for $\s = \mathds{1}_{\supp(\x)}$ over the tree. \end{definition}
This is indeed a TU model since each row of $\T$ contains at most two non-zero entries that sum up to zero \cite[Proposition 2.6]{nemhauser1999integer}.

\begin{proposition}(Convexification) \label{prop:conv_tree}
The convexification of the tree $\ell_0$-``norm'' over the unit $\ell_\infty$-ball is given by $$g^{\ast \ast}_{T,0}(\x)= \begin{cases}
\sum_{\G \in \GG_H} \| x_\G \|_\infty &\text{if $\x \in [-1,1]^p$} \\
 \infty &\text{otherwise}
\end{cases}$$
where the groups $\G \in \GG_{H}$ are defined as each node and all its descendants.
\end{proposition}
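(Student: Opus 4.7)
The plan is to cast $g_{T,0}$ into the TU penalty template of Definition \ref{def:TU_penalty} and then invoke Proposition \ref{eq:conv_TU} to read off the convex envelope as a linear program, which I will then solve in closed form.

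First, I would identify the TU penalty parameters. There are no latent variables, so I take $M=0$, $\db$ empty, $\e=\1_p$, $\cb = 0$, and $\M = -\Tb$, so that the constraint $\M \betabf \leq \cb$ becomes $\Tb \s \geq 0$, and the objective $\e^T \s = \1^T\mathds{1}_{\supp(\x)}$ reproduces $\|\x\|_0$ whenever the rooted-connected constraint is satisfied (and gives $+\infty$ otherwise, as desired). Total unimodularity of $-\Tb$ follows from the fact (already invoked in the excerpt after the definition of the tree $\ell_0$-``norm'') that each row of $\Tb$ has exactly one $+1$ and one $-1$ and hence sums to zero, so Proposition 2.6 of \cite{nemhauser1999integer} applies. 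Thus $g_{T,0}$ is a genuine TU penalty.

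Applying Proposition \ref{eq:conv_TU} then yields
\[
 g_{T,0}^{\ast\ast}(\x) \;=\; \min_{\s \in [0,1]^p} \bigl\{\, \1^T \s \,:\, \Tb \s \geq 0,\ \s \geq |\x| \,\bigr\}
\]
for $\x \in [-1,1]^p$, and $+\infty$ otherwise (the box constraint $\s \leq \1$ together with $\s \geq |\x|$ forces $\|\x\|_\infty \leq 1$).

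The remaining step is to evaluate this LP. The constraint $\Tb \s \geq 0$ is exactly $s_i \geq s_j$ whenever $i$ is the parent of $j$ in $\T$, and together with $s_i \geq |x_i|$ it becomes, at each node $i$, the requirement $s_i \geq \max\bigl(|x_i|,\, \max_{j \text{ child of } i} s_j\bigr)$. Since the objective is coordinatewise increasing in $\s$, I would argue by a bottom-up induction on the tree: at a leaf $i$ the minimizer is $s_i^\star = |x_i|$, and at an internal node $i$ with children $j_1,\dots,j_k$ the minimizer is $s_i^\star = \max\bigl(|x_i|, s_{j_1}^\star,\dots,s_{j_k}^\star\bigr)$, which by the induction hypothesis equals $\max_{j \in \G_i} |x_j| = \|\x_{\G_i}\|_\infty$, where $\G_i \in \GG_H$ is the subtree rooted at $i$. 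Summing over $i$ gives $\1^T \s^\star = \sum_{\G \in \GG_H} \|\x_\G\|_\infty$, which is the advertised formula.

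The argument is essentially mechanical once the TU template is in place; the only step requiring a touch of care is the bottom-up minimization, but monotonicity of the objective in $\s$ and of the feasibility constraints makes the induction immediate. The main conceptual point — and the ``obstacle'' that the TU framework dissolves — is simply verifying that optimizing the linear objective over the box-intersected polyhedron actually returns this closed-form expression; Lemma \ref{prop:integral_polyhedra} guarantees there is no gap between the integer and fractional problems, so the biconjugate inherits exactly the combinatorial value along the vertices.
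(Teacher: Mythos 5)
Your proposal is correct and follows essentially the same route as the paper: instantiate the TU template with $\M=-\Tb$, invoke Proposition \ref{eq:conv_TU} to obtain the LP $\min_{\s\in[0,1]^p}\{\1^T\s:\Tb\s\geq 0,\ \s\geq|\x|\}$, and solve it by the bottom-up (leaves-to-root) assignment $s_i^\star=\max_{k\in\G_i}|x_k|=\|\x_{\G_i}\|_\infty$. Your version is if anything slightly more careful, since you explicitly justify why the greedy bottom-up minimization is optimal (shrinking a child only relaxes the parent's constraint), a point the paper's proof leaves implicit.
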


\begin{figure}
\centering
\begin{tabular}{cc}
  \includegraphics[height= .1\textheight]{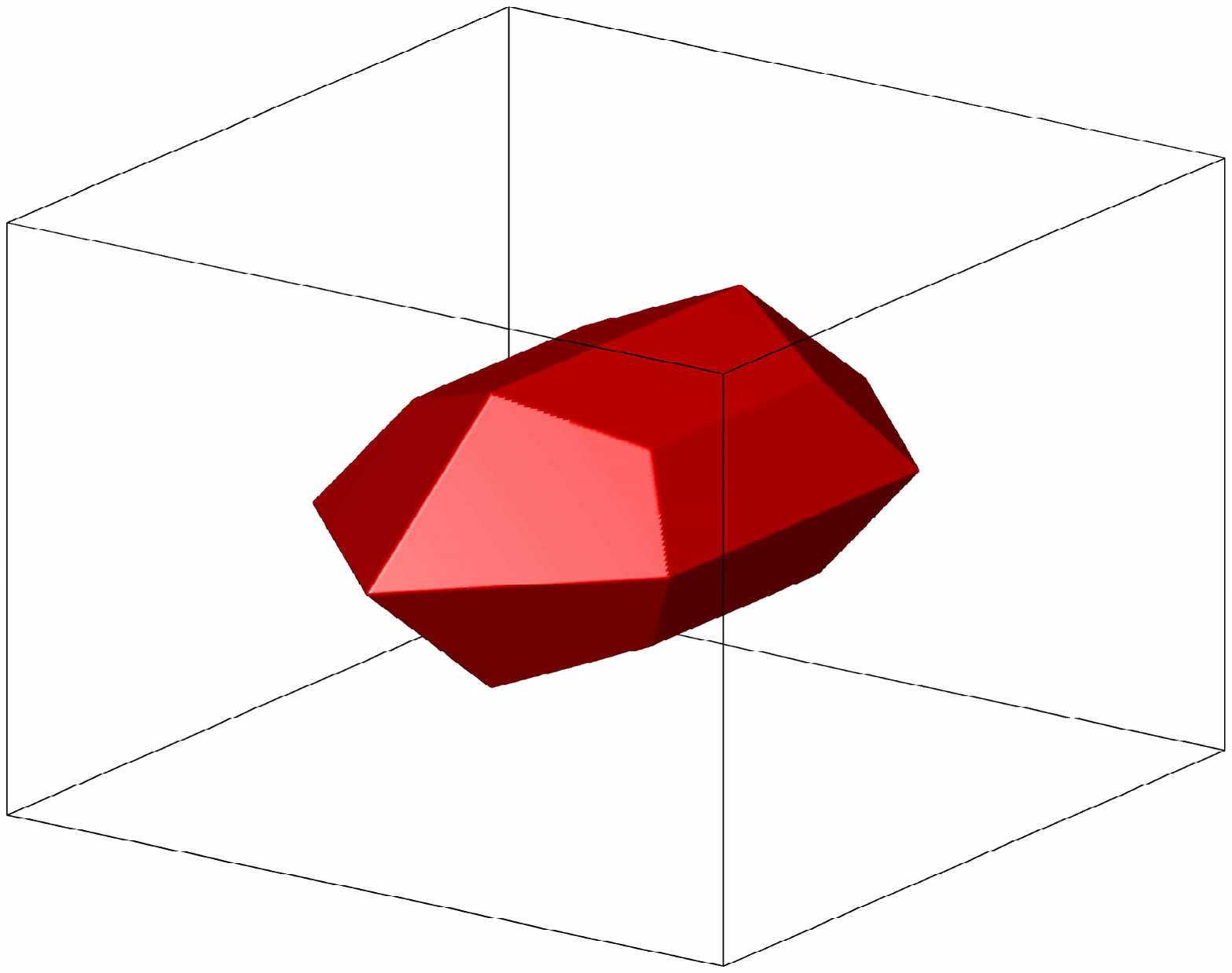} & \includegraphics[height= .1\textheight]{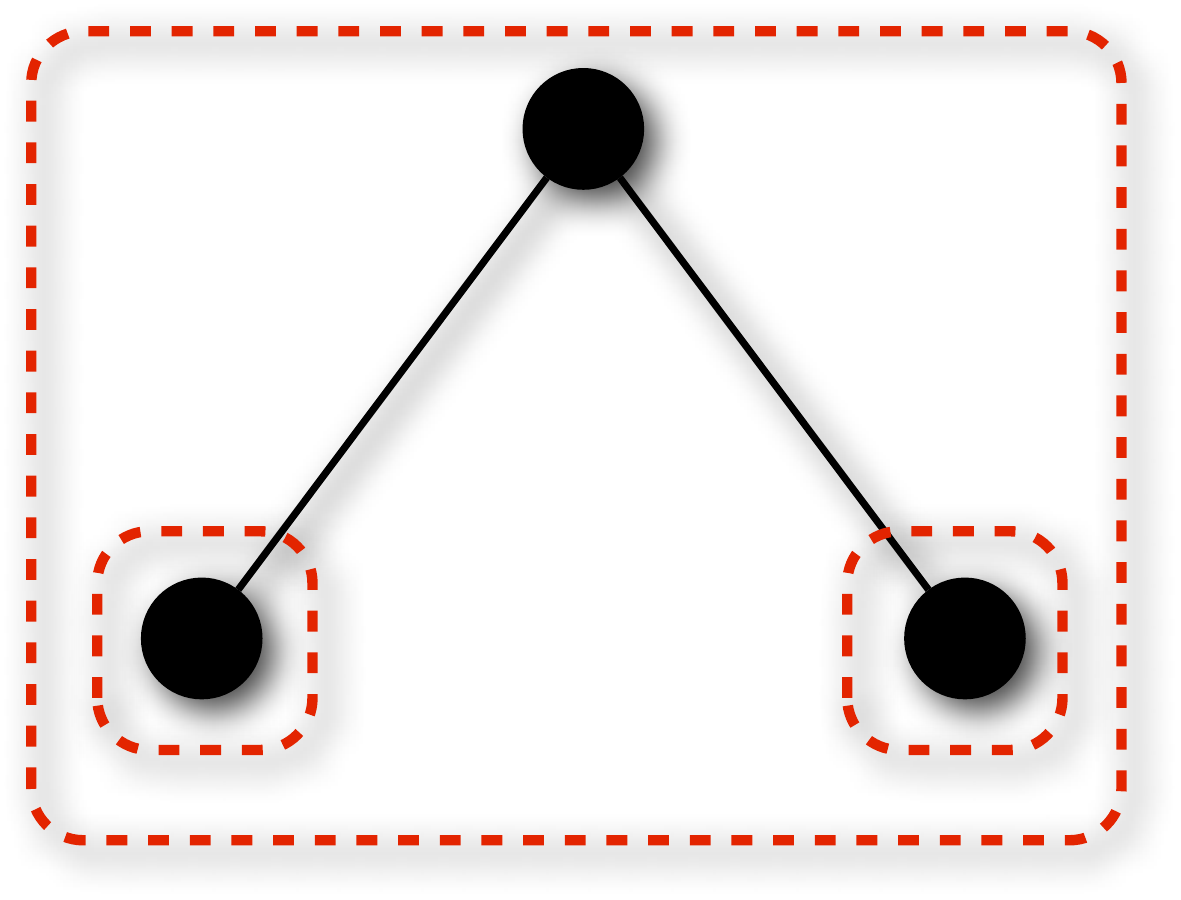}
\end{tabular}
\caption{\footnotesize Unit norm ball of $g^{\ast \ast}_{T,0}$,  $\GG_H=\{ \{1,2,3\}, \{ 2\},\{ 3\}\}$}
\end{figure}

Note that the resulting convex norm is the $\ell_\infty$-hierarchical group norm \cite{jenatton2011proximal}, which is a special case of $\ell_\infty$-group norm we studied in Section \ref{sec:group_count} as the convex envelope of $g_{\GG, \cap}(\x)$. In this sense, $g_{\GG, \cap}(\x)$ is equivalent to $g_{T,0}(\x)$, for the group structure $\GG_H$ (for unit weights).

%

\section{Dispersive sparsity models} \label{sec:Dispersive_Models}
The sparsity models we considered thus far encourage clustering. The implicit structure in these models is that coefficients within a group exhibit a positive, reinforcing correlation. Loosely speaking, if a coefficient within a group is important, so are the others. However, in many applications, the opposite behavior may be true. That is, sparse coefficients within a group compete against each other \cite{zhou2010exclusive, hegde2009compressive, gerstner2002spiking}. 

Hence, we describe models that encourage the dispersion of sparse coefficients. Here, dispersive models still inherit a known group structure $\GG$, which underlie their interactions in the opposite manner to the group models in Section 5.

\subsection{Group knapsack model}

One natural model for dispersiveness allows only a certain budget of coefficients, e.g., only one, to be selected in each group:
\vspace{-10pt} $$ \small F_{\Db}(\sss) = \begin{cases} 0 &\text{if $\sss = \emptyset$} \\1 &\text{if $\max_{\G \in \GG} |\sss \cap \G| \leq 1$} \\ \infty &\text{otherwise} \end{cases}$$
Whenever the group structure forms a \emph{partition} of $\Pt$, \cite{obozinski2012convex} shows that the positive homogeneous convex envelope of the $\ell_q$-regularized group knapsack model, i.e., of $\mu F_{\Db}(\supp(\x)) + \nu \| \x \|_q$, is the exclusive norm in \cite{zhou2010exclusive}. 

In what follows, we prove that $F_{\Db}(\sss)$ is a TU penalty whenever the group structure leads to a TU biadjacency matrix $\Bbd$ of the bipartite graph representation, which includes partition structures. We establish that the $\ell_\infty$-exclusive lasso, is actually the tightest convex relaxation of a more relaxed version of $F_{\Db}(\supp(\x))$ 
for any TU group structure, and not necessarily partition groups.

\begin{definition}[Group knapsack penalty] 
Given a group structure $\GG$ that leads to a TU biadjacency matrix $\Bbd$, $F_{\Db}(\supp(\x))$ can be written as the following TU penalty:
$$  g_{\Db}(\x) := \min_{\omega \in \{ 0,1\} } \{ \omega: \Bbd^T \mathds{1}_{\supp(\x)} \leq \omega \mathds{1}\} $$
if $\Bbd^T \mathds{1}_{\supp(\x)} \leq \mathds{1}$, and $g_{\Db}(\x)=\infty$ otherwise. 
\end{definition}
Note that if $\Bbd$ is TU, $\Bbd^T$ is also TU \cite[Proposition 2.1]{nemhauser1999integer}. Groups that form a partition of $\Pt$ are acyclic, thus the corresponding matrix $\Bbd$ is TU trivially  (cf., Remark \ref{rmk: Loopless_grps}).

Another important example of a TU group structure arises from the simple one-dimensional model of the neuronal signal suggested by \cite{hegde2009compressive}. In this model, neuronal signals are seen as a train of spike signals with some refractoriness period $\Delta \geq 0$, where the minimum distance between two non-zeros is $\Delta$. This structure corresponds to an interval matrix $\Bbd^T =  \Db$, which is TU \cite[Corollary 2.10]{nemhauser1999integer}.
 $$ \small \Db=\begin{bmatrix}
1 & 1 & \cdots & 1 & 1 & 0 & 0 & \cdots & 0 \vspace{0.1cm} \\
0 & 1 & 1& \cdots & 1 & 1 & 0 & \cdots & 0 \vspace{0.1cm} \\
& & & & \ddots  & & & \vspace{0.1cm} \\
0& \cdots & 0 & 0 & 1 & 1 & \cdots & 1 & 1
\end{bmatrix}_{(p - \Delta + 1) \times p}$$

\begin{proposition}[Convexification] \label{prop:conv_disp}
The convex envelope of $g_{\Db}(\x) $ over the unit $\ell_\infty$-ball when $\Bbd^T$ is a TU matrix is given by
$$  g_{\Db}^{\ast \ast}(\x) = \begin{cases} \max_{\G \in \GG} \| \x_\G\|_1 &\text{if  $\x \in [-1,1]^p, \Bbd^T |\x|\leq \mathds{1}$ } \\
\infty &\text{otherwise} \end{cases}$$
\end{proposition}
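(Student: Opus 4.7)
The approach is to instantiate the TU-penalty machinery from Definition \ref{def:TU_penalty} and Proposition \ref{eq:conv_TU}. First, I identify $g_{\Db}$ with the template, taking weight $\db = 1$, $\e = \boldsymbol 0$, a single latent variable $\omega \in \{0,1\}$ (so $M = 1$ in the template sense), and constraint matrix $\M = [-\1_{|\GG|}, \Bbd^T]$ with $\cb = \boldsymbol 0$, so that $\M \betabf \leq \cb$ encodes $\Bbd^T \s \leq \omega \1$. The hypothesis that $\Bbd^T$ is TU must be upgraded to TU of the augmented $\M$; this uses that transposition preserves TU (Nemhauser--Wolsey, Proposition 2.1), and, for the concrete group structures of interest in this section (partition groups and the interval/spike-train matrix $\Db$), one checks directly (via the Ghouila--Houri two-coloring criterion, or by noting the network-matrix structure in the interval case) that appending the constant column $-\1$ does not destroy TU.

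Once TU of $\M$ is established, Proposition \ref{eq:conv_TU} gives the envelope as the linear program
\begin{equation*}
g_{\Db}^{\ast\ast}(\x) \;=\; \min_{\s \in [0,1]^p,\; \omega \in [0,1]} \{\, \omega \,:\, \Bbd^T \s \leq \omega \1,\ \s \geq |\x|\,\}
\end{equation*}
for $\x \in [-1,1]^p$, and $+\infty$ otherwise. The remainder is an elementary LP calculation. I first eliminate $\omega$: for any feasible $\s$, the smallest admissible value is $\omega^\ast(\s) = \max_i (\Bbd^T \s)_i = \max_{\G \in \GG} \sum_{j \in \G} s_j$. This quantity is componentwise non-decreasing in $\s$, so the constraint $\s \geq |\x|$ (combined with $\s \leq \1$, automatically satisfied since $\x \in [-1,1]^p$) is tight at $\s^\ast = |\x|$. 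Substituting back gives $\omega^\ast = \max_{\G \in \GG} \|\x_\G\|_1$, which is admissible (i.e.\ $\leq 1$) exactly when $\Bbd^T |\x| \leq \1$; otherwise the LP is infeasible and the envelope is $+\infty$, reproducing the stated formula.

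The main obstacle is the TU verification in the first step: because $[-\1_{|\GG|}, \Bbd^T]$ is a mixed-sign matrix, standard column-adjunction lemmas for TU do not apply automatically, and one has to exploit the specific structure of $\Bbd$. For partition groups, $\Bbd^T$ has at most one nonzero per column and the augmented matrix is TU by inspection; for the interval/spike-train $\Db$, the consecutive-ones property plus the constant $-\1$ column fits the network-matrix characterization. Should a unified TU argument prove inconvenient, one can bypass Proposition \ref{eq:conv_TU} entirely and verify the formula directly by showing (i) $h(\x) := \max_{\G \in \GG} \|\x_\G\|_1$ is a convex lower bound of $g_{\Db}$ on $[-1,1]^p$ (using that the dispersive constraint forces at most one non-zero per group, so $\|\x_\G\|_1 \leq 1 = g_{\Db}(\x)$ when $\x \ne \boldsymbol 0$), and (ii) any convex lower bound of $g_{\Db}$ is pointwise dominated by $h$, by exhibiting $\x$ as a convex combination of feasible binary supports at which the bound $h(\x)$ is attained.
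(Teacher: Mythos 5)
Your derivation follows the same route as the paper's proof: the paper also invokes Proposition \ref{eq:conv_TU} to reduce the envelope to $\min\{\omega : \Bbd^T\s\le\omega\1,\ |\x|\le\s,\ \omega\in[0,1],\ \s\in[0,1]^p\}$ and then evaluates this LP to $\|\Bbd^T|\x|\|_\infty=\max_{\G\in\GG}\|\x_\G\|_1$, exactly as in your second paragraph. The genuine difference is your first step. The paper's proof opens with the bare assertion ``since this is a TU penalty,'' i.e., it takes for granted that the full constraint matrix $[-\1,\Bbd^T]$ is TU whenever $\Bbd^T$ is; you are right to be suspicious of this, and your worry is not merely hypothetical: for $\GG=\{\{1,2,3\},\{1,4\},\{2,5\},\{3,6\}\}$ the intersection graph is a tree, so $\Bbd$ is TU (cf.\ Remark \ref{rmk: Loopless_grps}), yet the $4\times4$ submatrix of $[-\1,\Bbd^T]$ formed by the $\omega$-column and the columns of variables $1,2,3$ has determinant $\pm2$. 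So identifying this as the weak point is a real contribution relative to the paper. That said, your proposed repairs fall short of the proposition as stated: checking TU only for partitions and for the interval matrix $\Db$ proves a narrower statement than ``whenever $\Bbd^T$ is TU,'' and your fallback (i)--(ii) is left as a sketch whose hard half (every convex minorant is dominated by $h$) is precisely where integrality must enter. The clean general fix is to observe that TU of the augmented matrix is not actually needed: the convex extension $f(\s)=\|\Bbd^T\s\|_\infty$ is positively homogeneous, so in verifying assumption A2 of Lemma \ref{prop:Main_Property} one can rescale any fractional maximizer onto the set $\{\s\in[0,1]^p:\Bbd^T\s\le\1\}$, and the only integrality required is that of $\max\{|\y|^T\s:\Bbd^T\s\le\1,\ \s\in[0,1]^p\}$, which follows from TU of $\Bbd^T$ alone via Lemma \ref{prop:integral_polyhedra}. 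With that substitution your argument proves the proposition in full generality; as written, it is correct in approach but either narrows the hypothesis or leaves the key integrality step unproved.
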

Notice that the convexification of $g_{\Db}$ is not exactly the exclusive lasso; it has an additional budget constraint $\Bbd^T |\x|\leq \mathds{1}$. Thus in this case, regularizing with the $\ell_q$-norm before convexifying lead to the loss of part of the structure. In fact, the exclusive norm is actually the convexification of a more relaxed version of $g_{\Db}$, where the constraint $\omegabf \in \{0,1\}$ is relaxed to $\omegabf \in \mathbb{Z}, \omegabf \geq 0$.

\subsection{Sparse group knapsack model} \label{sec: sparse_dispersive}
In some applications, it may be desirable to seek the sparsest signal satisfying the dispersive structure. This can be achieved by incorporating sparsity into the group knapsack penalty, resulting in the following TU penalty.
\begin{definition}[Dispersive $\ell_0$-``norm''] 
Given a group structure $\GG$ that leads to a TU biadjacency matrix $\Bbd$, we define the penalty encoding the sparse group knapsack model on $\x$ as 
\[ g_{\Db,0}(\x) := \begin{cases}
\| \x \|_0 & \text{if $\Bbd^T   \mathds{1}_{\supp(\x)} \leq  \mathds{1}$} \\
\infty & \text{otherwise}
\end{cases}\]
\end{definition}
We can compute the convex envelope of $g_{\Db,0}(\x)$ in a similar fashion to Proposition \ref{prop:conv_disp}.
\begin{proposition}(Convexification)
The convexification of the dispersive $\ell_0$-``norm'' over the unit $\ell_\infty$-ball is given by $$g^{\ast \ast}_{D,0}(\x)= \begin{cases}
{ \| \x \|_1} &\text{if $\x \in [-1,1]^p, \Bbd^T |\x |\leq \mathds{1} $} \\
 \infty &\text{otherwise}
\end{cases}$$
\end{proposition}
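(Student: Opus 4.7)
The plan is to recognize $g_{\Db,0}$ as a TU penalty in the sense of Definition~\ref{def:TU_penalty} and then invoke Proposition~\ref{eq:conv_TU}. Concretely, I would take no latent variables ($M=0$ in the template), set the weight vector $\e = \1_p$ on the support indicators, and encode the dispersive constraint through $\M = \Bbd^T$ with $\cb = \1$. Under this choice, the feasible integer problem is $\min_{\s \in \{0,1\}^p}\{\1^T\s : \Bbd^T\s \le \1,\ \s = \mathds{1}_{\supp(\x)}\}$, whose objective evaluates to $\|\x\|_0$ and whose feasibility coincides with $\Bbd^T \mathds{1}_{\supp(\x)} \le \1$; thus $g_{\Db,0}$ is exactly represented. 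Since $\Bbd$ is TU by hypothesis, so is $\Bbd^T$ (by \cite[Proposition 2.1]{nemhauser1999integer}), confirming the TU penalty structure.

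Applying Proposition~\ref{eq:conv_TU} immediately gives
\begin{equation*}
g_{\Db,0}^{\ast\ast}(\x) \;=\; \min_{\s \in [0,1]^p} \bigl\{\, \1^T \s \;:\; \Bbd^T \s \le \1,\ |\x| \le \s \,\bigr\}
\end{equation*}
for feasible $\x$, and $+\infty$ otherwise. The remaining step is to solve this LP in closed form. Since the objective $\1^T\s$ is coordinatewise increasing and the only lower bound on $\s$ is $\s \ge |\x|$, the minimizer is $\s^\star = |\x|$, yielding $g_{\Db,0}^{\ast\ast}(\x) = \1^T|\x| = \|\x\|_1$ whenever $\s^\star$ is admissible.

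Finally, I would check the two feasibility conditions: $\s^\star = |\x| \in [0,1]^p$ requires $\x \in [-1,1]^p$, and $\Bbd^T \s^\star \le \1$ requires $\Bbd^T |\x| \le \1$. Conversely, if either fails, no $\s$ is feasible — if $\|\x\|_\infty > 1$ the constraint $\s \ge |\x|$ is incompatible with $\s \le \1$, and if $(\Bbd^T|\x|)_j > 1$ for some $j$ then, using $\Bbd \ge 0$ and $\s \ge |\x| \ge 0$, we get $(\Bbd^T\s)_j \ge (\Bbd^T|\x|)_j > 1$, violating the group constraint. This gives $g_{\Db,0}^{\ast\ast}(\x) = \infty$ in the complementary region and completes the proof. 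There is no real obstacle here: once the TU representation is identified, the result is a direct corollary of Proposition~\ref{eq:conv_TU}, and the mildly delicate point is merely tracking that $\Bbd^T$ inherits total unimodularity from $\Bbd$.
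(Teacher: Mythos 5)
Your proof is correct and follows essentially the same route as the paper: the paper also treats $g_{\Db,0}$ as a TU penalty (noting the computation is ``in a similar fashion to Proposition~\ref{prop:conv_disp}''), invokes Proposition~\ref{eq:conv_TU} to obtain the LP $\min_{\s \in [0,1]^p}\{\1^T\s : \Bbd^T\s \le \1,\ |\x| \le \s\}$, and solves it by taking $\s^\star = |\x|$. Your explicit verification of the infeasibility direction (using $\Bbd \ge 0$) is a detail the paper leaves implicit, but there is no substantive difference in approach.
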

It is worth mentioning that regularizing with the $\ell_q$-norm here loses the underlying dispersive structure. In fact, the positively homogeneous convex envelope of $\mu g_{\Db,0}(\x) + \lambda \| \x \|_q $ is given by the dual (cf., Section \ref{sec:lp-reg-models}) of $$\Omega^\ast_q(\y)= \max_{\s \in \{ 0,1\}^p,\s \neq 0, \Bbd^T \s \leq \mathds{1}} \frac{\| \y_{\supp(\s)}\|_q  }{(\mathds{1}^T\s)^{1/q}} $$
which is simply the $\ell_1$-norm. To see this, note that $\Omega^\ast_a(\y)^q =\| \y \|_\infty$, since $\frac{\sum_{i \in \sss} | y_i|^q}{|\sss|} \leq \frac{|\sss| \| \y \|_\infty^q}{|\sss|}, \forall \sss \subseteq \Pt$ which is achieved with equality by choosing the vector $\s$ having ones where $\y$ is maximal, and zeros elsewhere. Note that this vector satisfies $\Bbd^T \s \leq \mathds{1}$.
As a result, the regularized convexification boils down to the $\ell_1$-norm, since $\Omega_q(\x) = \sup_{\Omega^\ast_q(\y) \leq 1} \x^T\y  = \| \x\|_1$, while the direct convexification is not even a norm (cf., Figure \ref{fig:disp-ball}). 

We illustrate the effect of this loss of structure via a numerical example in Section \ref{sec: simulation}.

\begin{figure}
\centering
\begin{tabular}{ccc}
	\includegraphics[height= 0.08\textheight]{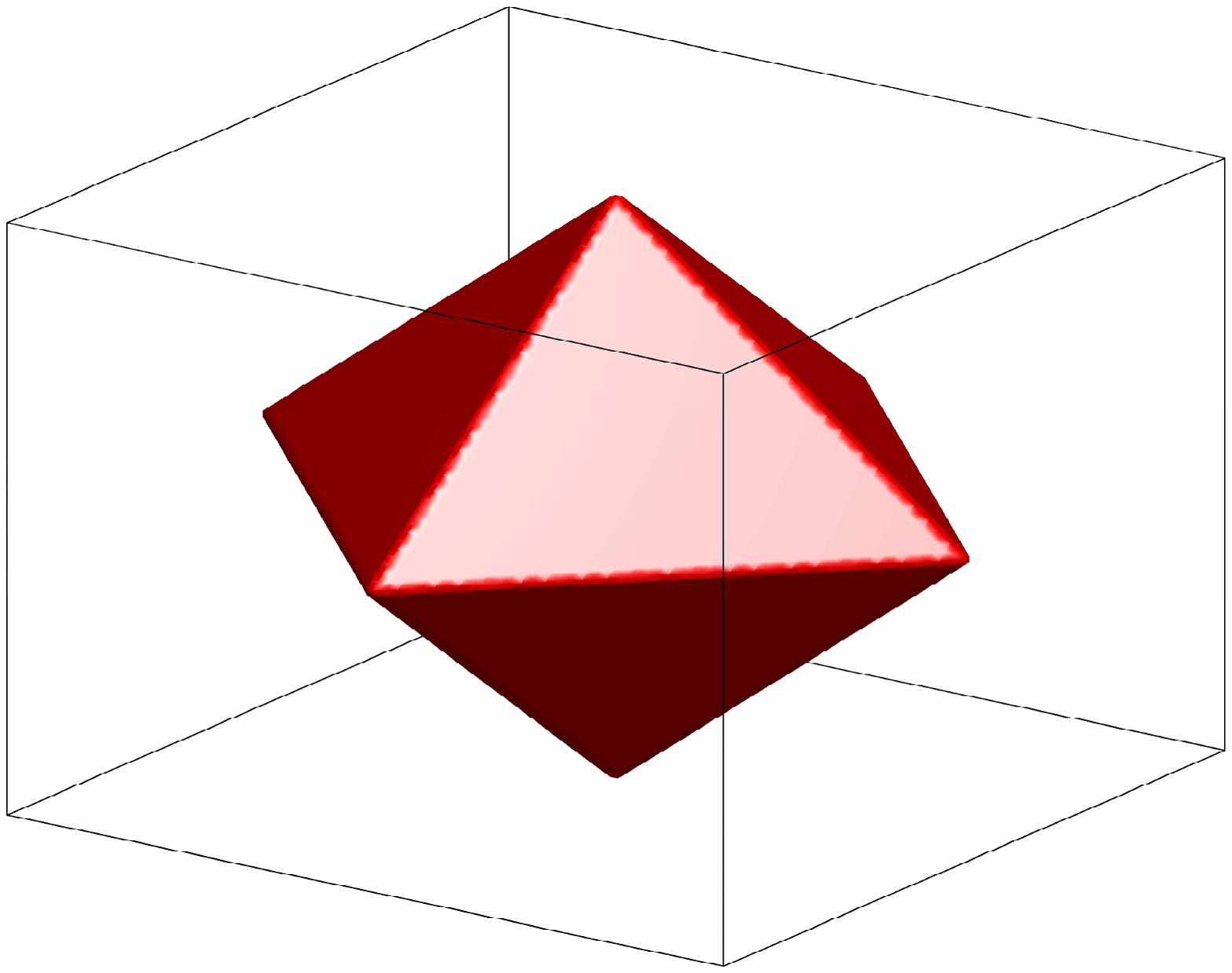} & 
	\includegraphics[height= 0.08\textheight]{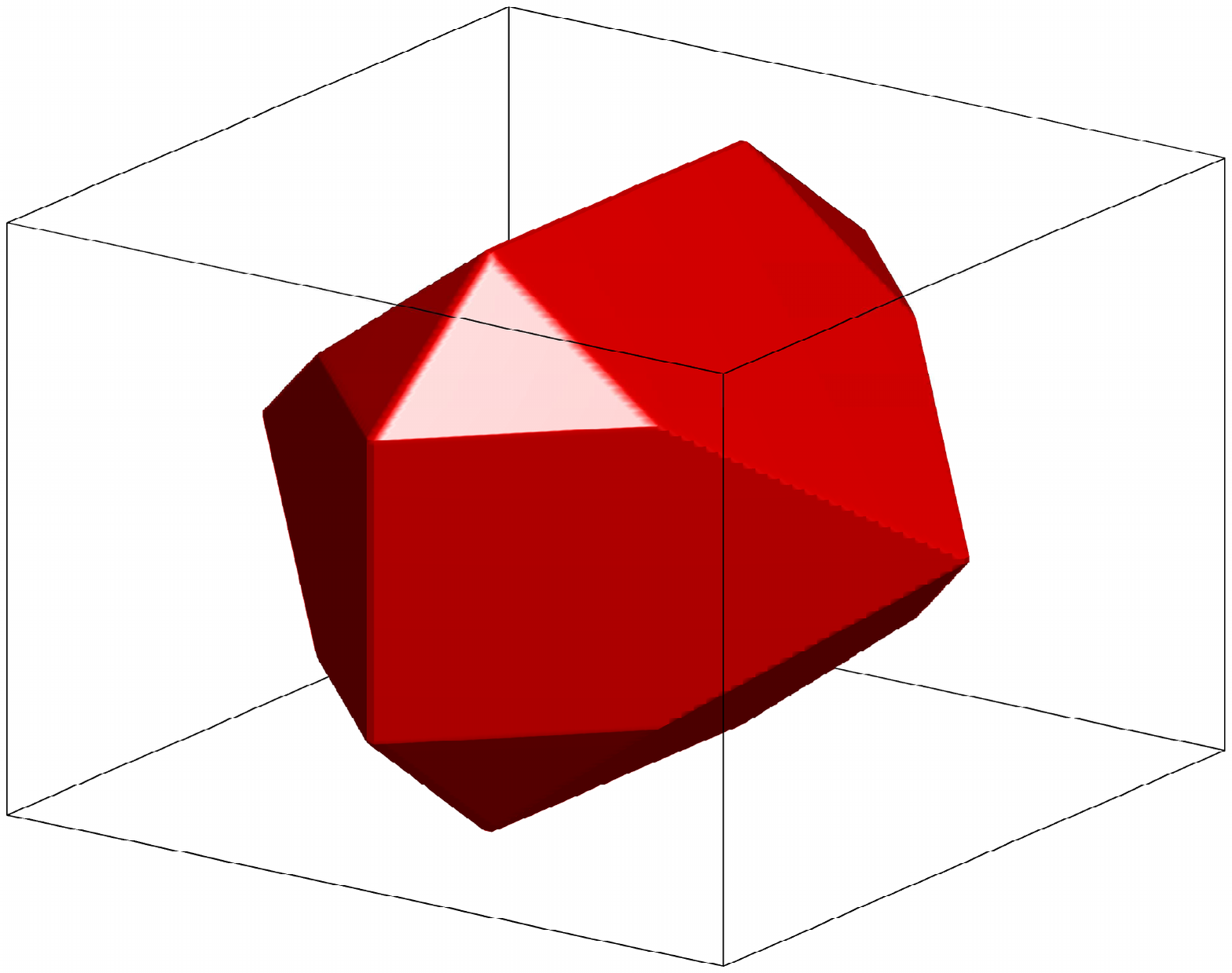} &     
	\includegraphics[height= 0.08\textheight]{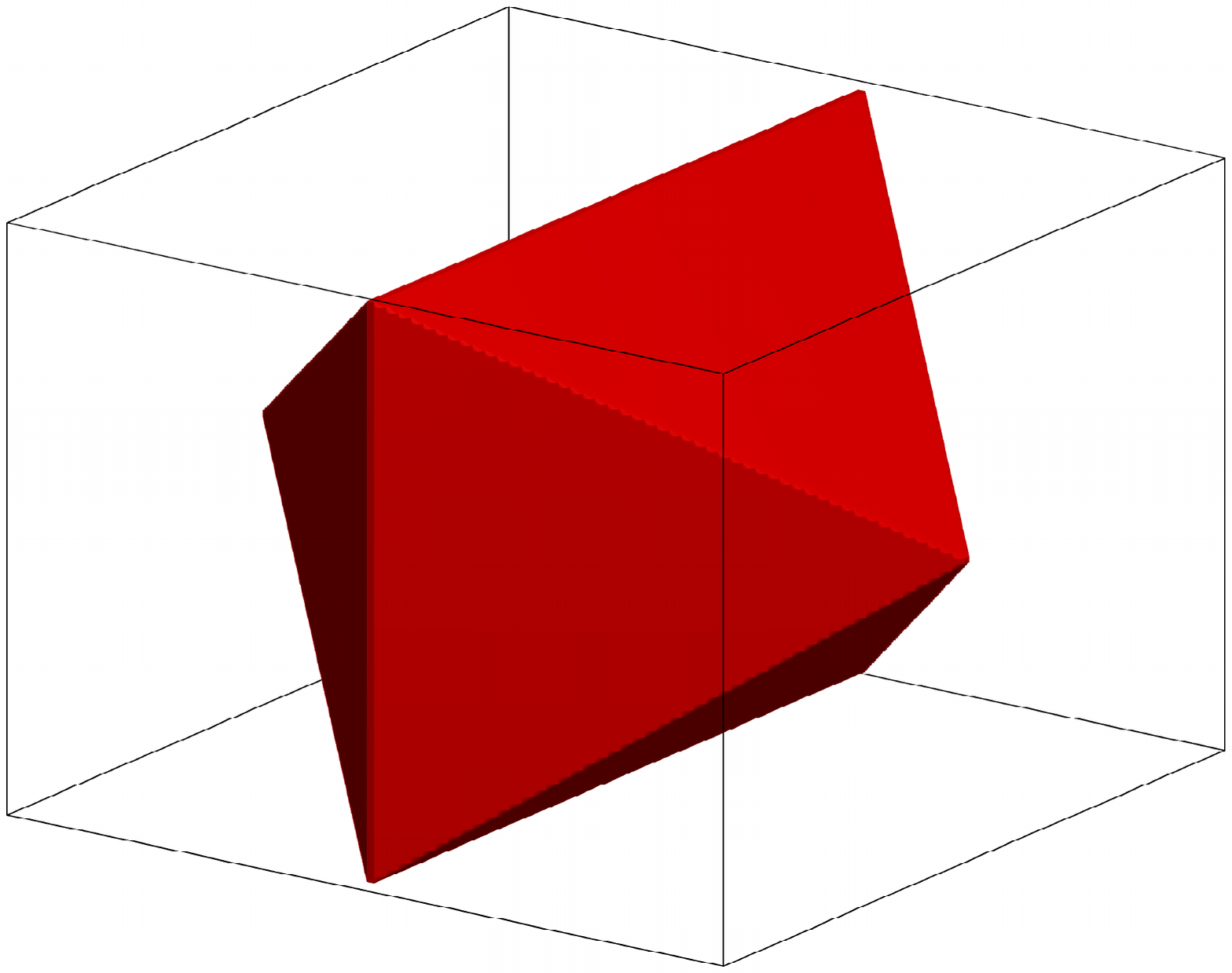} 
\end{tabular}
\caption{ \label{fig:disp-ball} $g^{\ast \ast}_{D,0}(\x)\le 1$ {(left)} $g^{\ast \ast}_{D,0}(\x)\le 1.5$  {(middle)} $g^{\ast \ast}_{D,0}(\x)\le 2$ {(right)} for $\GG=\{ \{1,2\}, \{ 2,3\}\}$}
\end{figure}

\subsection{Graph dispersiveness} \label{sec:dispersive_pairwise}

In this section, we illustrate that our framework is not limited to linear costs, by considering a pairwise dispersive model. We assume that the parameter structure is encoded on a known graph $G(\Pt , \Et )$, where coefficients connected by an edge are discouraged from being on simultaneously.

\begin{definition}[Pairwise dispersive penalty]
Given a graph $G(\Pt , \Et )$ with a TU edge-node incidence matrix $\E_G$ (e.g., bipartite graph), we define the penalty encoding the pairwise dispersive model as
$$ g_{\G, \D}(\x) = \sum_{(i,j) \in \Et} s_i s_j ~\text{  where } ~\s=\mathds{1}_{\supp(\x)}$$
\end{definition}
Note that this function is not submodular; in fact, $g_{\G, \D}(\x)$ is a \emph{supermodular} function.
\begin{proposition}[Convexification] \label{prop:conv_disp_pair}
The convex envelope of $g_{\G, \D}(\x)$ over the unit $\ell_\infty$-ball is
$$ g_{\G, \D}^{\ast \ast}(\x) = \begin{cases} \sum_{(i,j) \in \Et} (|x_i| + |x_j| -1)_+  &\text{if  $\x \in [-1,1]^p$ } \\
\infty &\text{otherwise} \end{cases}$$
\end{proposition}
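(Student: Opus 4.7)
My plan is to recast $g_{\G,\D}$ as a linear TU penalty via a McCormick-style lift, and then invoke Proposition~\ref{eq:conv_TU} directly. The remark after Definition~\ref{def:TU_penalty} is precisely what licenses this reduction from a quadratic to a linear cost.

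First, introduce an auxiliary variable $\omega_e \in \{0,1\}$ for each edge $e=(i,j) \in \Et$, and observe that for $s_i,s_j \in \{0,1\}$ we have the identity $s_i s_j = (s_i+s_j-1)_+$. Minimizing $\omega_e$ subject to $\omega_e \geq s_i+s_j-1$ and $\omega_e \geq 0$ therefore returns $s_i s_j$ at integer $\s$, so
\[ g_{\G,\D}(\x) \;=\; \min_{\omegabf \in \{0,1\}^{|\Et|}} \Bigl\{\, \1^T\omegabf \;:\; \omega_e \geq s_i + s_j - 1\ \forall (i,j) \in \Et,\ \s = \1_{\supp(\x)} \Bigr\}. \]
Rewriting the hard constraints as $-\omega_e + s_i + s_j \leq 1$ and stacking with box inequalities in $\betabf = \colvec{\omegabf \\ \s}$, the substantive block of the constraint matrix $\M$ becomes $[-\I_{|\Et|},\ \E_G]$, adjoined to identity rows and columns that enforce $\omegabf \in [0,1]^{|\Et|}$ and $\s \in [0,1]^p$. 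The right-hand side $\cb$ takes values in $\{0,1\}$, hence is integral.

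By hypothesis $\E_G$ is TU. Adjoining columns or rows that contain a single $\pm 1$ entry preserves total unimodularity \cite[Proposition~2.1]{nemhauser1999integer}, so $\M$ is TU. Thus $g_{\G,\D}$ satisfies Definition~\ref{def:TU_penalty} with weights $\db = \1$, $\e = 0$, and Proposition~\ref{eq:conv_TU} yields
\[ g_{\G,\D}^{\ast\ast}(\x) \;=\; \min_{\substack{\s \in [0,1]^p,\ \omegabf \in [0,1]^{|\Et|} \\ \omegabf \geq \E_G\s - \1,\ \s \geq |\x|}} \1^T\omegabf \]
for $\x \in [-1,1]^p$, and $+\infty$ otherwise. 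For any fixed $\s$ the inner minimum is attained at $\omega_e = (s_i+s_j-1)_+ \in [0,1]$, so the objective collapses to $f(\s) := \sum_{(i,j)\in\Et} (s_i+s_j-1)_+$. Each summand is nondecreasing in each $s_i$, hence the outer minimum over $\s \geq |\x|$ is attained at $\s = |\x|$, producing the advertised formula $\sum_{(i,j)\in\Et}(|x_i|+|x_j|-1)_+$.

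The only delicate ingredient is the TU verification of $\M$, which genuinely needs the bipartite/TU hypothesis on $\E_G$: for a graph containing an odd cycle the incidence matrix has a $3\times 3$ submatrix of determinant $\pm 2$, the lifted LP acquires fractional vertices, and the hinge-sum would cease to be the tight envelope. Everything else in the argument is mechanical bookkeeping on top of Proposition~\ref{eq:conv_TU}.
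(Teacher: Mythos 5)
Your proposal is correct and follows essentially the same route as the paper: the same linearization of $s_i s_j$ via auxiliary edge variables constrained by $z_{ij} \geq s_i + s_j - 1$, followed by an application of Proposition~\ref{eq:conv_TU} and evaluation of the resulting LP at $\s = |\x|$. Your explicit verification that the lifted constraint matrix $[-\I_{|\Et|},\ \E_G]$ remains TU, and the monotonicity argument for why the outer minimum sits at $\s = |\x|$, are details the paper leaves implicit but are welcome additions.
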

\begin{proof}
 We use the linearization trick employed in \cite{kaminski2008quadratic} to reduce $g_{\G, \D}(\x)$ to a TU penalty. Let $\s= \mathds{1}_{\supp(\x)}$,
\begin{align*}
g_{\G, \D}(\x) &= \sum_{(i,j) \in \Et} s_i s_j \\
&= \min_{\z \in \{ 0,1\}^{|\Et|}} \{ \sum_{(i,j) \in \Et} z_{ij}: z_{ij} \geq s_i + s_j -1 \} \\ 
&= \min_{\z \in \{ 0,1\}^{|\Et|}} \{ \sum_{(i,j) \in \Et} z_{ij}: \E_G \s \leq \z - \mathds{1} \}
\end{align*} 
Now we can apply Proposition \ref{eq:conv_TU} to compute the convex envelope. The resulting convexification is again \emph{not a norm} (c.f., Figure \ref{fig:graph-disp-ball}).
\end{proof}

\begin{figure}
\centering
\begin{tabular}{cc}

	\includegraphics[height= 0.09\textheight]{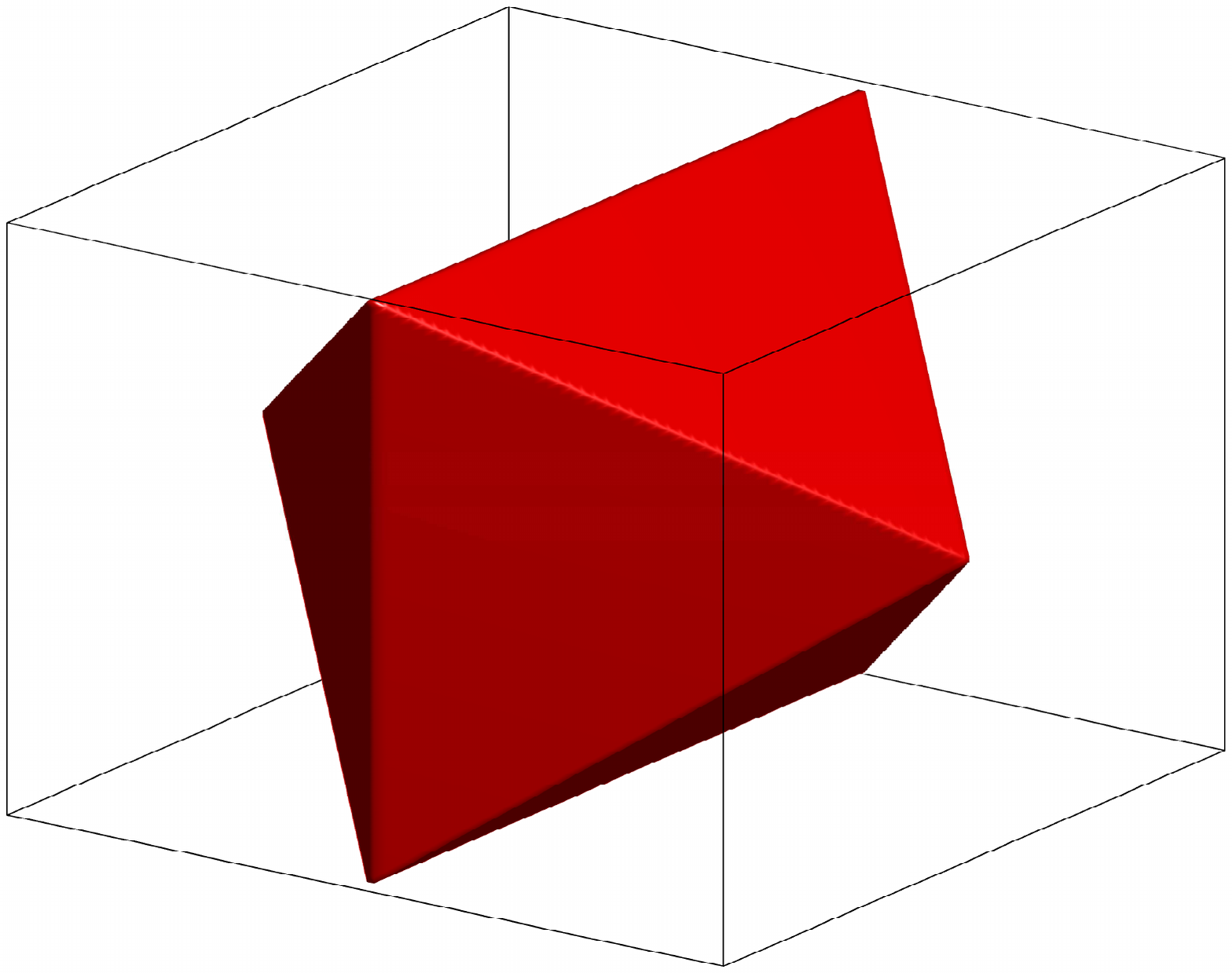}&
	\includegraphics[height= 0.09\textheight]{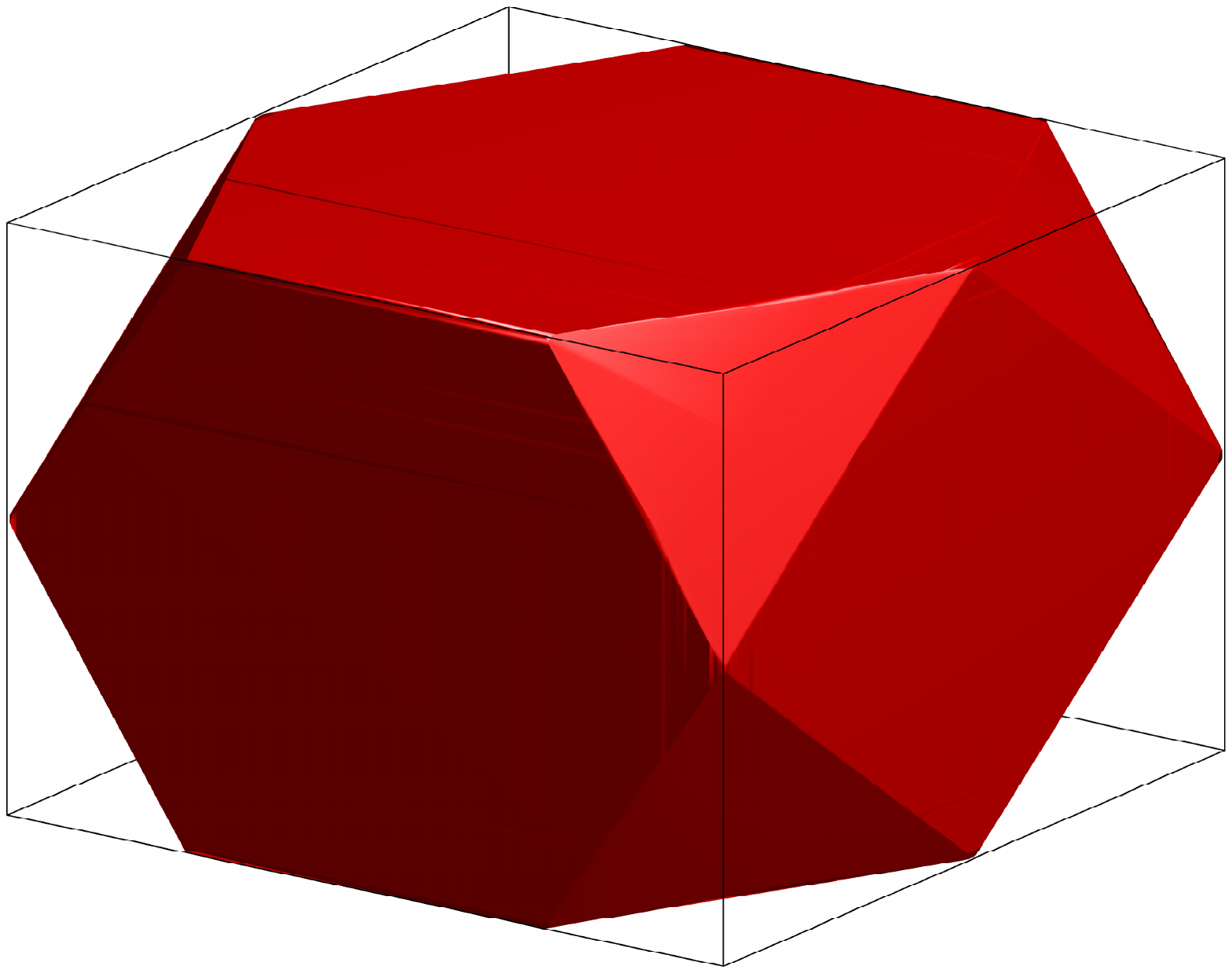}  
 
\end{tabular}
\caption{\label{fig:graph-disp-ball} $g_{\G, \D}^{\ast \ast}(\x) = 0$  {(left)} $g_{\G, \D}^{\ast \ast}(\x) \le 1$ {(right)} for $\Et=\{ \{1,2\}, \{ 2,3\}\}$ (chain graph)}
\end{figure}


\vspace{-10pt}
\section{Numerical illustration} \label{sec: simulation}
\begin{figure} 
\centering
\vspace{-70pt}
\includegraphics[scale=.34]{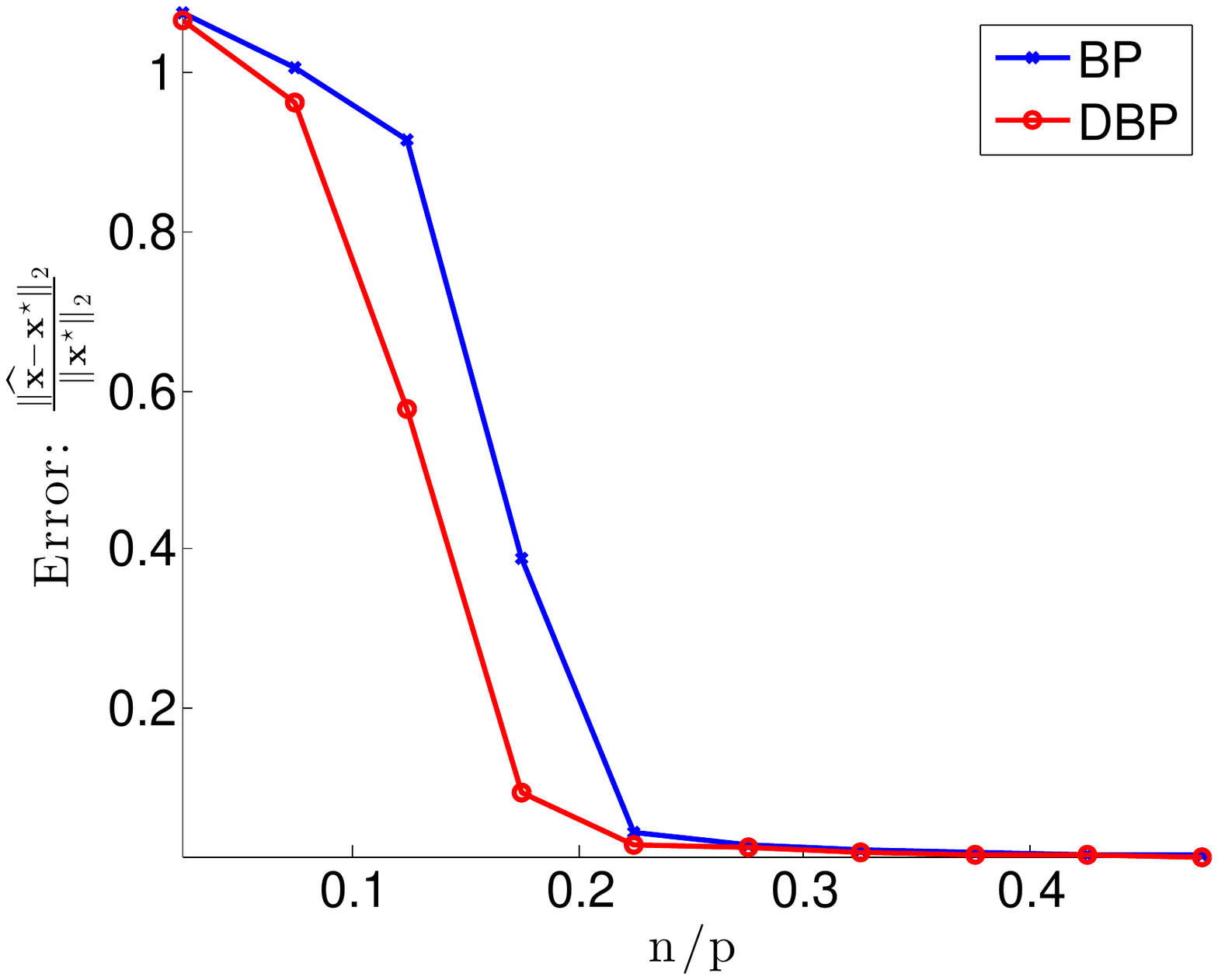}
\vspace{-70pt}
\caption{\sl Recovery error of BP and DBP}\label{fig:BPvsDBP}
\end{figure}
In this section, we show the impact of convexifying two different simplicity objectives under the same dispersive structural assumptions. Specifically, we consider minimizing the convex envelope of the $\ell_q$-regularized dispersive $\ell_0$-``norm''  \cite{obozinski2012convex} versus its convex envelope without regularization over the unit $\ell_\infty$-ball in Section \ref{sec: sparse_dispersive}. To produce the recovery results in Figure \ref{fig:BPvsDBP},  we generate a train of spikes of equal value for $\x^\natural$ in dimensions $p=200$ with a refractoriness of $\Delta=25$ (cf., Figure \ref{fig: example recon}). We then recover $\x^\natural$ from its compressive measurements $\y = \A \x^\natural + \w$, where the noise $\w$ is also a sparse vector, with $15$ non-zero Gaussian values of variance $\sigma=0.01$ and $\A$ is a random column normalized Gaussian matrix. Since the noise is sparse, we encode the data via $\| \y - \A \x\|_1 \le \|\w\|_1$ using the true $\ell_1$-norm of the noise. We produce the data randomly 20 times and report the averaged results. 

Figure \ref{fig:BPvsDBP} measures the relative recovery error with $\frac{\| \x^\natural  -  \hat{\x} \|_2}{\| \x^\natural \|_2 }$, as we vary the number of compressive measurements. The regularized convexification simply leads  to Basis Pursuit formulation (BP), while the TU convexification results in the addition of a budget constraint $\Bbd^T|\x|\leq \mathds{1}$ to the BP formulation, as described in Section \ref{sec: sparse_dispersive}. We refer to the resulting criteria as Dispersive Basis Pursuit (DBP). Since the DBP criteria uses the fact that $\x^\natural$ lies in the unit $\ell_\infty$-ball, we include this constraint in the BP formulation for fairness. We use an interior point method to obtain high accuracy solutions to each formulation. 

Figure \ref{fig:BPvsDBP} shows that DBP outperforms BP as we vary the number of measurements. Note that the number of measurements needed to achieve a certain error is expected to be lower for DBP than BP, as theoretically characterized in \cite{hegde2009compressive}. Hence, by changing the objective in the convexification, Figure \ref{fig: example recon} reinforces the message that we can lose the tightness in capturing certain structured sparsity models. 
\begin{figure}
\centering
\vspace{-30pt}
\begin{tabular}{c c c} \hspace{-20pt}
\includegraphics[scale=.15]{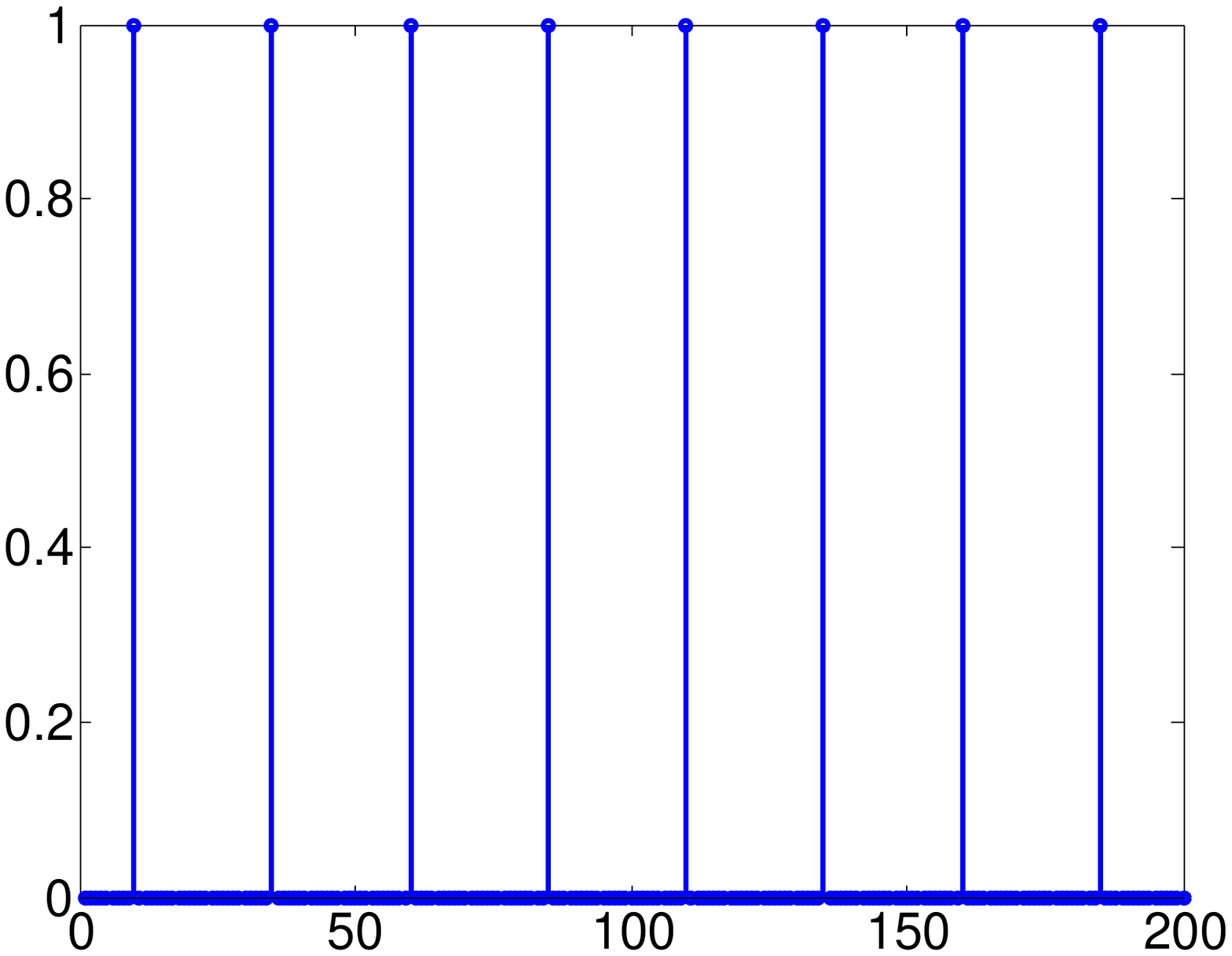} & \hspace{-10mm}
 \includegraphics[scale=.15]{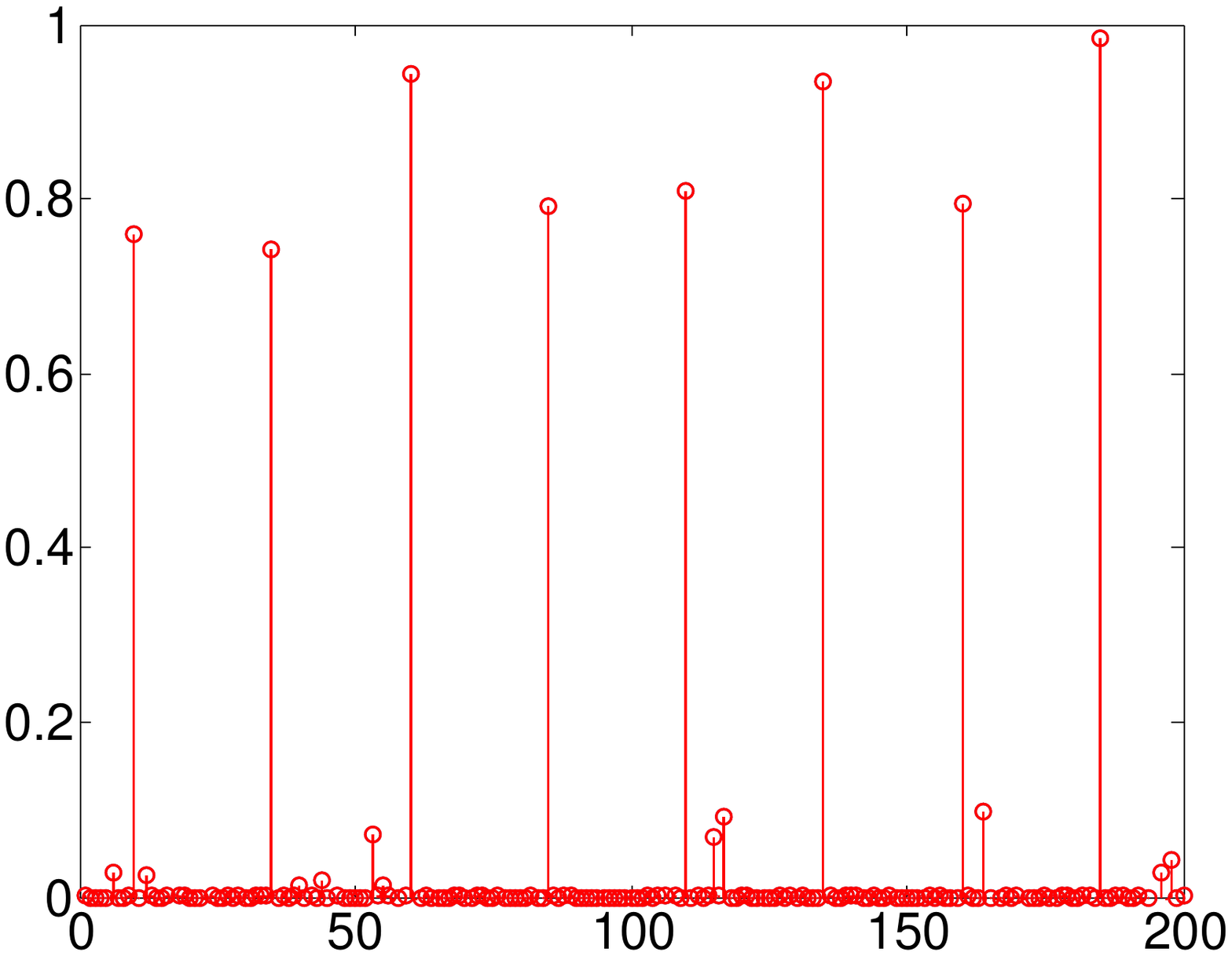} &\hspace{-10mm}
  \includegraphics[scale=.15]{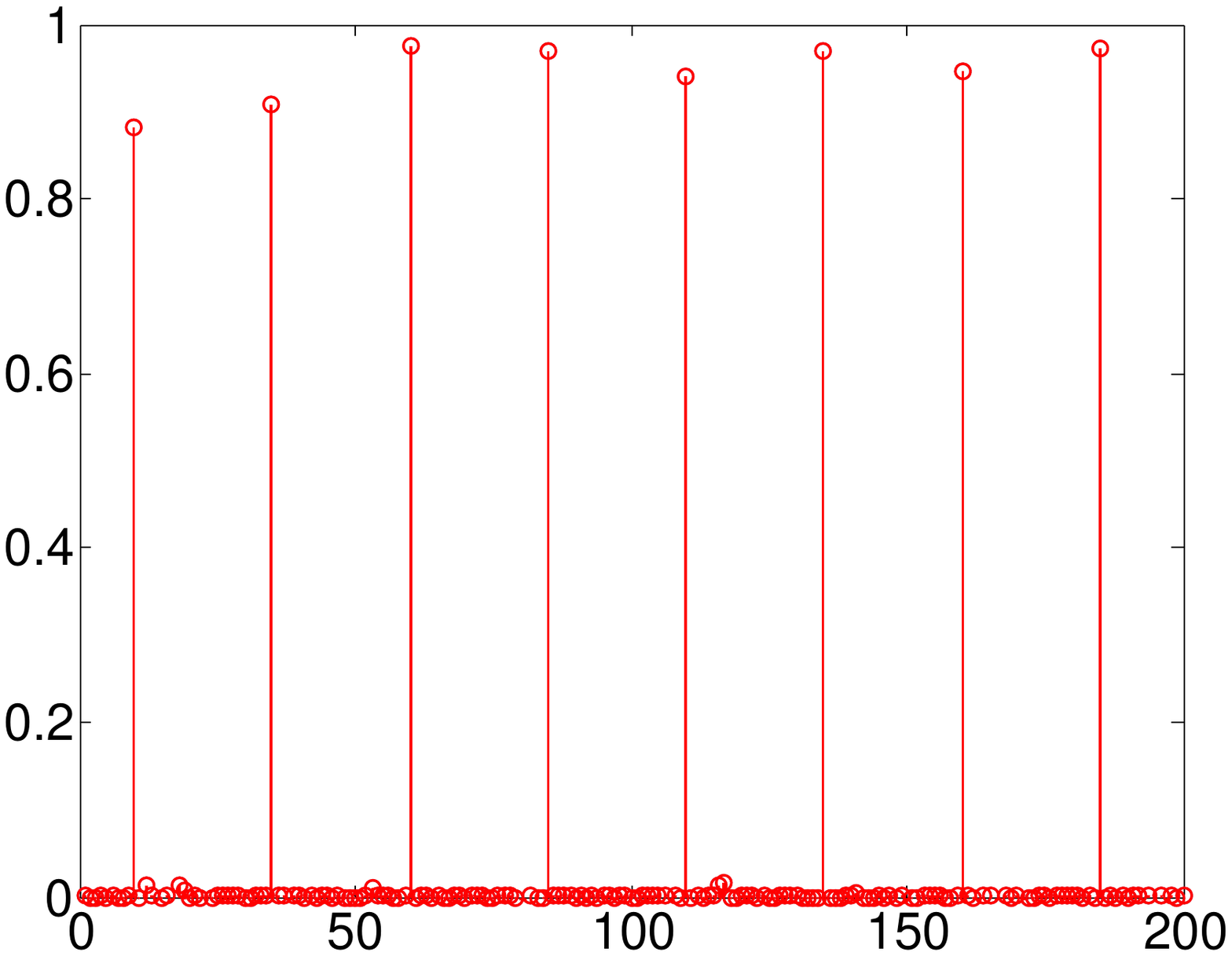}\\[-10mm]
  \hspace{-6mm} {\tiny $\x^\natural$} & \hspace{-6mm}{\tiny $\x_{\text{BP}}$ solution} & \hspace{-6mm}{\tiny $\x_{\text{DBP} }$ solution}\\
 {\tiny relative errors:} & \hspace{-6mm}{\tiny $\frac{\|\x^\natural - \x_{\text{BP}}\|_2}{\| \x^\natural \|_2} = .200$} & \hspace{-6mm}{\tiny $\frac{\|\x^\natural - \x_{\text{DBP}}\|_2}{\| \x^\natural \|_2} = .067$}\\
\end{tabular}
\caption{\sl Example spike train recovery when $n=0.18p$. The DBP formulation (right) shrinks the competing sparse coefficients within the $\Delta$ intervals, resulting in a better reconstruction overall sampling regimes than BP (middle).}\label{fig: example recon}
\end{figure}

\vspace{-10pt}
\section{Conclusions}
We have provided a principled recipe for designing convex formulations that jointly express models of simplicity and structure for sparse recovery, that promotes clustering or dispersiveness. The main hallmark of our approach is its pithiness in generating the prevalent convex structured sparse formulations and in explaining their tightness. Our key idea relies on expressing sparsity structures via simple linear inequalities over the support of the unknown parameters and their corresponding latent group indicators. By recognizing the totally unimodularity of the underlying constraint matrices, we can tractably compute the biconjugation of the corresponding combinatorial simplicity objective subject to structure, and perform tractable recovery using standard optimization techniques.  

\vspace{-25pt}
~~
\subsubsection*{Acknowledgements}

This work was supported in part by the European Commission under Grant MIRG-268398, ERC Future Proof, SNF 200021- 132548, SNF 200021-146750 and SNF CRSII2-147633.

\appendix
\section{Numerical illustration of Sparse $G$-group cover's performance}

\begin{figure}
\centering
\vspace{-10pt}
\includegraphics[clip=true,trim=0cm 6cm 0cm 7cm, scale=.3]{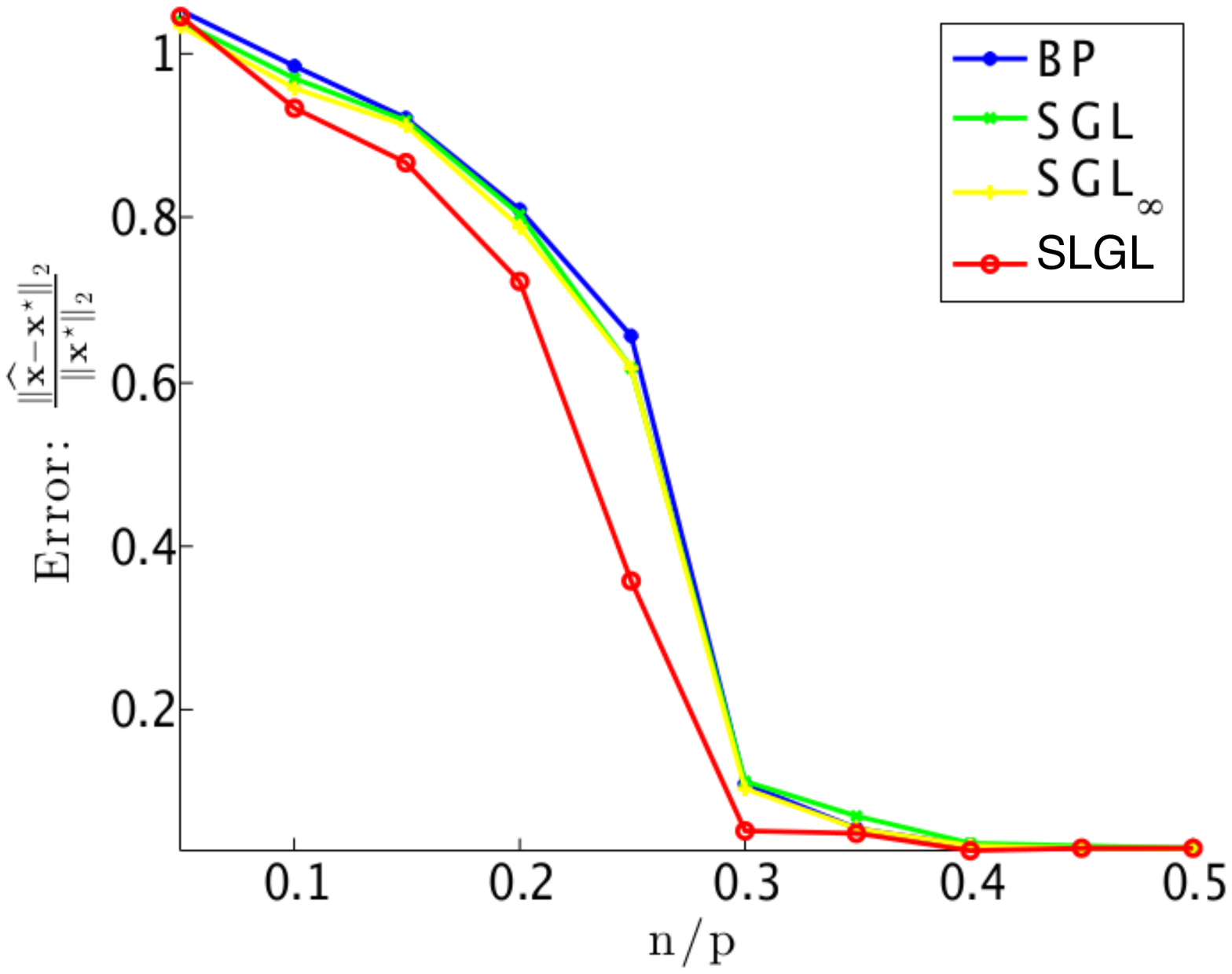}
\vspace{-10pt}
\caption{\label{fig:SLGLvsSGL} Recovery error of SLGL, SGL, and BP}
\end{figure}

In this section, we compare the performance of minimizing the TU relaxation $g_{\GG,G}^{\ast \ast}$ of the proposed Sparse $G$-group cover (c.f., Section \ref{sec:Sarse_grpCover}) in problem \eqref{eq: proto}, which we will call Sparse latent group lasso (SLGL), with Basis pursuit (BP) and Sparse group Lasso (SGL).
Recall the SGL criteria is $\small (1-\alpha)\sum_{\G \in \GG} \sqrt{|\G|} \| \x_\G\|_q + \alpha \| \x_\G \|_1$, with $q=2$ in \cite{simon2013sparse}. We compare also against $SGL_\infty$ where we set $q=\infty$, which is better suited for signals with equal valued non-zero coefficients. 
We generate a sparse signal $\x^\natural$ in dimensions $p=200$, covered by $G=5$ groups, randomly chosen from the $M=29$ groups. The groups generated are interval groups, of equal size of $10$ coefficients, and with an overlap of $3$ coefficients between each two consecutive groups. The true signal $\x^\natural$ has $3$ non-zero coefficients (all set to one) in each of its $5$ active groups (cf., Figure \ref{fig: example recon}). Note that these groups lead a TU group structure $\GG$, so the TU relaxation in this case is tight. We recover $\x^\natural$ from its compressive measurements $\y = \A \x^\natural + \w$, where the noise $\w$ is a random Gaussian vector of variance $\sigma=0.01$ and $\A$ is a random column normalized Gaussian matrix. We encode the data via $\| \y - \A \x\|_2 \le \|\w\|_2$ using the true $\ell_2$-norm of the noise. We produce the data randomly 10 times and report the averaged results. 

Figure \ref{fig:SLGLvsSGL} measures the relative recovery error with $\frac{\| \x^\natural  -  \hat{\x} \|_2}{\| \x^\natural \|_2 }$, as we vary the number of compressive measurements. Since the SLGL criteria uses the fact that $\x^\natural$ lies in the unit $\ell_\infty$-ball, we include this constraint in the all the other formulations for fairness. Since the true signal exhibit strong overall sparsity we use $\alpha=0.95$ in SGL as suggested in \cite{simon2013sparse} (we tried several values of $\alpha$, and this seemed to give the best results for SGL). We use an interior point method to obtain high accuracy solutions to each formulation. Figure \ref{fig:BPvsDBP} shows that SLGL outperforms the other criterias as we vary the number of measurements.

\begin{figure*}
\centering
\begin{tabular}{c c c c c} \hspace{-20pt}
\includegraphics[scale=.13]{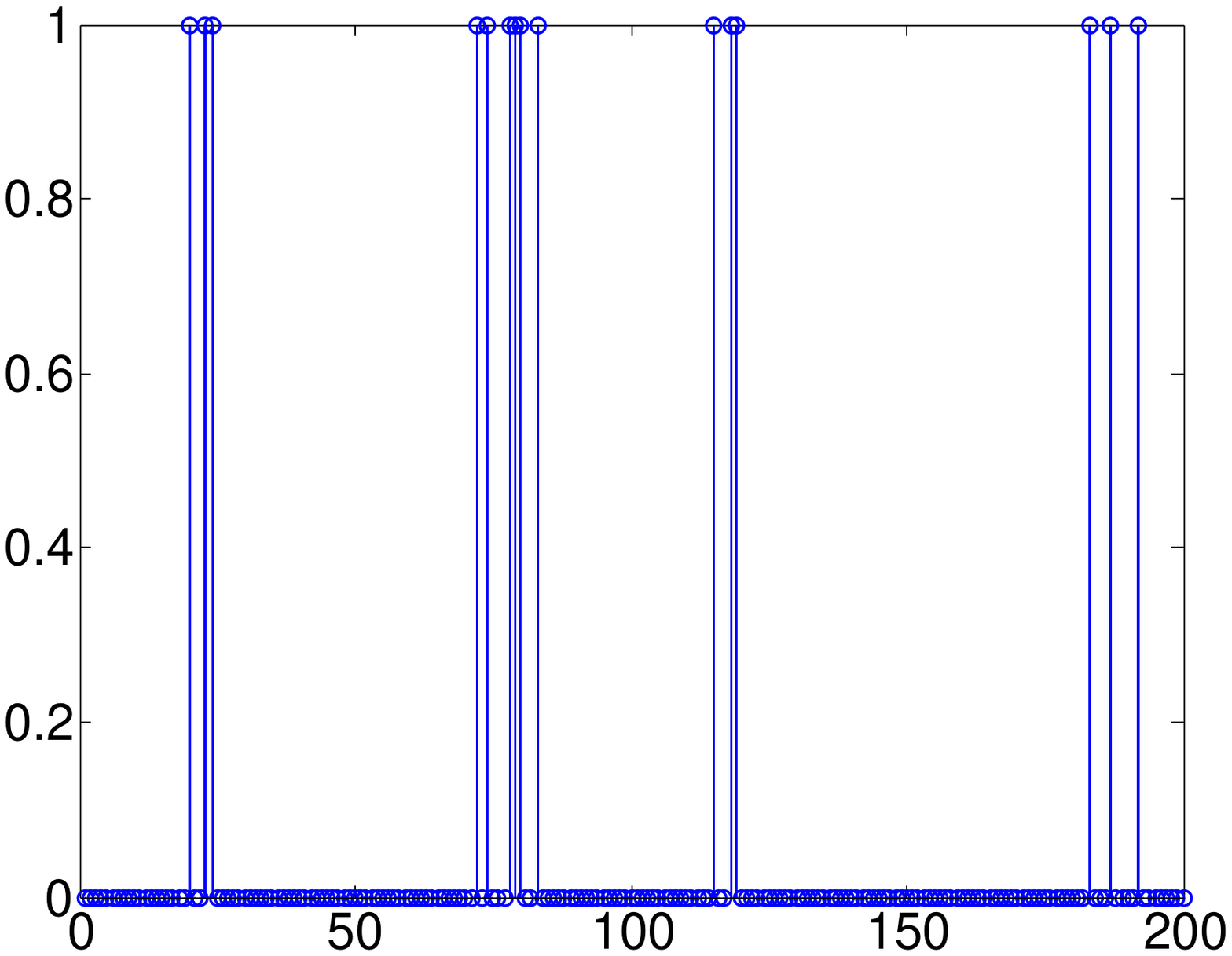} & \hspace{-10mm}
 \includegraphics[scale=.13]{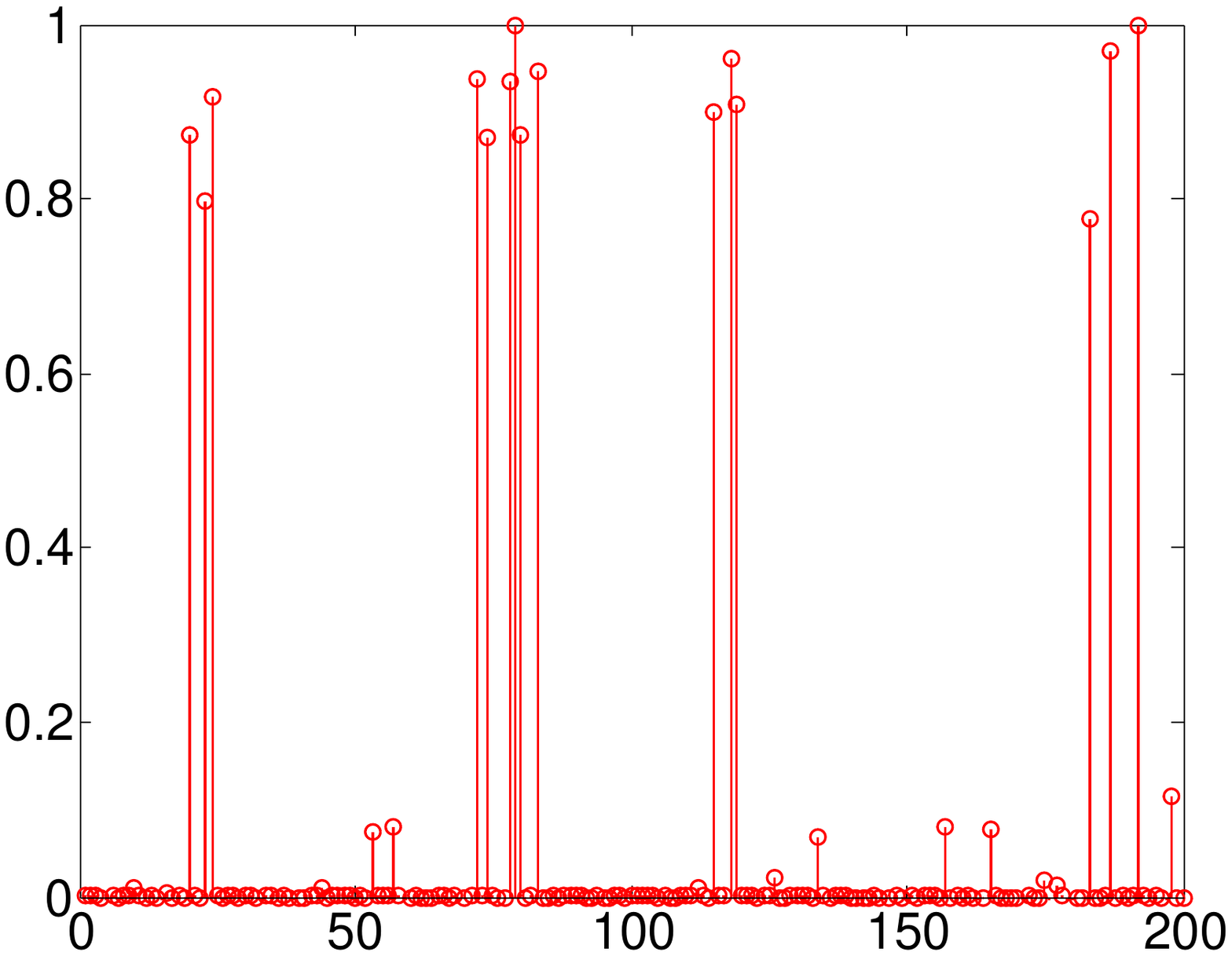} &\hspace{-10mm}
  \includegraphics[scale=.13]{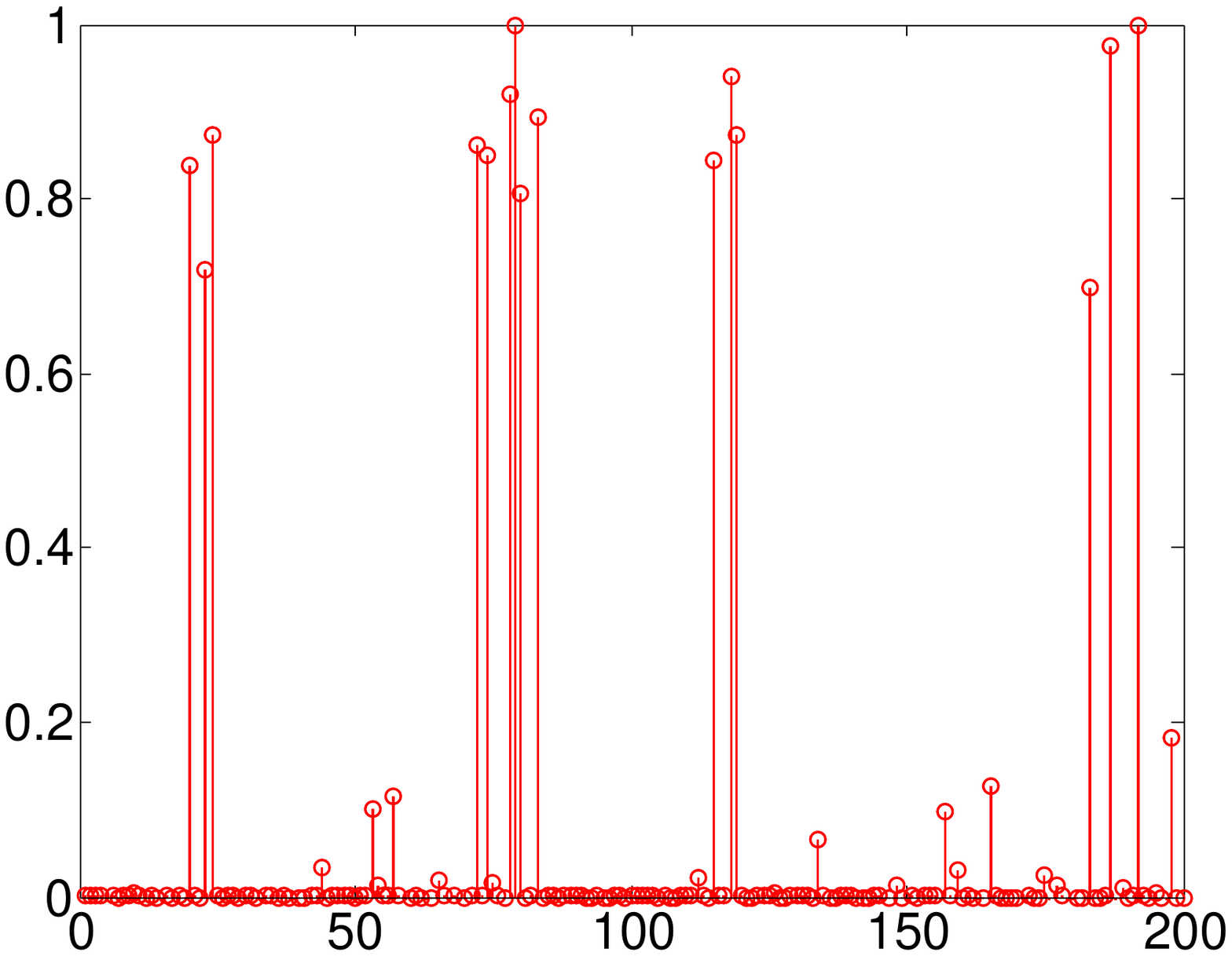} &\hspace{-10mm}
    \includegraphics[scale=.13]{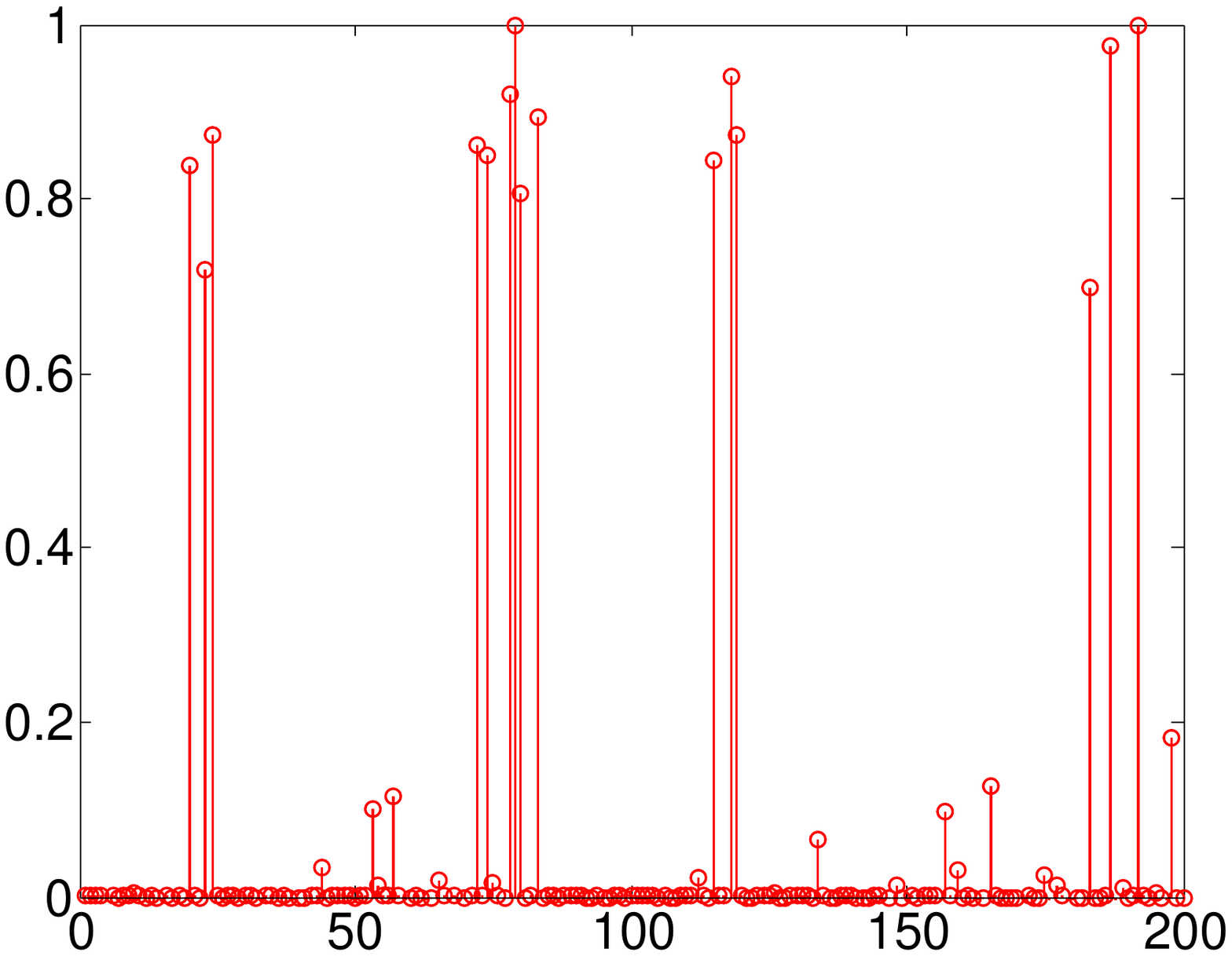} &\hspace{-10mm}
  \includegraphics[scale=.13]{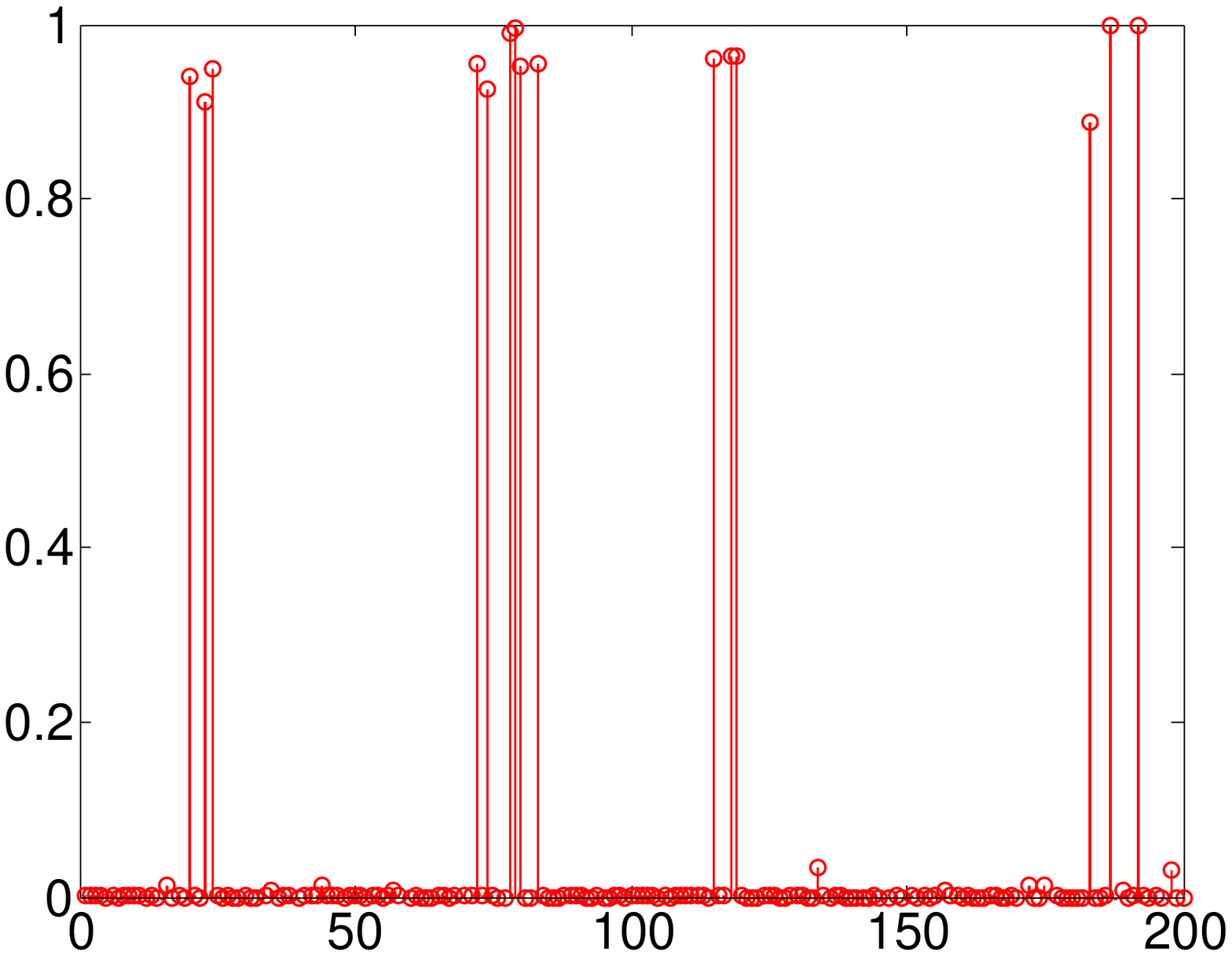}\\[-10mm]
  \hspace{-6mm} {\tiny $\x^\natural$} & \hspace{-6mm}{\tiny $\x_{\text{BP}}$ solution}& \hspace{-6mm}{\tiny $\x_{\text{SGL}}$ solution} & \hspace{-6mm}{\tiny $\x_{\text{SGL}_\infty}$ solution}& \hspace{-6mm}{\tiny $\x_{\text{SLGL} }$ solution}\\
  
\hspace{-5mm} {\tiny relative errors:} & \hspace{-5mm}{\tiny $\frac{\|\x^\natural - \x_{\text{BP}}\|_2}{\| \x^\natural \|_2} = .128$} & \hspace{-3mm}{\tiny $\frac{\|\x^\natural - \x_{\text{SGL}}\|_2}{\| \x^\natural \|_2} = .181$} &\hspace{-3mm}{\tiny $\frac{\|\x^\natural - \x_{\text{SGL}_\infty}\|_2}{\| \x^\natural \|_2} = .085$} & \hspace{-3mm}{\tiny $\frac{\|\x^\natural - \x_{\text{SLGL}}\|_2}{\| \x^\natural \|_2} = .058$}\\
\end{tabular}
\caption{\sl Recovery for $n=0.25p, s=15, p=200, G=5$ out of $M=29$ groups.}\label{fig: example recon}
\end{figure*}

\section{Proof of Proposition \ref{prop:conv_grpInter}}
\begin{prop}[Convexification]
The convex envelope of $g_{\GG, \cap}(\x)$ over the unit $\ell_\infty$-ball is
$$g_{\GG, \cap}^{\ast \ast}(\x) = \begin{cases}  \sum_{\G_i \in \GG} d_i \| \x_{\G_i} \|_\infty  &\text{if $\x \in [-1,1]^p$}\\
\infty &\text{otherwise}
\end{cases}$$
\end{prop}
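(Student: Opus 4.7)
My plan is to invoke Proposition \ref{eq:conv_TU} directly and then simplify the resulting linear program in closed form. Note that $g_{\GG,\cap}$ exactly fits the TU template of Definition \ref{def:TU_penalty} with constraint matrix $\Hb$ (already shown TU), linear weights $\db$ on $\omegabf$, zero weights on $\s$ (so $\e = 0$), and right-hand side $\cb = 0$. Hence the proposition gives
\[
g_{\GG,\cap}^{**}(\x) \;=\; \min_{\substack{\s\in[0,1]^p\\ \omegabf\in[0,1]^M}} \bigl\{\, \db^T\omegabf \;:\; \Hb\betabf \le 0,\; \s \ge |\x|\,\bigr\}
\]
for $\x\in[-1,1]^p$, and $\infty$ otherwise.

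Next I would unpack the constraint $\Hb\betabf\le 0$. By the construction of $\Hb$, this is equivalent to $s_j \le \omega_i$ for every pair $(i,j)$ with $j\in\G_i$, which in turn is equivalent to $\omega_i \ge \max_{j\in\G_i} s_j$ for every group $\G_i$. Since $\db\ge 0$, at the minimum we saturate each such constraint and set $\omega_i^\star = \max_{j\in\G_i} s_j$. The objective then becomes $\sum_{i} d_i \max_{j\in\G_i} s_j$, which is nondecreasing in each $s_j$. Subject to $s_j \ge |x_j|$, the minimum is achieved at $s_j^\star = |x_j|$, giving $\omega_i^\star = \max_{j\in\G_i}|x_j| = \|\x_{\G_i}\|_\infty$ and the objective value $\sum_{\G_i\in\GG} d_i\|\x_{\G_i}\|_\infty$.

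Finally I would check the box constraints. The requirement $s_j^\star = |x_j| \le 1$ and $\omega_i^\star = \|\x_{\G_i}\|_\infty \le 1$ are both implied by $\x\in[-1,1]^p$; outside this box the feasible set is empty and the biconjugate is $+\infty$, as already enforced by Proposition \ref{eq:conv_TU}. Assembling these steps yields the claimed closed form.

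The only subtlety is the direction of the elimination: one must argue that eliminating $\omegabf$ and $\s$ by pushing them down to their lower bounds is valid even though they are coupled through the constraint $s_j \le \omega_i$. This follows because the objective is monotone nondecreasing in $\omegabf$ (as $\db\ge 0$) and independent of $\s$, so the greedy assignment $s_j = |x_j|$, $\omega_i = \max_{j\in\G_i}|x_j|$ is simultaneously feasible and optimal. No deeper argument (LP duality, Sion's theorem, etc.) is needed beyond what Proposition \ref{eq:conv_TU} already provides.
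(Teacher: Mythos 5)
Your proposal is correct and follows essentially the same route as the paper's proof: invoke Proposition \ref{eq:conv_TU} to obtain the LP, then eliminate $\s$ by setting $\s^\star=|\x|$ and $\omegabf$ by setting $\omega_i^\star=\|\x_{\G_i}\|_\infty$. Your explicit monotonicity argument justifying the simultaneous elimination is a slightly more careful spelling-out of the step the paper states tersely, but it is the same idea.
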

\begin{proof}
Since $g_{\GG, \cap}(\x)$ is a TU-penalty, we can use Proposition \ref{eq:conv_TU} in the main text, to compute its convex envelope:
\begin{align*}
g_{\GG, \cap}^{\ast \ast}(\x)&= \min_{\s \in [0,1]^p, \omegabf \in [0,1]^M} \{ \db^T \omegabf : \Hb \betabf  \leq 0, |\x| \leq \s \} \\
&= \min_{ \omegabf \in [0,1]^M} \{ \db^T \omegabf : \Hb \colvec{ \omegabf \\ |\x| } \leq 0\} \\
&= \sum_{\G_i \in \GG} d_i \| \x_{\G_i} \|_\infty 
\tag{since ${w_i^\ast} = \| \x_\G \|_\infty$}
\end{align*}
for $\x \in [-1,1]^p$, $g_{\GG, \cap}^{\ast \ast}(\x)=\infty$ otherwise.
\end{proof}

\section{Proof of Proposition \ref{prop:conv_grpCover}}
\begin{prop}[Convexification]
When the group structure leads to a TU biadjacency matrix $ \Bbd$, the convex envelope of the group $\ell_0$-``norm'' over the unit $\ell_\infty$-ball is
\[\small g_{\GG,0}^{\ast \ast}(\x) = \begin{cases} \min_{\omegabf \in [0,1]^M} \{ \db^T \omegabf :  \Bbd \omegabf \geq |\x| \} &\text{if $\x \in [-1,1]^p$} \\
\infty &\text{ otherwise}
\end{cases}\]
\end{prop}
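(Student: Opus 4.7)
The plan is to recognize that $g_{\GG,0}$ fits exactly into the TU penalty template of Definition \ref{def:TU_penalty}, and then to invoke Proposition \ref{eq:conv_TU} and simplify. Specifically, I would rewrite the defining constraint $\Bbd \omegabf \geq \1_{\supp(\x)}$ as $[-\Bbd, \I_p]\,\betabf \leq \mathbf{0}$ with $\betabf = \colvec{\omegabf \\ \s}$, and identify the weight vectors as $\db$ on $\omegabf$ and $\e = \mathbf{0}$ on $\s$. Thus $g_{\GG,0}$ is a TU penalty provided the constraint matrix $\M := [-\Bbd,\I_p]$ is totally unimodular.

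Next I would justify that $\M$ is TU whenever $\Bbd$ is TU. This is a standard fact: negating columns and appending (plus/minus) an identity block preserves total unimodularity, since any square submatrix of $\M$ either lies entirely in the $-\Bbd$ block (hence is $\pm$ a submatrix of $\Bbd$) or contains at least one column from $\I_p$, on which Laplace expansion reduces the determinant to one of a strictly smaller submatrix of $\M$. Induction on size yields the claim; this is the one bookkeeping step that needs a sentence of justification.

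With the TU property in hand, Proposition \ref{eq:conv_TU} directly gives
\begin{equation*}
g_{\GG,0}^{\ast\ast}(\x) \;=\; \min_{\s \in [0,1]^p,\,\omegabf \in [0,1]^M} \bigl\{ \db^T \omegabf \;:\; \Bbd\,\omegabf \geq \s,\; \s \geq |\x| \bigr\}
\end{equation*}
when $\x \in [-1,1]^p$, and $+\infty$ otherwise. The final step is to eliminate $\s$: since $\db \geq \mathbf{0}$, the objective is independent of $\s$, and $\s$ appears only in the constraints $\s \geq |\x|$ and $\s \leq \Bbd\omegabf$ (coordinatewise), together with $\s \in [0,1]^p$. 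The constraint set in $(\omegabf,\s)$ is nonempty for some admissible $\s$ iff there exists $\omegabf \in [0,1]^M$ with $\Bbd\omegabf \geq |\x|$, in which case we may take $\s = |\x|$ (which automatically lies in $[0,1]^p$ because $\x \in [-1,1]^p$). This projection on $\omegabf$ yields exactly the stated expression
\begin{equation*}
g_{\GG,0}^{\ast\ast}(\x) = \min_{\omegabf \in [0,1]^M}\bigl\{ \db^T\omegabf \;:\; \Bbd\omegabf \geq |\x|\bigr\}, \qquad \x\in[-1,1]^p.
\end{equation*}
The only nontrivial obstacle is the TU-preservation argument for $\M$; everything else is mechanical application of Proposition \ref{eq:conv_TU} and a monotonicity argument to eliminate the auxiliary support variable $\s$.
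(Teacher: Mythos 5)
Your proposal is correct and follows essentially the same route as the paper's proof: identify $g_{\GG,0}$ as a TU penalty with $\M=[-\Bbd,\I_p]$ and $\cb=\mathbf{0}$, apply Proposition \ref{eq:conv_TU}, and eliminate $\s$ by setting $\s=|\x|$. The only cosmetic difference is that you prove the TU-preservation step for $[-\Bbd,\I_p]$ directly by Laplace expansion, whereas the paper simply cites the standard result from \cite{nemhauser1999integer}.
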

\begin{proof}
Note that $g_{\GG,0}(\x)$ can be written in the form given in Definition \ref{def:TU_penalty} with $\M = [-\Bbd, \I_p]$ and $\cb = 0$. Thus, when $\Bbd$ is TU, so is $\M$ \cite[Proposition 2.1]{nemhauser1999integer}, and thus we can use Proposition \ref{eq:conv_TU}  in the main text, to compute its convex envelope:
\begin{align*}
 g_{\GG,0}^{\ast \ast}(\x) &= \min_{\scriptsize \begin{array}{c}
\s \in [0,1]^{p} \\  \omegabf \in [0,1]^M
\end{array} }    \{  \db^T\omegabf : \Bbd \omegabf \geq \s, |\x| \leq \s \}  \\
&=\min_{ \omegabf \in [0,1]^M }  \{  \db^T\omegabf : \Bbd\omegabf \geq |\x| \}  
\end{align*}
for $\x \in [-1,1]^p$, $ g_{\GG,0}^{\ast \ast}(\x)=\infty$ otherwise.
\vspace{-10pt}
\end{proof}

\section{Proof of Proposition \ref{prop:SGL_notTU}}

\begin{prop}
Given any group structure $\GG$, $g_{\GG, s}(\x)$ is not a TU penalty.
\end{prop}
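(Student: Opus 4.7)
My plan is to cast $g_{\GG,s}$ into the TU-penalty template of Definition~\ref{def:TU_penalty} by linearizing its bilinear objective, and then to exhibit a square submatrix of the resulting constraint matrix whose determinant equals $\pm 2$. The objective
\[
\sum_{i=1}^M \omega_i \|\x_{\G_i}\|_0 \;=\; \sum_{i=1}^M \sum_{j \in \G_i} \omega_i s_j
\]
is bilinear in $(\omegabf,\s)$ and so does not literally fit the linear-objective requirement of Definition~\ref{def:TU_penalty}. The canonical remedy---mirroring the trick used in the proof of Proposition~\ref{prop:conv_disp_pair}---is to introduce, for each pair $(i,j)$ with $j \in \G_i$, a fresh binary variable $z_{ij}$ standing for $\omega_i s_j$, enforced by the McCormick inequalities $z_{ij} \leq \omega_i$, $z_{ij} \leq s_j$, $z_{ij} \geq \omega_i + s_j - 1$, $z_{ij} \geq 0$. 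The objective rewrites as the linear $\sum_{(i,j):\, j \in \G_i} z_{ij}$, and the augmented constraint matrix $\M'$ consists of the original rows of $\M$ (padded with zeros in the new $z$-columns) together with the McCormick rows.

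To finish, I would pick any group $\G_i \in \GG$ with $|\G_i| \geq 2$---the all-singletons case is degenerate since then the ``sparsity within groups'' clause is vacuous and $g_{\GG,s}$ collapses to a weighted $\ell_0$-norm---and any two variables $j_1, j_2 \in \G_i$. Both choices $\M = \Hb$ and $\M = [-\Bbd,\I_p]$ place the rows $s_{j_k} - \omega_i \leq 0$ in the original system (restricting to the $\G_i$-block in the second case). Restricting $\M'$ to the columns indexed by $(\omega_i, s_{j_1}, s_{j_2}, z_{ij_1})$ and to the four rows $s_{j_1} - \omega_i \leq 0$, $s_{j_2} - \omega_i \leq 0$, $\omega_i + s_{j_1} - z_{ij_1} \leq 1$, $\omega_i + s_{j_2} - z_{ij_2} \leq 1$ produces
\[
\begin{bmatrix} -1 & 1 & 0 & 0 \\ -1 & 0 & 1 & 0 \\ 1 & 1 & 0 & -1 \\ 1 & 0 & 1 & 0 \end{bmatrix},
\]
whose determinant equals $+2$ by expansion along the last column: only the third row has a nonzero there, leaving a $3\times 3$ with a single nonzero in its second column, which unfolds to $\det\!\bigl[\begin{smallmatrix} -1 & 1 \\ 1 & 1 \end{smallmatrix}\bigr]=-2$, with an overall sign of $+$ after collecting cofactors. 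Hence $\M'$ violates total unimodularity.

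The main subtlety I anticipate is the scope of the statement: the submatrix above rules out TU only for the canonical McCormick reformulation, and this appears to be the intended reading---it is consistent with the subsequent Propositions~\ref{prop:conv_grpLasso_sparse}~and~\ref{prop:conv_grpLasso_sparse2}, which apply Proposition~\ref{eq:conv_TU} merely to produce a valid lower bound rather than an exact envelope (see Remark~\ref{rmk: lower_bound}). If one wanted to strengthen the conclusion to rule out every possible linearization, I would invoke Proposition~\ref{eq:conv_TU} in contrapositive form: compute $g_{\GG,s}^{\ast\ast}$ directly from its Fenchel definition on a small test instance (e.g.\ two overlapping groups of size two) and verify that the resulting function cannot be written as the optimum of any LP of the shape allowed by Proposition~\ref{eq:conv_TU}, thereby closing off all TU representations at once.
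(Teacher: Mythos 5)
Your proposal is correct and follows essentially the same route as the paper: linearize the bilinear objective $\sum_{i}\omega_i\|\x_{\G_i}\|_0$ via the $z_{ij}\geq \omega_i+s_j-1$ trick and exhibit a submatrix of the augmented constraint matrix with determinant $\pm 2$. The paper gets away with a leaner witness---the $2\times 2$ submatrix on columns $(\omega_i, s_j)$ formed by the row $s_j-\omega_i\leq 0$ and the row $\omega_i+s_j-z_{ij}\leq 1$, namely $\bigl[\begin{smallmatrix}-1 & 1\\ 1 & 1\end{smallmatrix}\bigr]$ with determinant $-2$, which also covers singleton groups---but your $4\times 4$ determinant computation checks out, and your closing remark about the statement only ruling out the canonical linearization (rather than every TU representation) is a fair reading of what the paper actually proves.
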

\begin{proof}
Let $G(\GG \cup \Pt, \Et)$ denote the bipartite graph representation of the group structure $\GG$. We use the linearization trick employed in \cite{kaminski2008quadratic} to reduce $g_{\GG, s}(\x)$ to an integer program. For conciseness, we consider $g_{\GG, s}(\s)$ only for binary vectors $\s \in \{ 0,1 \}^p$, since $g_{\GG, s}(\x)=g_{\GG, s}(\1_{\supp(\x)})$.
\begin{align*} \small
g_{\GG, s}(\s) &=  \min_{\omegabf \in \{0,1\}^M} \{ \sum_{i=1}^M \omega_i \| \s_{\G_i}\|_0: \M \betabf \leq 0\} \\
&= \min_{\omegabf \in \{0,1\}^M} \{ \sum_{(i,j) \in \Et} \omega_i s_j: \M \betabf \leq 0\} \\
&= \min_{\scriptsize \colcst{\omegabf \in \{0,1\}^M \\ \z \in \{0,1\}^{|\Et|}}} \{ \sum_{(i,j) \in \Et} z_{ij}: \M \betabf \leq 0, \E\betabf \leq \z + \mathds{1}\} 
\end{align*}

Recall that $\E$ is the edge-node incidence matrix of $G(\GG \cup \Pt, \Et)$. The constraint $\E\betabf \leq \z -\mathds{1}$ corresponds to $z_{ij} \geq \omega_i + s_j -1, \forall (i,j) \in \Et$. Although both matrices $\M$ and $\E$ are TU, their concatenation $\widetilde{\M}=\colvec{\M \\ \E}$ is not TU. To see this, let us first focus on the case where $\M=[-\Bbd, \I_p]$.

Given any coefficient $i \in \Pt$ covered by at least one group $\G_i$, we denote the corresponding edge in the bipartite graph by $e_j=(i,M+i)$, which corresponds to the $j^{th}$ row of $\E$. This translates into having the entries $\widetilde{\M}_{i,i}=-1,\widetilde{\M}_{i,M+i}=1, \widetilde{\M}_{p+j,i}=1,$ and $\widetilde{\M}_{p+j,M+i}=1 $. The determinant of the submatrix resulting from these entries is $-2$, which contradicts the definition of TU (cf., Def.\ \ref{def:TU_penalty}). It follows then that $\widetilde{\M}$ is TU iff $\GG=\{ \emptyset\}$.

A similar argument holds for $\M=\Hb$. 
\end{proof}

\section{Proof of Proposition \ref{prop:conv_grpLasso_sparse}}
\begin{prop}[Convexification] 
The convex surrogate via Proposition \ref{eq:conv_TU} in the main text, for $g_{\GG, s}(\x)$ with $\M= \Hb$ (i.e., the group intersection model with sparse groups) is given by 
$$\Omega_{\GG, s}(\x) := \sum_{(i,j) \in \Et} (\| \x_{\G_i} \|_\infty + |x_j| -1)_+ $$
for $\x \in [-1,1]^p$, and, $\Omega_{\GG, s}(\x) := \infty$ otherwise. Note that $\Omega_{\GG, s}(\x)\le g_{\GG, s}^{**}(\x)$.
\end{prop}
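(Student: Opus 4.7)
The plan is to follow the linearization trick already used in the proof of Proposition~\ref{prop:SGL_notTU}, then apply the LP relaxation prescribed by Proposition~\ref{eq:conv_TU}, and finally eliminate the auxiliary variables in closed form. Concretely, for $\s=\mathds{1}_{\supp(\x)}$ we rewrite
\[
g_{\GG,s}(\x)=\min_{\substack{\omegabf\in\{0,1\}^M\\ \z\in\{0,1\}^{|\Et|}}}\Bigl\{\sum_{(i,j)\in\Et} z_{ij}\;:\;\Hb\betabf\le 0,\; z_{ij}\ge \omega_i+s_j-1\Bigr\},
\]
which is of the form covered by Definition~\ref{def:TU_penalty}. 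Although the combined constraint matrix fails to be TU (Proposition~\ref{prop:SGL_notTU}), Remark~\ref{rmk: lower_bound} still guarantees that the LP relaxation is a valid convex lower bound, so the surrogate produced by Proposition~\ref{eq:conv_TU} is
\[
\Omega_{\GG,s}(\x)=\min_{\substack{\s\in[0,1]^p,\,\omegabf\in[0,1]^M\\ \z\in[0,1]^{|\Et|}}}\Bigl\{\sum_{(i,j)\in\Et} z_{ij}\;:\;\Hb\betabf\le 0,\; z_{ij}\ge \omega_i+s_j-1,\;\s\ge|\x|\Bigr\}
\]
for $\x\in[-1,1]^p$ (and $+\infty$ otherwise, since $\s\ge|\x|$ with $\s\le\mathds{1}$ forces $\|\x\|_\infty\le 1$).

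Next I would peel off the variables one at a time. First, $z_{ij}$ appears only in its own positivity constraint and in the (increasing) objective, so $z_{ij}^{\star}=(\omega_i+s_j-1)_+$ (note the upper bound $z_{ij}\le 1$ is automatically inactive because $\omega_i,s_j\in[0,1]$). Substituting gives
\[
\Omega_{\GG,s}(\x)=\min_{\s\in[0,1]^p,\,\omegabf\in[0,1]^M}\Bigl\{\sum_{(i,j)\in\Et}(\omega_i+s_j-1)_+\;:\;s_j\le\omega_i\ \forall(i,j)\in\Et,\;\s\ge|\x|\Bigr\},
\]
using that $\Hb\betabf\le 0$ encodes exactly $s_j\le\omega_i$ for every edge $(i,j)\in\Et$ (i.e.\ $j\in\G_i$).

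Then I would optimize jointly over $\s$ and $\omegabf$ using monotonicity: since the objective $(\cdot)_+$ is componentwise non-decreasing in each $s_j$ and each $\omega_i$, each variable should be pushed to its lower bound. For $\s$, the lower bound is $s_j=|x_j|$. For $\omegabf$, the constraints $\omega_i\ge s_j=|x_j|$ for all $j\in\G_i$ force $\omega_i\ge\|\x_{\G_i}\|_\infty$, and this is achievable inside $[0,1]$ precisely because $\x\in[-1,1]^p$. Setting $s_j^{\star}=|x_j|$ and $\omega_i^{\star}=\|\x_{\G_i}\|_\infty$ yields
\[
\Omega_{\GG,s}(\x)=\sum_{(i,j)\in\Et}\bigl(\|\x_{\G_i}\|_\infty+|x_j|-1\bigr)_+,
\]
as claimed. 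The inequality $\Omega_{\GG,s}\le g_{\GG,s}^{**}$ then follows from Remark~\ref{rmk: lower_bound}.

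The only subtle point, which I expect to be the main obstacle to state cleanly rather than hard per se, is the joint minimization: one must check that the lower bounds for $\s$ and $\omegabf$ are simultaneously achievable and that the feasible set is non-empty exactly on $[-1,1]^p$; this is where the assumption $\x\in[-1,1]^p$ is used to guarantee that the forced value $\omega_i^\star=\|\x_{\G_i}\|_\infty$ still lies in $[0,1]$. Everything else reduces to the monotonicity of the positive part and a direct substitution.
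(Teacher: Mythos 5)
Your proposal is correct and follows essentially the same route as the paper: linearize the bilinear objective with auxiliary variables $z_{ij}\ge \omega_i+s_j-1$, take the LP relaxation from Proposition~\ref{eq:conv_TU} (a lower bound by Remark~\ref{rmk: lower_bound} since the constraint matrix is not TU), and eliminate $\z$, $\s$, $\omegabf$ by monotonicity to reach $z_{ij}^\ast=(\omega_i^\ast+s_j^\ast-1)_+$, $s_j^\ast=|x_j|$, $\omega_i^\ast=\|\x_{\G_i}\|_\infty$. The paper merely states these optimal values without the explicit elimination argument you spell out.
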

\begin{proof} For $\x \in [-1,1]^p$,
\small
\begin{align*}
\Omega_{\GG, s}(\x) &= \min_{\scriptsize \colcst{\omegabf \in [0,1]^M \\ \z \in [0,1]^{|\Et|}}} \{\sum_{(i,j) \in \Et}   z_{ij}: \Hb \betabf \leq 0, \E\betabf \leq \z + \mathds{1}, |\x| \leq \s\} \\
&= \sum_{(i,j) \in \Et} (\| \x_{\G_i} \|_\infty + |x_j| -1)_+ 
\end{align*}
since ${\omega}_i^\ast=\| \x_{\G_i} \|_\infty, {\s}^\ast=|\x|,$ and ${z}_{ij}^\ast = ({\omega}_i^\ast + {s}_j^\ast -1)_+$.
\end{proof}

\section{Proof of Proposition \ref{prop:conv_grpLasso_sparse2}}

\begin{prop}[Convexification]
The convex surrogate given by Proposition \ref{eq:conv_TU}  in the main text, for $g_{\GG, s}(\x)$ with $\M = [-\Bbd, \I_p]$ (i.e., the group $\ell_0$-``norm'' with sparse groups) is given by
$$\Omega_{\GG, s}(\x) :=  \min_{\omegabf \in [0,1]^M} \{\sum_{(i,j) \in \Et} (\omega_i + |x_j| -1)_+ : \Bbd \omegabf \geq |\x| \}$$
for $\x \in [-1,1]^p$, $\Omega_{\GG, s}(\x)=\infty$ otherwise.
\end{prop}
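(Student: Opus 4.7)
The plan is to mirror the argument used for Proposition \ref{prop:conv_grpLasso_sparse}, adapted to the cover constraint $\Bbd\omegabf \geq \s$ instead of the intersection constraint $\Hb\betabf \leq 0$. First I would apply the linearization trick from \cite{kaminski2008quadratic} (already used in the proof of Proposition \ref{prop:SGL_notTU}) to rewrite the bilinear cost as a linear one. Writing $\s=\mathds{1}_{\supp(\x)}$ and noting that for binary variables $\omega_i s_j = z_{ij}$ can be enforced by $z_{ij} \geq \omega_i + s_j - 1$, $z_{ij}\in\{0,1\}$, I would express
\[
g_{\GG, s}(\x) = \min_{\scriptsize \colcst{\omegabf \in \{0,1\}^M \\ \z \in \{0,1\}^{|\Et|}}} \Big\{ \sum_{(i,j) \in \Et} z_{ij} : \Bbd\omegabf \geq \s,\ \E\betabf \leq \z + \mathds{1},\ \mathds{1}_{\supp(\x)} = \s \Big\}.
\]

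Next I would invoke Proposition \ref{eq:conv_TU} formally on this lifted integer program, treating $(\omegabf,\s,\z)$ as the variables and relaxing the binary constraints to $[0,1]$, while replacing the support equation by $|\x| \leq \s$ in the standard way. As noted just after Definition \ref{def:TU_penalty} and reiterated in Proposition \ref{prop:SGL_notTU}, the augmented constraint matrix need not be TU here, so what results is a valid convex lower bound (hence a convex surrogate) rather than the tight envelope, which is exactly what the proposition claims.

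Finally I would eliminate the auxiliary variables $\s$ and $\z$ in closed form. For any fixed $\omegabf \in [0,1]^M$, the objective depends on $\z$ only through the lower bounds $z_{ij} \geq \omega_i + s_j - 1$ (and $z_{ij}\geq 0$), so the optimal choice is $z_{ij}^\ast = (\omega_i + s_j - 1)_+$, giving a cost that is componentwise nondecreasing in $\s$. The remaining constraints on $\s$ are $|\x| \leq \s$ and $\Bbd \omegabf \geq \s$; since increasing $s_j$ can only increase the objective, the optimum is $\s^\ast = |\x|$, which is feasible precisely when $\Bbd \omegabf \geq |\x|$. Substituting yields
\[
\Omega_{\GG, s}(\x) = \min_{\omegabf \in [0,1]^M} \Big\{ \sum_{(i,j) \in \Et} (\omega_i + |x_j| - 1)_+ : \Bbd \omegabf \geq |\x| \Big\}
\]
for $\x\in[-1,1]^p$, and $+\infty$ otherwise, as stated.

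The only delicate point is the elimination of $\s$: one must verify both that the objective is monotone in $\s$ (driving $\s$ down to $|\x|$) and that the upper bound $\s\leq\Bbd\omegabf$ translates into a feasibility condition on $\omegabf$ rather than leaving a residual constraint on $\s$. Both follow from the fact that $(\cdot)_+$ is a nondecreasing function of its argument, so this step is routine once the linearization is in place.
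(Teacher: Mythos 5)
Your proposal is correct and follows essentially the same route as the paper's proof: linearize the bilinear cost via $z_{ij}\geq\omega_i+s_j-1$, apply Proposition \ref{eq:conv_TU} formally to the lifted (non-TU) program to get a convex surrogate, and eliminate $\z$ and $\s$ by setting $z_{ij}^\ast=(\omega_i+s_j^\ast-1)_+$ and $\s^\ast=|\x|$. Your added justification of the monotonicity in $\s$ and the reduction of the sandwich constraint $|\x|\leq\s\leq\Bbd\omegabf$ to the feasibility condition $\Bbd\omegabf\geq|\x|$ is a correct elaboration of the step the paper states without detail.
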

\begin{proof} For $\x \in [-1,1]^p$,
\small
\begin{align*}
\Omega_{\GG, s}(\x) &= \hspace{-10pt} \min_{\scriptsize \colcst{\omegabf \in [0,1]^M \\ \z \in [0,1]^{|\Et|}}} \hspace{-5pt} \{  \sum_{(i,j) \in \Et}  \hspace{-5pt} z_{ij}: \Bbd \omegabf \geq \s, \E\betabf \leq \z + \mathds{1}, |\x| \leq \s\} \\
&= \min_{\omegabf \in [0,1]^M} \{\sum_{(i,j) \in \Et} (\omega_i + |x_j| -1)_+ : \Bbd \omegabf \geq |\x| \}
\end{align*}
since ${\s^\ast}=|\x|,$ and ${z}_{ij}^\ast = ({\omega}_i + {s}_j^\ast -1)_+$.
\end{proof}

\section{Proof of Proposition \ref{prop:conv_tree}}

\begin{prop}(Convexification) 
The convexification of the tree $\ell_0$-``norm'' over the unit $\ell_\infty$-ball is given by $$g^{\ast \ast}_{T,0}(\x)= \begin{cases}
\sum_{\G \in \GG_H} \| x_\G \|_\infty &\text{if $\x \in [-1,1]^p$} \\
 \infty &\text{otherwise}
\end{cases}$$
\end{prop}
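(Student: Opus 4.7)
The plan is to reduce the statement to a direct application of Proposition~1 on TU-penalty convexification, and then to solve the resulting linear program in closed form by a leaves-to-root argument on the tree.

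First I would cast $g_{T,0}$ explicitly in the TU-penalty template of Definition~\ref{def:TU_penalty}: take no latent variables ($M=0$), weights $\e=\mathds{1}_p$, right-hand side $\cb=\mathbf{0}$, and constraint matrix $-\Tb$, so that
$$g_{T,0}(\x) = \min_{\s \in \{0,1\}^p}\{\mathds{1}^T \s : -\Tb \s \leq 0,\ \mathds{1}_{\supp(\x)}=\s\}.$$
The paper already flags that each row of $\Tb$ has exactly one $+1$ and one $-1$, so $-\Tb$ is TU by the standard criterion (\cite[Proposition 2.6]{nemhauser1999integer}). Proposition~1 then immediately gives, for $\x \in [-1,1]^p$,
$$g^{\ast\ast}_{T,0}(\x) \;=\; \min_{\s \in [0,1]^p}\{\mathds{1}^T \s : \Tb \s \geq 0,\ |\x| \leq \s\},$$
and $+\infty$ outside the box. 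So the first two steps are essentially free, being a verification that Definition~\ref{def:TU_penalty} and Proposition~1 apply.

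The substantive step is to evaluate this LP in closed form. For each node $i \in \Pt$, let $\G_i \in \GG_H$ denote the subtree rooted at $i$ (the node and all its descendants). I would argue by induction on subtree depth, starting from the leaves: a leaf $\ell$ is constrained only by $s_\ell \geq |x_\ell|$, so $s_\ell \geq \|\x_{\G_\ell}\|_\infty$; for an internal node $i$, the hierarchy constraint $s_i \geq s_j$ for every child $j$ combined with the induction hypothesis $s_j \geq \|\x_{\G_j}\|_\infty$ and the direct constraint $s_i \geq |x_i|$ forces $s_i \geq \|\x_{\G_i}\|_\infty$. Summing over $i$ produces the lower bound $\mathds{1}^T \s \geq \sum_{\G \in \GG_H} \|\x_\G\|_\infty$. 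Conversely, the assignment $s_i^\ast := \|\x_{\G_i}\|_\infty$ is feasible: the box constraint $s_i^\ast \leq 1$ uses $\x \in [-1,1]^p$, the lower bound $s_i^\ast \geq |x_i|$ holds because $i \in \G_i$, and monotonicity along every edge follows from $\G_j \subseteq \G_i$ whenever $j$ is a child of $i$. This attains the bound and closes the argument.

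I do not expect a genuine obstacle here: the only place where something could go wrong is the upper-bound $s_i^\ast \leq 1$ (which is exactly what the restriction to the unit $\ell_\infty$-ball buys us) and the interplay between the hierarchy and the box constraints, but both are immediate from $\x \in [-1,1]^p$ and $\G_j \subseteq \G_i$. The bulk of the proof is therefore a clean LP calculation, with the totally unimodular structure of $\Tb$ doing all the heavy lifting via Proposition~1.
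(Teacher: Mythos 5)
Your proposal is correct and follows essentially the same route as the paper: cast the penalty in the TU template, invoke the TU convexification proposition to obtain the LP $\min_{\s\in[0,1]^p}\{\mathds{1}^T\s : \Tb\s\ge 0,\ |\x|\le\s\}$, and solve it by a leaves-to-root argument showing the optimum is $s_i^\ast=\|\x_{\G_i}\|_\infty$. Your write-up is slightly more careful than the paper's (you make the induction explicit and verify feasibility of $s_i^\ast$, including the box constraint), but there is no substantive difference in approach.
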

\begin{proof}
Since this is a TU-penalty we can use Proposition \ref{eq:conv_TU}  in the main text, to compute its convex envelope:
\begin{align*}
g^{\ast \ast}_{T,0}(\x) &=  \min_{\s \in [ 0,1]^p} \{ \mathds{1}^T\s : \Tb \s \geq 0, |\x |\leq \s\}  \\
&\stackrel{\star}{=} \sum_{\G \in \GG_H} \| x_{\G} \|_\infty
\end{align*}
for $\x \in [-1,1]^p$, $\infty$ otherwise, and where the groups $\G \in \GG_{H}$ are defined as each node and all its descendants.
$(\star)$ holds since any feasible $\s$ should satisfy $\s \geq {|\x|}$ and $s_{\text{parent}} \geq s_{\text{child}}$, so starting from the leaves, each leaf satisfies $s_i \geq |x_i|$, and since we are looking to minimize the sum of $s_i$'s, we simply set $s_i=x_i$. For a node $i$ with two children $j,k$ as leaves, it will satisfy $s_i \geq |x_i|, |s_j|, |s_k|$, thus $s_i = \max\{ |x_i|, |x_j|, |x_k|\}$, and so on. Thus, $s_i= \max_{ \{k \text{ is a descendant of $i$ or $i$ itself} \}}{ |x_k|}$
\end{proof}

\section{Proof of Proposition \ref{prop:conv_disp}}
\begin{prop}[Convexification] 
The convex envelope of $g_{\Db}(\x) $ over the unit $\ell_\infty$-ball when $\Bbd^T$ is a TU matrix is given by
$$  g_{\Db}^{\ast \ast}(\x) = \begin{cases} \max_{\G \in \GG} \| \x_\G\|_1 &\text{if  $\x \in [-1,1]^p, \Bbd^T |\x|\leq \mathds{1}$ } \\
\infty &\text{otherwise} \end{cases}$$
\end{prop}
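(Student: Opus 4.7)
The plan is to recognize $g_{\Db}$ as a TU penalty in the sense of Definition~\ref{def:TU_penalty} and then apply Proposition~\ref{eq:conv_TU}. Writing $\betabf = (\omega, \s^T)^T$ with $\omegabf = \omega$ and identifying $\db = 1$, $\e = 0$, $\cb = 0$, the defining constraint $\Bbd^T \1_{\supp(\x)} \leq \omega \1$ rearranges to $\M\betabf \leq 0$ with $\M = [\,-\1_l,\; \Bbd^T\,]$, where $l$ is the number of groups. First I would argue that $\M$ is TU: this uses the hypothesis that $\Bbd^T$ is TU together with the classical preservation results of \cite{nemhauser1999integer} (Proposition~2.1) — the only delicate point is the appended $-\1$ column, which can be handled by noting that at optimality $\omega \in \{0,1\}$ and, for each fixed integer $\omega$, the feasible polytope $\{\s \in [0,1]^p : \Bbd^T \s \leq \omega\,\1\}$ is integral by TU of $\Bbd^T$, yielding the same integrality conclusion needed in Proposition~\ref{eq:conv_TU}.

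Once the TU structure is established, Proposition~\ref{eq:conv_TU} gives
\[
g_{\Db}^{\ast\ast}(\x) \;=\; \min_{\omega \in [0,1],\, \s \in [0,1]^p} \bigl\{\, \omega \;:\; \Bbd^T \s \leq \omega\,\1,\; \s \geq |\x|\,\bigr\}
\]
for $\x \in [-1,1]^p$, and $+\infty$ otherwise. Next I would solve this LP by sequential elimination. For fixed feasible $\s$, the minimum over $\omega$ is $\omega^\ast(\s) = \max_i (\Bbd^T \s)_i = \max_{\G \in \GG} \sum_{j \in \G} s_j$, which must satisfy $\omega^\ast(\s) \leq 1$ to be feasible. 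Substituting back, the problem reduces to minimizing $\max_{\G} \sum_{j \in \G} s_j$ over $\s \in [0,1]^p$ with $\s \geq |\x|$. Since this objective is coordinatewise non-decreasing in $\s$ and the only binding lower bound is $\s \geq |\x|$, the optimum is $\s^\ast = |\x|$.

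Plugging $\s^\ast = |\x|$ back yields $\omega^\ast = \max_{\G \in \GG} \sum_{j \in \G} |x_j| = \max_{\G \in \GG} \|\x_\G\|_1$, with the feasibility requirement $\Bbd^T |\x| \leq \1$ (and $\x \in [-1,1]^p$), matching exactly the claim. The principal obstacle is the TU verification for the augmented matrix $[-\1_l,\, \Bbd^T]$, since adding an arbitrary $\pm 1$ column to a TU matrix does not generically preserve TU; the cleanest way around this, as sketched above, is to observe that $\omega \in \{0,1\}$ at any binary optimum so the penalty can be rewritten as an LP over the polytope $\{\s : \Bbd^T \s \leq \1\}$, whose integrality follows directly from TU of $\Bbd^T$ via Lemma~\ref{prop:integral_polyhedra}. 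All remaining steps are straightforward LP manipulations.
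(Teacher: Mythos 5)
Your main computation is the same as the paper's: recognize $g_{\Db}$ as an instance of Definition~\ref{def:TU_penalty}, apply Proposition~\ref{eq:conv_TU} to get the LP $\min\{\omega:\Bbd^T\s\le\omega\1,\ \s\ge|\x|,\ (\omega,\s)\in[0,1]^{1+p}\}$, set $\s^\ast=|\x|$ by monotonicity, and read off $\omega^\ast=\|\Bbd^T|\x|\|_\infty=\max_{\G}\|\x_\G\|_1$ together with the feasibility condition $\Bbd^T|\x|\le\1$. Where you genuinely depart from the paper is in refusing to take the TU-ness of the augmented matrix $\M=[-\1_M,\ \Bbd^T]$ on faith, and your suspicion is justified: the paper simply asserts that $g_{\Db}$ ``can be written as the following TU penalty'' whenever $\Bbd$ is TU, but this is false as stated. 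For the acyclic (hence TU, by Remark~\ref{rmk: Loopless_grps}) group structure $\GG=\{\{1\},\{2\},\{1,2,3\},\{3\}\}$,
\[
\Bbd^T=\begin{pmatrix}1&0&0\\0&1&0\\1&1&1\\0&0&1\end{pmatrix},\qquad
\det\begin{pmatrix}-1&1&0&0\\-1&0&1&0\\-1&1&1&1\\-1&0&0&1\end{pmatrix}=2,
\]
so $[-\1,\Bbd^T]$ is not TU. That said, your repair is stated too loosely to close the gap on its own. What assumption A2 of Lemma~\ref{prop:Main_Property} requires is that the \emph{relaxed} conjugate $\max\{|\y|^T\s-\omega:\Bbd^T\s\le\omega\1,\ (\omega,\s)\in[0,1]^{1+p}\}$ be attained at an integral point; asserting ``$\omega\in\{0,1\}$ at optimality'' is precisely what must be proved, since the value function $h(\omega)=\max_{\s\in[0,1]^p}\{|\y|^T\s:\Bbd^T\s\le\omega\1\}$ is concave in $\omega$ and $h(\omega)-\omega$ could a priori peak at a fractional $\omega$. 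The missing observation is that the constraint $\Bbd^T\s\le\omega\1$ is positively homogeneous and (when every variable lies in some group) renders the box constraint $\s\le\1$ redundant for $\omega\le1$, so $h(\omega)=\omega\,h(1)$ is \emph{linear} on $[0,1]$; hence the maximum of $h(\omega)-\omega$ is attained at $\omega\in\{0,1\}$, and for such integral $\omega$ the TU of $\Bbd^T$ alone yields an integral optimal $\s$. With that one extra line your argument is complete and in fact supplies a justification that the paper omits.
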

\begin{proof}  Since this is a TU penalty we can use Proposition \ref{eq:conv_TU} in the main text, to compute its convex envelope:
\begin{align*}
 g_{\Db}^{\ast \ast}(\x) &= \small \begin{cases} \min_{\scriptsize\colcst{\omega \in [0,1] \\ \s \in [0,1]^p } } \{ \omega: \Bbd^T \s \leq \omega \mathds{1}, |\x| \leq \s\} &\text{\small if  $\x$  feasible } \\
\infty &\text{\small otherwise} 
\end{cases}\\
&= \begin{cases}
\| \Bbd^T |\x| \|_\infty &\text{if  $\x \in [-1,1]^p, \Bbd^T |\x|\leq \mathds{1}$ } \\
\infty &\text{otherwise} 
\end{cases}
\end{align*}
\vspace{-10pt}
\end{proof}

\section{Proof of Proposition \ref{prop:conv_disp_pair}}
\begin{prop}[Convexification] 
The convex envelope of $g_{\G, \D}(\x)$ over the unit $\ell_\infty$-ball is
$$ g_{\G, \D}^{\ast \ast}(\x) = \begin{cases} \sum_{(i,j) \in \Et} (|x_i| + |x_j| -1)_+  &\text{if  $\x \in [-1,1]^p$ } \\
\infty &\text{otherwise} \end{cases}$$
\end{prop}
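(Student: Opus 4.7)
My plan is to follow the linearization approach already foreshadowed in the discussion of the pairwise dispersive penalty, reducing the quadratic combinatorial penalty to a linear TU penalty so that Proposition \ref{eq:conv_TU} applies directly.

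First, I would rewrite $g_{\G,\D}(\x)$ as an integer program by introducing edge indicator variables. With $\s = \mathds{1}_{\supp(\x)}$, note that for $s_i,s_j \in \{0,1\}$ the product $s_i s_j$ equals the smallest binary $z_{ij}$ satisfying $z_{ij} \geq s_i + s_j - 1$ (and $z_{ij}\geq 0$). Summing over edges,
\[
g_{\G,\D}(\x) \;=\; \min_{\z \in \{0,1\}^{|\Et|}} \Bigl\{ \sum_{(i,j)\in\Et} z_{ij} \;:\; \E_G \s \leq \z + \mathds{1},\; \s = \mathds{1}_{\supp(\x)} \Bigr\}.
\]
This fits the template of Definition \ref{def:TU_penalty} with $\omegabf = \z$, $\db = \mathds{1}$, $\e = 0$, and structural matrix $\widetilde{\M} = [\,\E_G \;\; -\I_{|\Et|}\,]$ acting on $(\s,\z)$ with right-hand side $\cb = \mathds{1}$.

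Next I would verify total unimodularity of $\widetilde{\M}$. The hypothesis assumes $\E_G$ is TU (which is the case exactly when $G$ is bipartite). Since appending the columns of $-\I_{|\Et|}$ to a TU matrix preserves the TU property \cite[Proposition 2.1]{nemhauser1999integer}, $\widetilde{\M}$ is TU and the right-hand side $\mathds{1}$ is integral. Proposition \ref{eq:conv_TU} then yields
\[
g_{\G,\D}^{\ast\ast}(\x) \;=\; \min_{\substack{\s \in [0,1]^p \\ \z \in [0,1]^{|\Et|}}} \Bigl\{ \sum_{(i,j)\in\Et} z_{ij} \;:\; \E_G \s \leq \z + \mathds{1},\; |\x|\leq \s \Bigr\}
\]
for $\x \in [-1,1]^p$, and $+\infty$ otherwise.

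Finally, I would close the problem by inspection of the inner LP. The objective is monotone increasing in $\z$ and the constraints on $\s$ only appear through $\s \geq |\x|$, so the optima are $\s^\ast = |\x|$ and $z_{ij}^\ast = (|x_i|+|x_j|-1)_+$ clipped to $[0,1]$. Since $|x_i|,|x_j|\le 1$ implies $(|x_i|+|x_j|-1)_+ \in [0,1]$, the clipping is inactive and we recover $\sum_{(i,j)\in\Et}(|x_i|+|x_j|-1)_+$ as claimed. The main obstacle I anticipate is simply making the TU verification of $\widetilde{\M}$ rigorous under the bipartite hypothesis on $G$; once that is in hand, the rest is mechanical application of Proposition \ref{eq:conv_TU} and a trivial pointwise minimization in $\z$.
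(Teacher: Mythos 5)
Your proof is correct and follows essentially the same route as the paper: the linearization $z_{ij}\geq s_i+s_j-1$ to reduce the quadratic penalty to a TU penalty, followed by Proposition \ref{eq:conv_TU} and pointwise minimization giving $\s^\ast=|\x|$ and $z_{ij}^\ast=(|x_i|+|x_j|-1)_+$. You are slightly more careful than the paper in two places --- you write the linearized constraint with the correct sign, $\E_G\s\leq \z+\mathds{1}$ (the paper's proof has a sign typo, $\z-\mathds{1}$), and you explicitly justify total unimodularity of the augmented matrix $[\,\E_G\;\;{-\I_{|\Et|}}\,]$ --- but these are refinements of the same argument, not a different one.
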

\begin{proof}
 We use the linearization trick employed in \cite{kaminski2008quadratic} to reduce $g_{\G, \D}(\x)$ to a TU penalty. Let $\s= \mathds{1}_{\supp(\x)}$,
\begin{align*}
g_{\G, \D}(\x) &= \sum_{(i,j) \in \Et} s_i s_j \\
&= \min_{\z \in \{ 0,1\}^{|\Et|}} \{ \sum_{(i,j) \in \Et} z_{ij}: z_{ij} \geq s_i + s_j -1 \} \\ 
&= \min_{\z \in \{ 0,1\}^{|\Et|}} \{ \sum_{(i,j) \in \Et} z_{ij}: \E_G \s \leq \z - \mathds{1} \}
\end{align*} 
Now we can apply Proposition \ref{eq:conv_TU} in the main text, to compute the convex envelope:
\small
\begin{align*}
g_{\G, \D}^{\ast \ast}(\x)&= \min_{\s \in [0,1]^p, \z \in [ 0,1]^{|\Et|}} \{ \sum_{(i,j) \in \Et} z_{ij}: \E_\G \s \leq \z - \mathds{1} , |\x| \leq \s\}\\
&=  \sum_{(i,j) \in \Et} (|x_i| + |x_j| -1)_+ \tag{${\s}^\ast=\x, {z}_{ij}^\ast = ({s}_i^\ast + {s}_j^\ast -1)_+$}
\end{align*} 
for $\x \in [-1,1]^p$, $g_{\G, \D}^{\ast \ast}(\x)=\infty$ otherwise.
\end{proof}

\bibliographystyle{plain}
\bibliography{biblio,add_bibtex}

\end{document}